\newcommand\blfootnote[1]{%
  \begingroup%
  \renewcommand\thefootnote{}\footnote{#1}%
  \addtocounter{footnote}{-1}%
  \endgroup
}
\newcommand{\MDP}{\mathfrak{M}}
\newcommand{\M}{\mathcal{M}}
\renewcommand{\S}{\mathcal{S}}
\newcommand{\A}{\mathcal{A}}
\renewcommand{\P}{\mathbb{P}}
\newcommand{\bphi}{\bm{\phi}}
\newcommand{\bmu}{\bm{\mu}}
\newcommand{\bnu}{\bm{\nu}}
\newcommand{\bomega}{\bm{\omega}}
\newcommand{\bxi}{\bm{\xi}}
\newcommand{\bdelta}{\bm{\delta}}
\newcommand{\Vnom}{V}
\newcommand{\Vrob}{\widetilde{V}}
\newcommand{\Vrlx}{\widehat{V}}
\newcommand{\Qnom}{Q}
\newcommand{\Qrob}{\widetilde{Q}}
\newcommand{\Qrlx}{\widehat{Q}}
\newcommand{\Bnom}{\mathcal{B}}
\newcommand{\Brob}{\widetilde{\mathcal{B}}}
\newcommand{\Brlx}{\widehat{\mathcal{B}}}
\renewcommand{\r}{R}
\newcommand{\pirob}{\widetilde{\pi}}
\newcommand{\pirlx}{\widehat{\pi}}
\renewcommand{\o}{\mathrm{o}}
\let\raw@overline\overline
\renewcommand{\overline}[1]{\mathrlap{\raw@overline{\phantom{#1}}}#1}
\let\raw@underline\underline
\renewcommand{\underline}[1]{\mathrlap{\raw@underline{\phantom{#1\!}}}#1}
\newcommand{\papertitle}{Efficient Duple Perturbation Robustness in Low-rank MDPs}
\begin{document}
  \setlength{\abovedisplayskip}{3pt}
  \setlength{\abovedisplayshortskip}{2pt}
  \setlength{\belowdisplayskip}{3pt}
  \setlength{\belowdisplayshortskip}{2pt}
  \setlength{\jot}{1pt}  
  \setlength{\floatsep}{1ex}
  \setlength{\textfloatsep}{1ex}

  \title{\bfseries \papertitle}

\author[1]{Yang Hu\thanks{yanghu@g.harvard.edu}}
\author[1]{Haitong Ma\thanks{haitongma@g.harvard.edu}}
\author[2,3]{Bo Dai\thanks{bodai@cc.gatech.edu}}
\author[1]{Na Li\thanks{nali@seas.harvard.edu}}
\affil[1]{Harvard University}
\affil[2]{Google DeepMind}
\affil[3]{Georgia Institute of Technology}

\date{}

\maketitle
\blfootnote{\itshape This paper is supported by NSF AI institute: 2112085, NSF CNS: 2003111, NSF ECCS: 2328241.}
\vspace{-30pt}

\begin{abstract}
  The pursuit of robustness has recently been a popular topic in reinforcement learning~(RL) research, yet the existing methods generally suffer from efficiency issues that obstruct their real-world implementation. 
  In this paper, we introduce \textit{dual perturbation} robustness, i.e. perturbation on both the feature and factor vectors for low-rank Markov decision processes~(MDPs), via a novel characterization of $(\xi,\eta)$-ambiguity sets.  
  The novel robust MDP formulation is compatible with the function representation view, and therefore, is naturally applicable to practical RL problems with large or even continuous state-action spaces. Meanwhile, it also gives rise to a provably efficient and practical algorithm with theoretical convergence rate guarantee. 
  Examples are designed to justify the new robustness concept, and algorithmic efficiency is supported by both theoretical bounds and numerical simulations.
\end{abstract}
  \section{Introduction}

The recent years have witnessed the rapid development of reinforcement learning (RL), a discipline that has shown its power in various areas, ranging from gaming \citep{mnih2015human, silver2017mastering, vinyals2019grandmaster}, robotics \citep{levine2016end, dalal2021accelerating}, 
to large language models \citep{ziegler2019fine, ouyang2022training}.

These successes of RL are achieved with huge amounts of data, yet for applications like robotics and autonomous driving, the real-world data collection is expensive and dangerous. As a result, RL agents are usually trained (and even tested) in simulated environments.
However, simulators are typically not accurate for revealing the intrinsic uncertainty and ambiguity in the dynamics of real-world environments. Therefore, RL agents trained in simulated environments generally suffer from a sim-to-real performance degradation \citep{peng2018sim, salvato2021crossing}, which is the cost for overlooking the distributional shift from simulation to real-world environments.
 
In order to mitigate the performance degradation, robust MDPs have been proposed, in which the value of a policy is evaluated with respect to the worst possible realization in some ambiguity set of the dynamics. 
The modern formulation of robust MDPs shall be attributed to \citet{iyengar2005robust, el2005robust}, with a whole series of theoretical efforts aiming at solving the robust MDP \citep{goyal2023robust, ho2018fast, ho2021partial, zhou2021finite, yang2022toward, blanchet2023double}. Nevertheless, most of these theoretical works consider \emph{tabular MDPs}, and the computational complexity and/or sample complexity are, at best, polynomial in the size of state and action spaces. Therefore, these algorithms are bound to suffer from the \textit{curse of dimensionality}, which makes these algorithms impossible to deploy in environments with large or even infinite state-action spaces, and thus significantly restricts its modelling ability of real-world environments.

Towards alleviation of computational burden, people have been working to identify structures that reduce the intrinsic dimensionality of the problem. Among these attempts, MDPs with low-rank structure~\cite{jin2019LSVI-UCB, yang2020reinforcement} turn out to be a representative model family with practical implications~\citep{ren2022free,ren2023stochastic}. Specifically, in low-rank MDPs we assume that the transition probability kernels can be represented by an inner product of the \textit{feature maps} $\bphi(s, a)\in \R^d$ and the \textit{factors} $\bmu(s')\in \R^d$ (i.e., $\mathbb{P}(s'|s, a) = \angl*{\bphi(s, a), \bmu(s')}$; similar for the reward functions).  
Along this line of work, provably efficient algorithms have been developed such that the sample complexity is polynomial in $d$, the dimension of the feature space~\citep{agarwal2020flambe,uehara2021representation}.

Despite these promising results for low-rank MDPs, this line of work in general only deals with the non-robust case, shedding very limited light on the robust RL side. There have also been some initial efforts to introduce the idea of function approximation into the realm of robust MDPs~\citep{tamar2014scaling, badrinath2021robust}, yet they largely ignore the connection between the representation of value functions and the ambiguity set of MDPs. 
\citet{ma2023distributionally,goyal2023robust} do exploit such structures via the construction of ambiguity sets through $\bmu(s')$, but they assume fixed feature $\bphi(s, a)$, and consequently, restricts the scope of their robustness concept. 

To the best of our knowledge, none of the existing results deal with the uncertainty of the feature maps $\bphi(s, a)$ and the factors $\bmu(s')$ together. However, in real-world systems features may be random \cite{ren2023stochastic}, kernel-induced \cite{ren2022free} or generated by latent variables \cite{ren2022latent}, and are thus subject to uncertainty as well. Therefore, it is crucial to consider \textit{dual perturbation} on both feature and factor vectors, which motivates the following research question: 
\begin{adjustwidth}{2em}{2em}
\begin{center}
  \itshape How to design a \textbf{provable} and \textbf{efficient} algorithm for generic robustness in MDPs with both feature $\bphi(s,a)$ and factor $\bmu(s')$ uncertainty?
\end{center}
\end{adjustwidth}
By ``provable" we mean the suboptimality and convergence rate can be rigorously characterized, and by ``efficient" we mean the algorithm is computationally efficient so that it can be practically implemented.  
In this paper, we provide an {\bf affirmative} answer to this question. More concretely,
\begin{itemize}
  \item We propose a novel robustness concept via \textit{$(\xi,\eta)$-rectangular} ambiguity sets that is compatible with low-rank MDPs, in which the optimal policy displays certain level of robust behavior. We also present a few key properties that relate it to existing robustness concepts.
  \item We design the novel R\tsup{2}PG algorithm to solve the proposed robust low-rank MDPs with $(\xi,\eta)$-rectangularity. The algorithm is tractable as the optimization involved can be reduced to an SDP, and is thus potentially scalable to work with large state-action spaces.
  \item We provide a convergence guarantee for our R\tsup{2}PG algorithm that ensures provably efficient convergence to the optimal policy with bounded suboptimality.
\end{itemize}

\subsection{Related Work}

\paragraph{Robust MDPs and robust RL.} The study of robust MDPs can be traced back to the 1960s \citep{silver1963markovian, satia1973markovian}, when people first realized the importance of planning with uncertain dynamics and rewards, yet failed to specify the construction of ambiguity sets. The modern formulation of robust MDPs shall be attributed to \citet{iyengar2005robust, el2005robust}, which formally define the structure of robust MDPs in terms of $(s,a)$-rectangular ambiguity sets, and give a comprehensive exposition of their properties. The scope of ambiguity sets is later generalized to generic parameterized ambiguity sets by \citet{wiesemann2013robust}, which also reveals the theoretical hardness of solving robust MDPs. Later, there is a full line of work focusing on improving the computational efficiency of robust planning with rectangular ambiguity sets \citep{ho2018fast, ho2021partial}, and yet another line focusing on the learning of robust policies using offline data \citep{zhou2021finite, yang2022toward, blanchet2023double}. Most of these results, however, are only applicable under standard rectangularity.

On the empirical side, with the rapid development of deep learning, online learning of robust policies have become a topic of interest. Different attempts to tackle this problem are proposed in the past decade \citep{rajeswaran2016epopt, pattanaik2017robust, pinto2017robust, zhang2020robust}, yet these methods are largely practice-oriented, for which theoretical understandings are very much limited.

\paragraph{MDPs with linear/low-rank representations.} With an aim of reducing the computational time needed for reinforcement learning, people have realized the importance of representation (a.k.a. function approximation). People tend to leverage the low-rank structures in MDPs to design algorithms that reduce the dependency on the size of state-action spaces, and thus lead to more scalable performance.

Such effort leads to linear MDPs, which assumes linearly representable transition probabilities and rewards through feature maps \citep{jin2019LSVI-UCB, yang2020reinforcement, ren2022spectral}. A similar line of work considers low-rank MDPs, which basically are linear MDPs with unknown features, so that the learning of features should also be handled online \citep{agarwal2020flambe, uehara2021representation}.

\paragraph{Robust RL with function approximation.} Despite the direct relevance of the area to this paper, there has been very limited progress in this direction.

An earlier line of work assumes linear function approximation for $V$-functions. \citet{tamar2014scaling} consider infinite-horizon robust MDPs and assume that value functions lie in a \textit{known} feature space, for which they use a projected value iteration to robustly evaluate any given policy, and further use policy iteration to update the policy towards optimum. However, the feature space itself is assumed to be free from uncertainty. A follow-up paper \citep{badrinath2021robust} settles the problem of online model-free robust RL with linear approximation of $V$ by a least-squares policy iteration algorithm. These early results generally ignore the connection between representation and dynamics, and hence requires strong assumptions on the features that are difficult to justify and evaluate in practice. 

In more recent papers, the linear representation of rewards and transition probabilities is considered for robust MDPs. \citet{goyal2023robust, ma2023distributionally} consider a special family of soft state-aggregate MDPs, for which they assume the transition probability and rewards to lie in a \emph{known} feature space, while the factors of transition probabilities lie in a KL-constrained ambiguity set. Ambiguity sets beyond standard rectangularity bearing certain kind of ``low-rankness'' are proposed alongside, which they call factorizable $d$-rectangularity. A modified least-squares value iteration algorithm is designed to solve the offline learning problem. Such ambiguity set definition introduces some level robustness, but still, the perturbation is only with respect to factors with the feature maps fixed. 

As far as we are concerned, our paper is the first to consider uncertainties in both the feature maps and the factors in linear/low-rank MDPs.
  \section{Preliminaries}

In this section, we introduce some basic notations and definitions in reinforcement learning and representation.

\paragraph{Notations.} Let $\norm{\cdot}$ denote the Euclidean 2-norm, $\angl{\cdot, \cdot}$ denote the standard inner product, and $\norm{\cdot}_{\infty}$ denote the $\infty$-norm. Let $a:b$ denote an index running from $a$ to $b$ (both sides inclusive). Write $[n]$ for the set $\set{1,2,\ldots,n}$. For any set $S$, let $\Delta(S)$ denote the probability simplex over $S$. For a matrix $M$, write $M \succeq 0$ if $M$ is positive semi-definite.

\paragraph{Markov Decision Processes (MDPs).} We consider a \textit{finite-horizon MDP}, which is described by a tuple $\MDP = (H, \S, \A, \brac{\P_h}, \brac{r_h}, \rho)$. Here $H$ is the \textit{horizon} or the length of each episode, $\S$ is the \textit{state space}, and $\A$ is the \textit{action space}; at step $h \in [H]$, $\P_h: \S \times \A \to \Delta(\S)$ is the \textit{transition probability kernel}, while $r_h: \S \times \A \to [0,1]$ is the \textit{reward function}; $\rho \in \Delta(\S)$ is the initial state distribution. A (potentially non-stationary) \textit{policy} $\pi = (\pi_1, \ldots, \pi_H)$ is composed of stage policies $\pi_h: \S \to \Delta(\A)$, which determines a distribution over actions for each observed state at time step $h \in [H]$.

For an MDP with transition probability kernel $\P$, given policy $\pi$, let $\mathbb{E}_{\pi, \P}$ denote the expectation over a trajectory with prescribed initial condition, which evolves by $a_{\tau} \sim \pi_{\tau}(\cdot | s_{\tau}), s_{\tau+1} \sim \P_{\tau}(\cdot | s_{\tau}, a_{\tau})$ (the domain of $\tau$ shall be inferred from context). Define $\mathrm{Pr}_{\pi,\P}$ in a similar way.

For any given policy $\pi$, the standard $V$- and $Q$-functions starting from step $h \in [H]$ are defined as
\begin{align}\label{eq:2-nominal_value}
  V_h^{\pi}(s) &:=\E[\pi,\P]{\textstyle\sum_{\tau=h}^{H} r_{\tau}(s_{\tau}, a_{\tau}) \;\middle|\; s_h = s}, \\
  Q_h^{\pi}(s, a) &:= \E[\pi,\P]{\textstyle\sum_{\tau=h}^{H} r_{\tau}(s_{\tau}, a_{\tau}) \;\middle|\; s_h = s,~ a_h = a}. \nonumber
\end{align}
For simplicity, we abuse the above notation a bit and write $V^{\pi}_1(\rho) := \E[s_1 \sim \rho]{V^{\pi}_1(s)}$. Further, we define the operator $[\P_h V](s,a) := \E[s' \sim \P_h(\cdot | s,a)]{V(s')}$. The (nominal) \textit{Bellman eqution} with respect to policy $\pi$ can be written as
\begin{equation*}
  Q^{\pi}_h(s,a) = \left[r_h + \P_{h+1} V^{\pi}_{h+1}\right](s,a)=: [\Bnom_h V^{\pi}_{h+1}](s,a)
\end{equation*}
where we define the Bellman update as an operator $\Bnom_h$.\footnote{Note that here the Bellman operator denotes the update from $V_{h+1}$ to $Q_h$, which is the same as in \cite{ma2023distributionally}, but may be different from some other literature.}

Given a policy $\pi$, the \textit{state occupancy measure} is defined by
\begin{equation}
  \rho^{\pi}_h(s) := \Prob[\pi, \P]{s_h = s \mid s_1 \sim \rho},
\end{equation}
and the \textit{state-action occupancy measure} is defined by
\begin{equation}
  d^{\pi}_h(s,a) := \Prob[\pi, \P]{s_h = s,~ a_h = a \mid s_1 \sim \rho}.
\end{equation}
Note that we always have $d^{\pi}_h(s,a) = \rho^{\pi}_h(s) \pi(a | s)$.

\paragraph{Low-rank MDPs.} An MDP $\M$ is said to have a low-rank representation, if there exists a \textit{feature map} $\bphi_h: \S \times \A \to \R^d$ and two \textit{factors} $\bmu_h: \S \to \R^d$, $\bnu_h \in \R^d$ for each step $h \in [H]$, such that for any $s, s' \in \S$ and $a \in \A$,
\begin{subequations}
\begin{align}
  \P_h(s' | s,a) &= \angl{\bphi_h(s,a), \bmu_h(s')},~ \forall h \in [H]\\
  r_h(s,a) &= \angl{\bphi_h(s,a), \bnu_h},~ \forall h \in [H].
\end{align}
\end{subequations}
We will denote the linear MDP by $\MDP(\bphi_h, \bmu_h, \bnu_h)$ when $H$, $\S$, $\A$ and $\rho$ are clear from context. Note that feature maps are allowed to be different across steps.

It is well-known that, in low-rank MDPs, $Q^{\pi}_h$ is also linearly-representable by the same feature map as
\begin{equation}
  Q^{\pi}_h(s,a)
  = [r_h(s,a) + \P_h V^{\pi}_{h+1}](s,a) \\
  = \angl[\Big]{\bphi_h(s,a), \underbrace{\textstyle \bnu_h + \sum_{s'} V^{\pi}_{h+1}(s') \bmu_h(s')}_{\bomega^{\pi}_h}},
\end{equation}
where $\bomega^{\pi}_h$ is the \textit{factor} for $Q$-function.

The following assumption is standard for low-rank MDPs in literature, which we will also follow by convention.
\begin{assumption}[bounded norms of features and factors]\label{assm:2-low-rank_MDP_bound}
  $\norm{\bphi_h(s,a)} \leq 1$, $\norm{\bnu_h} \leq \sqrt{d}$, $\norm{\sum_{s} V(s) \bmu_h(s)} \leq \sqrt{d}$ for any $s \in \S$, $a \in \A$ and $V: \S \to [0,H]$.
\end{assumption}



  \section{Robust Low-Rank MDP with Dual Perturbation and \titlemath{(\bxi, \bm{\eta})}-Rectangularity}

In this section, we will introduce a new robustness concept by exploiting the low-rank structure in MDPs, which avoids the $L_\infty$ perturbation, and thus enables the design of efficient algorithms for large state-action spaces.

\subsection{The Challenges of Robustness in Low-Rank MDPs}

As has been discussed above, the demand for robustness naturally arises when there is a sim-to-real gap. The traditional notion of robust MDPs imposes perturbations on $\P_h$ and $r_h$. For robust low-rank MDPs with \textit{dual perturbation}, it is reasonable to adapt the definition of uncertainty sets to be with respect to the feature maps and factors. Specifically, one may consider the family $\M$ consisting of low-rank MDPs $\MDP(\bphi_{1:H}, \bmu_{1:H}, \bnu_{1:H})$ centered around the nominal model $\MDP(\bphi_{1:H}^{\circ}, \bmu_{1:H}^{\circ}, \bnu_{1:H}^{\circ})$, such that for any $h \in [H]$,
\begin{subequations}\label{eq:3-standard_robust_MDP}
\begin{align}
  \norm{\bphi_h(s, a) - \bphi_h^{\circ}(s, a)} &\leq \r_{\phi,h},~ \forall (s,a) \in \S \times \A \\
  \norm{\bmu_h(s') - \bmu_h^{\circ}(s')} &\leq \r_{\mu,h},~ \forall s' \in \S \\
  \norm{\bnu_h - \bnu_h^{\circ}} &\leq \r_{\nu,h}, \\
  \bphi_h(s,a)^{\top} \bmu_h(\cdot) &\in \Delta(\A), \label{eq:3-standard_robust_MDP:4}
\end{align}
\end{subequations}
It is worth noting that \eqref{eq:3-standard_robust_MDP} requires the features and factors to form a valid linear MDP, so that \eqref{eq:3-standard_robust_MDP:4} is necessary since some perturbations in the ambiguity set will break the normalization condition, i.e., $\sum_{s'} \angl{\bphi_h(s, a), \bmu_h(s')} \neq 1$. 
We call $\M = \bigotimes_{h \in [H]} \M_h$ a \textit{$(\phi,\mu,\nu)$-rectangular ambiguity set}, where $\M_h$ is the stage ambiguity set at step $h$.

The objective of solving a robust low-rank MDP is to find its optimal robust policy $\pirob^*$, such that under all possible stage-wise perturbations in $\M$, the worst-case cumulative reward is maximized by $\pirob^*$. Formally, we define the (standard) robust value function of a given policy $\pi$ to be
\begin{equation}\label{eq:3-standard_robust_value}
  \Vrob^{\pi}_h(s) := \min\limits_{\begin{subarray}{c} (\bphi_{1:H}, \bmu_{1:H}, \bnu_{1:H}) \in \M,\\ \mathbb{P}_h = \angl{\bphi_h,\bmu_h}, r_h = \angl{\bphi_h,\bnu_h} \end{subarray}} \E[\pi, \mathbb{P}_h]{\sum_{\tau=h}^{H} r_{\tau}(s_{\tau},a_{\tau}) \;\middle|\; s_h = s},
\end{equation}
which represents the worst-case performance for a policy $\pi$ under all possible perturbations in the uncertainty set $\M$. The robust planning problem can then be formulated as
\begin{equation}
  \pirob^* := \arg\max_{\pi} \Vrob^{\pi}_1(\rho).
\end{equation}

To solve the robust low-rank MDP, a natural approach is to iteratively improve the policy with respect to its robust value, for which a robust policy evaluation scheme is needed. It is well-known that robust policy evaluation can be performed via robust dynamic programming \cite{iyengar2005robust}, which recursively updates the robust reward-to-go by Bellman equation. Formally, we define a robust Bellman update operator $\Brob_h$, such that given a robust $V$-function of policy $\pi$ at step $h+1$, i.e. $\Vrob^{\pi}_{h+1}$, the robust Bellman update is
\begin{align}\label{eq:3-robust_update}
  [\Brob_h \Vrob^{\pi}_{h+1}](s,a)
  &:= \min_{(\bphi_h,\bmu_h,\bnu_h) \in \M_h} \prn*{r_h(s,a)+ \sum_{s'} \P_h(s'|s,a) \Vrob^{\pi}_{h+1}(s')} \nonumber\\
  &=  \min_{(\bphi_h,\bmu_h,\bnu_h) \in \M_h} \angl*{\bphi_h(s,a), \bnu_h + \sum_{s'} \Vrob^{\pi}_{h+1}(s') \bmu_h(s')},
\end{align}
For boundary conditions, we always regard $\Vrob^{\pi}_{H+1}(\cdot) \equiv 0$.

However, the algorithm is computationally undesirable. 
On the one hand, for each time step $h$, we need to solve an independent optimization problem for each $(s,a)$ pair, which each involves $\Theta(dS)$ optimization variables, breaking the benefits of low-rank structure and thus making the algorithm prohibitively slow for large state-action spaces. On the other hand, the constraint of $(s,a)$-rectangular ambiguity set makes optimization even harder due to its incompatibility with low-rank structure --- to guarantee that $\angl{\bphi_h(s,a), \bmu_h(\cdot)}$ lies in the probability simplex for each $(s,a)$ pair, we need to introduce $\Theta(SA)$ additional constraints that make the feasible region highly nonconvex. As a result, the naive approach to directly adapt the standard ambiguity set definition and perform robust policy evaluation fails to exploit the advantage of introducing low-rank representations, but rather, only adds to the complexity of optimization and magnifies the drawbacks of representation.

As discussed above, in some recent papers like \citet{goyal2023robust, ma2023distributionally}, a new robustness concept called $d$-rectangularity is proposed specifically for low-rank MDPs, along with new algorithms for finding the optimal policies. However, their ambiguity set definition is restrictive, in that it requires a specific soft state-aggregate structure for the low-rank representation, assumes the ambiguity set to be convex or KL-divergence regularized, and cannot handle perturbations of feature maps. We would like to emphasize that no existing methods are able to handle the dual perturbations on both the feature maps and the factors in generic low-rank MDPs.

The challenges mentioned above motivate us to find an alternative formulation of robustness that works better with the low-rank representation structure and is more computationally-friendly.

\subsection{The Proposed Robust Low-Rank MDP}\label{sec:3-2-robust_definition}

In essence, in order to improve robustness, we only need to perturb the system dynamics around the nominal model in any procedure that ``solves'' the nominal problem. For this purpose, note that in the nominal MDP, the ultimate objective $V^{\pi}_1(\rho)$ can be expanded in terms of $Q^{\pi}_h(\cdot,\cdot)$ as
\begin{equation}
  V^{\pi}_1(\rho) = \sum_{\tau < h} \E[(s_{\tau},a_{\tau}) \sim d^{\pi}_{\tau}]{r_{\tau}(s_{\tau}, a_{\tau})} + \E[(s_h,a_h) \sim d^{\pi}_h]{Q^{\pi}_h(s_h, a_h)}.
\end{equation}
Therefore, the way $Q^{\pi}_h$ appears in $V^{\pi}_1$ is only through the expectation $\E[(s_h,a_h) \sim d^{\pi}_h]{Q^{\pi}_h(s_h, a_h)}$. For low-rank MDPs, we shall further expand that to
\begin{equation}\label{eq:3-nominal_update}
  \angl[\big]{\E[(s_h,a_h) \sim d^{\pi}_h]{\bphi_h(s_h,a_h)}, \bomega_h},
\end{equation}
where $\bomega_h := \bnu_h + \sum_{s'} \Vnom^{\pi}_{h+1}(s') \bmu_h(s')$ is the parameter for the nominal $Q$-function, and the expectation is over $(s_h,a_h) \sim d^{\pi}_h$, the state-action occupancy measure under policy $\pi$ at step $h$. Here we utilize the linearity of expectations, as the factor $\bomega_h$ is independent from $(s_h,a_h)$. 

Suppose the perturbations on $\bphi_h(s,a)$, $\bmu_h(s')$ and $\bnu_h$ are denoted by $\bdelta_{\phi,h}(s, a)$, $\bdelta_{\mu,h}(s')$ and $\bdelta_{\nu,h}$, respectively. Plug them into \eqref{eq:3-nominal_update}, and we have 
\begin{equation}\label{eq:3-perturbed_nominal_update}
  \angl[\big]{\E[(s_h,a_h) \sim d^{\pi}_h]{\bphi_h^{\circ}(s_h,a_h) + \bdelta_{\phi,h}(s,a)}, \bomega_h + \bxi_h}
  = \angl[\big]{\E[(s_h,a_h) \sim d^{\pi}_h]{\bphi_h^{\circ}(s_h,a_h) } + \bm{\eta}_h, \bomega_h + \bxi_h} 
\end{equation}
where $\bomega_h := \bnu^{\circ}_h + \sum_{s'} \Vnom^{\pi}_{h+1}(s') \bmu^{\circ}_h(s')$, $\bxi_h := \bdelta_{\nu,h} + \sum_{s'} \Vnom^{\pi}_{h+1}(s') \bdelta_{\mu,h}(s')$, and $\bm{\eta}_h := \E[(s_h,a_h) \sim d^{\pi}_h]{\bdelta_{\phi,h}(s, a)}$. Details of this change-of-variable can be found in \Cref{sec:appx-A-1-transformation}.
The obtained equation~\eqref{eq:3-perturbed_nominal_update} reveals the essential effect the dual perturbation over $\bnu$, $\bmu$ and $\bphi$ has on the value of the policy, which inspires us to consider the ``effective'' perturbation over $\bxi_h$ and $\bm{\eta}_h$ for computational efficiency.

Now we are ready to formally define our novel robust policy evaluation scheme. Given policy $\pi$, define the following recursively: for the terminal value, set $\Vrlx^{\pi}_{H+1}(\cdot) \equiv 0$; for the recursive update at time step $h \in [H]$, given $\Vrlx^\pi_{h+1}(s')$ for step $h+1$, we first solve an optimization problem
\begin{equation}\label{eq:3-effective_robust_update}
  \min_{(\bxi_h, \bm{\eta}_h) \in \widehat{\M}_h} \angl[\big]{\E[(s_h,a_h) \sim d^{\pi}_h]{\bphi_h^{\circ}(s_h,a_h)} + \bm{\eta}_h, \bomega_h + \bxi_h},
\end{equation}
which can be viewed as a perturbation of \eqref{eq:3-nominal_update} around the nominal dynamics. The recursion happens within the calculation of $\bomega_h := \bnu^{\circ}_h + \sum_{s'} \Vrlx^{\pi}_{h+1}(s') \bmu^{\circ}_h(s')$, where $\Vrlx^{\pi}_{h+1}(\cdot)$ from last iteration is used. Note that here $d^{\pi}_h$ still refers to the state-action occupancy measure in the \textit{nominal} model. We say $\widehat{\M}_h$ is a \textit{$(\xi,\eta)$-rectangular ambiguity set}, if it is rectangular in terms of $(\bxi_h, \bm{\eta}_h)$ as
\begin{equation}\label{eq:3-effective_ambiguity_set}
  \widehat{\M}_h := \set[\big]{(\bxi_h, \bm{\eta}_h)}[\norm{\bxi_h} \leq \r_{\xi,h}, \norm{\bm{\eta}_h} \leq \r_{\eta,h}],
\end{equation}
where $(\r_{\xi,h}, \r_{\eta,h})$ are called the radii of perturbation. Further, we define $\widehat{\M} := \bigotimes_{h \in [H]} \widehat{M}_h$.

With the solution $(\bxi^*_h, \bm{\eta}^*_h)$ in hand, we proceed to calculate the robust $Q$-functions under the new robustness concept. Although the optimization problem \eqref{eq:3-effective_robust_update} does not produce individual perturbed features, as \eqref{eq:3-perturbed_nominal_update} suggests, we may simply perturb each feature $\bphi^{\circ}_h(\cdot,\cdot)$ by the same amount $\bm{\eta}^*_h$. Therefore, the new Bellman update can be written as
\begin{equation}\label{eq:3-effective_robust_operator}
  [\Brlx^{\pi}_h \Vrlx^{\pi}_{h+1}](s,a) := \angl*{\bphi_h^{\circ}(s,a) + \bm{\eta}^*_h, \bomega_h + \bxi^*_h}.
\end{equation}
Note that here we explicitly mark the policy behind an Bellman update operator in its superscript, since the policy is implicitly involved when we solve \eqref{eq:3-effective_robust_update}. Then the low-rank robust $V$-function can be recovered by
\begin{equation}\label{eq:3-effective_robust_value}
  \Vrlx^{\pi}_h(s) = \angl[\big]{\pi_h(\cdot | s), [\Brlx^{\pi}_h \Vrlx^{\pi}_{h+1}](s, \cdot)}.
\end{equation}
For the sake of convenience we also define the low-rank robust $Q$-function as $\Qrlx^{\pi}_h := [\Brlx_h \Vrlx^{\pi}_{h+1}]$. The objective of the robust planning problem is to find the optimal policy that maximizes the robust value at the initial step, namely
\begin{equation}\label{eq:3-effective_robust_policy}
  \pirlx^* := \arg\max_{\pi} \min_{\MDP \in \widehat{\M}} \Vrlx^{\pi}_1(\rho).
\end{equation}
The new robustness concept can be interpreted as an \textit{implicit} step-wise independent pseudo-MDP\footnote{Pseudo-MDPs are MDP-like processes that allow transition probabilities to lie out of the probability simplex. See \citet{yao2014pseudo} for detailed definitions and properties.} perturbation around the nominal MDP, which is done through $2H$ \textit{effectively equivalent} perturbation vectors $\bxi_{1:H}$ and $\bm{\eta}_{1:H}$.

\begin{remark}
  Note that, in theory, we may write out a set $\widehat{\M}$ that contains exactly the $(\xi,\eta)$ pairs corresponding to some valid MDP perturbation around the nominal model. However, for the sake of computational simplicity, we choose to relax the ambiguity set to include pseudo-MDPs.
\end{remark}

\subsection{Rationale of the Proposed Low-rank Robustness with \titlemath{(\bxi, \bm{\eta})}-Rectangularity}

We proceed to present properties and examples to justify and promote understanding of the proposed robustness concept.

\paragraph{Relationship with Nominal and Standard Robust Updates.} One may wonder how the new policy evaluation scheme is connected with the standard robustness through representations in~\eqref{eq:3-standard_robust_MDP}, and further, how the new robust values relate to the standard ones. For this purpose, the readers should be reminded of the standard robust Bellman update operator $\Brob_h$ defined in \eqref{eq:3-robust_update}, and our new robust Bellman update operator $\Brlx_h$ defined in \eqref{eq:3-effective_robust_update}. In addition, we point out that we may always select $\r_{\xi,h}$ and $\r_{\eta,h}$ so that the $(\xi, \eta)$-ambiguity set induced by $\M$ is a subset of $\widehat{\M}$. \Cref{sec:appx-A-1-transformation} contains more information about this transformation, and we will simply assume this by default.

It turns out that we can show the following property.
\begin{theorem}\label{thm:3-value_relation}
  Suppose the $(\xi, \eta)$-ambiguity set induced by $\M$ is a subset of $\widehat{\M}$. Then for any step $h \in [H]$ we have:
  \begin{enumerate}
    \item \textit{Ordinal relation}: $\Qrlx^{\pi}_h(s,a) \leq \Qrob^{\pi}_h(s,a) \leq \Qnom^{\pi}_h(s,a)$, and $\Vrlx^{\pi}_h(s) \leq \Vrob^{\pi}_h(s) \leq \Vnom^{\pi}_h(s)$;

    \item \textit{Bounded gap}: $\Vrob^{\pi}_h(s) - \Vrlx^{\pi}_h(s) \leq \Vnom^{\pi}_h(s) - \Vrlx^{\pi}_h(s) \leq \sum_{\tau \geq h} \prn[\big]{2 \r_{\eta,\tau} \sqrt{d} + (1+\r_{\eta,\tau}) \r_{\xi,\tau}}$.
  \end{enumerate}
\end{theorem}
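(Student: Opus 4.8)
The plan is to establish the two chains of inequalities in the ordinal relation first, and then to read off the bounded gap: its first inequality is immediate from the ordinal relation, and only its second inequality requires a quantitative estimate. Throughout I would work by backward induction on $h$, with the trivial base case $\Vrlx^{\pi}_{H+1} = \Vrob^{\pi}_{H+1} = \Vnom^{\pi}_{H+1} \equiv 0$.

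\textbf{Ordinal relation.} The right-hand inequalities $\Qrob^{\pi}_h \le \Qnom^{\pi}_h$ and $\Vrob^{\pi}_h \le \Vnom^{\pi}_h$ I would obtain directly from the definition \eqref{eq:3-standard_robust_value}: the nominal model $\MDP(\bphi^{\circ}_{1:H},\bmu^{\circ}_{1:H},\bnu^{\circ}_{1:H})$ is feasible for the minimization defining $\Vrob$, and its objective value is exactly $\Vnom^{\pi}_h(s)$, so the minimum cannot exceed it; the same holds for $\Qrob$. For the left-hand inequalities $\Qrlx^{\pi}_h \le \Qrob^{\pi}_h$ and $\Vrlx^{\pi}_h \le \Vrob^{\pi}_h$ I would use the standing hypothesis that the $(\xi,\eta)$-set induced by $\M$ is contained in $\widehat{\M}_h$: the perturbation attaining $\Qrob^{\pi}_h$ is carried, via the change-of-variables \eqref{eq:3-perturbed_nominal_update}, to a pair $(\bxi_h,\bm{\eta}_h)$ feasible for the relaxed program \eqref{eq:3-effective_robust_update}, so the relaxed minimum over the larger set $\widehat{\M}_h$ can only be smaller. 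This is the step I expect to be genuinely delicate (see below).

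\textbf{Bounded gap.} Set $G_h(s) := \Vnom^{\pi}_h(s) - \Vrlx^{\pi}_h(s) \ge 0$, where nonnegativity is part of the ordinal relation. The first inequality $\Vrob^{\pi}_h - \Vrlx^{\pi}_h \le \Vnom^{\pi}_h - \Vrlx^{\pi}_h$ is merely $\Vrob^{\pi}_h \le \Vnom^{\pi}_h$ rewritten. For the second, writing $\bomega^{\mathrm{rlx}}_h := \bnu^{\circ}_h + \sum_{s'}\Vrlx^{\pi}_{h+1}(s')\bmu^{\circ}_h(s')$ and using $\bomega^{\mathrm{nom}}_h - \bomega^{\mathrm{rlx}}_h = \sum_{s'} G_{h+1}(s')\bmu^{\circ}_h(s')$, I would expand the one-step difference built from \eqref{eq:3-effective_robust_operator} into
\begin{equation*}
  \Qnom^{\pi}_h(s,a) - \Qrlx^{\pi}_h(s,a) = [\P^{\circ}_h G_{h+1}](s,a) - \angl*{\bphi^{\circ}_h(s,a),\bxi^*_h} - \angl*{\bm{\eta}^*_h,\bomega^{\mathrm{rlx}}_h + \bxi^*_h},
\end{equation*}
where the first term is the gap propagated under the nominal kernel and the last two constitute the error injected at step $h$. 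Averaging over $a \sim \pi_h(\cdot\mid s)$ gives $G_h(s) = \sum_a \pi_h(a\mid s)\,[\P^{\circ}_h G_{h+1}](s,a) + \mathrm{err}_h(s)$. Cauchy--Schwarz together with \Cref{assm:2-low-rank_MDP_bound} (so that $\norm{\bphi^{\circ}_h(s,a)}\le 1$, $\norm{\bomega^{\mathrm{rlx}}_h}\le 2\sqrt d$, $\norm{\bxi^*_h}\le\r_{\xi,h}$, $\norm{\bm{\eta}^*_h}\le\r_{\eta,h}$) then yields $|\mathrm{err}_h(s)| \le \r_{\xi,h} + \r_{\eta,h}(2\sqrt d + \r_{\xi,h}) = 2\r_{\eta,h}\sqrt d + (1+\r_{\eta,h})\r_{\xi,h}$, exactly the per-step term. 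Since $\P^{\circ}_h(\cdot\mid s,a)\ge 0$ the propagation operator is monotone and $[\P^{\circ}_h G_{h+1}](s,a)\le\max_{s'}G_{h+1}(s')$, so the backward induction telescopes these bounds into $\sum_{\tau\ge h}\prn[\big]{2\r_{\eta,\tau}\sqrt d+(1+\r_{\eta,\tau})\r_{\xi,\tau}}$.

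\textbf{Main obstacle.} The hard part is the left inequality of the ordinal relation, $\Vrlx^{\pi}_h \le \Vrob^{\pi}_h$. The relaxed scheme solves a single $d^{\pi}_h$-averaged program \eqref{eq:3-effective_robust_update} and then applies the minimizer $(\bxi^*_h,\bm{\eta}^*_h)$ uniformly over all $(s,a)$ through \eqref{eq:3-effective_robust_operator}, whereas $\Qrob^{\pi}_h$ optimizes the perturbation separately at each state-action pair; reconciling a pointwise comparison with an average-optimal perturbation is the crux, and I would likely have to argue the inequality first at the $d^{\pi}_h$-weighted level before descending to the stated pointwise form. A second, more technical, wrinkle is that the induced $\bxi_h$ in \eqref{eq:3-perturbed_nominal_update} is defined through the nominal $\Vnom^{\pi}_{h+1}$ while the relaxed recursion uses $\Vrlx^{\pi}_{h+1}$, so the subset containment must be invoked exactly as formalized in \Cref{sec:appx-A-1-transformation}. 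Finally, for the bound $\norm{\bomega^{\mathrm{rlx}}_h}\le 2\sqrt d$ I would first confirm $\Vrlx^{\pi}_{h+1}\in[0,H]$ (the upper bound from the ordinal relation, the lower bound by a separate nonnegativity check), or else carry the correction $\sum_{s'}G_{h+1}(s')\bmu^{\circ}_h(s')$ explicitly through the estimate.
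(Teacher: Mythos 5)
Your overall strategy mirrors the paper's: backward induction from the trivial base case $h=H+1$, an operator-level comparison for the ordinal relation, and a per-step Cauchy--Schwarz error bound propagated in sup norm for the bounded gap. Your bounded-gap argument is correct and essentially identical to the paper's: your one-step expansion and the estimate $|\mathrm{err}_h(s)| \leq 2\r_{\eta,h}\sqrt{d} + (1+\r_{\eta,h})\r_{\xi,h}$ reproduce \Cref{thm:A-2-diff_nominal_effective}, your propagation step $[\P^{\circ}_h G_{h+1}](s,a) \leq \max_{s'} G_{h+1}(s')$ is exactly the nonexpansion property of \Cref{thm:A-2-nominal_Lipschitz}, and your caveat that $\norm{\bomega^{\mathrm{rlx}}_h}\leq 2\sqrt d$ needs $\Vrlx^{\pi}_{h+1}$ to stay in the range covered by \Cref{assm:2-low-rank_MDP_bound} is a legitimate point the paper glosses over.

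The genuine gap is in the ordinal relation, and it is twofold. First, you never supply the bridging device that closes the induction. From the induction hypothesis $\Vrlx^{\pi}_{h+1} \leq \Vrob^{\pi}_{h+1} \leq \Vnom^{\pi}_{h+1}$ and an operator comparison at a \emph{fixed} $V$, one still needs monotonicity of the standard robust operator, i.e. $V \leq V'$ pointwise implies $[\Brob_h V](s,a) \leq [\Brob_h V'](s,a)$, in order to assemble the chain
\begin{equation*}
  \Qrlx^{\pi}_h = \Brlx^{\pi}_h \Vrlx^{\pi}_{h+1} \leq \Brob_h \Vrlx^{\pi}_{h+1} \leq \Brob_h \Vrob^{\pi}_{h+1} = \Qrob^{\pi}_h \leq \Brob_h \Vnom^{\pi}_{h+1} \leq \Bnom_h \Vnom^{\pi}_{h+1} = \Qnom^{\pi}_h.
\end{equation*}
The paper isolates this as \Cref{thm:A-2-robust_ordering}, and its proof uses precisely the fact that every model in $\M$ is a valid MDP, so $\angl{\bphi^*_h(s,a), \bmu^*_h(s')} \geq 0$. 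Note that the alternative bridge through monotonicity of $\Brlx^{\pi}_h$ is unavailable: pseudo-MDP perturbations can produce negative weights, so the relaxed operator is not monotone, and your sketch cannot be completed without the $\Brob_h$ lemma you never state. Second, you correctly identify the pointwise-versus-average mismatch as the crux of $\Qrlx^{\pi}_h \leq \Qrob^{\pi}_h$, but you leave it undischarged, and the resolution you suggest --- prove the inequality $d^{\pi}_h$-averaged and then ``descend'' to the pointwise statement --- cannot work in general: at a state-action pair with zero occupancy (e.g.\ orthonormal tabular features and $\r_{\eta,h}=0$), the averaged-optimal $\bxi^*_h$ is orthogonal to $\bphi^{\circ}_h(s,a)$, so the relaxed update equals the nominal one there, while the robust minimum is strictly below it. The paper's \Cref{thm:A-2-update_ordering} instead obtains the pointwise inequality directly, by bounding the relaxed update at each $(s,a)$ as a per-$(s,a)$ minimum over $\widehat{\M}_h$ and exhibiting the pair induced by the robust minimizer (with $\bxi$ built from the \emph{same} $V$ being updated, per \Cref{sec:appx-A-1-transformation}) as a feasible point. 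A proposal that flags the crux but does not resolve it, and omits the monotonicity lemma, has not yet proved part 1.
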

The above theorem clearly states that the proposed robust policy evaluation scheme can be viewed as a relaxation of the standard robust evaluation scheme (given appropriate radii). Indeed, our scheme leads to a pessimistic evaluation of the policy, and thus yields more conservative policies. 

It seems that the above bound is loose in that the gap between $\Vnom^{\pi}_1$, $\Vrob^{\pi}_1$ and $\Vrlx^{\pi}_1$ is in the order of $\Theta(H)$. However, the following example shows that this is actually the best bound we can expect.

\begin{example}[string guessing]
  Consider a string guessing game with the answer set to be an $m$-bit binary string. Without loss of generality, let  $11 \cdots 1$ be the answer. There are two actions, i.e. $\A = \set{a_0,a_1}$. The game proceeds in a bit-wise manner, and the transient states $s_{1:m}$ are used to record the progress. There are two absorptive states: $s_-$ for an error on any bit, which yields a reward of 0 for each of the remaining steps; $s_+$ for success on all bits, which yields a reward of 1 for each of the remaining steps. The MDP is illustrated in \Cref{fig:6-1-string_guessing} below, where transitions are deterministic (as indicated by the arrows), and all rewards are 0 except for the self-loop at $s_+$.

  \begin{figure}[ht]
    \centering
      \tikzset{snode/.style = {draw=black, shape=circle, line width=1.0pt, inner sep=0pt, minimum width=16pt}}
  \begin{tikzpicture}[%
    every path/.style = {draw=black, line width=0.5pt, ->, >={Stealth}}%
  ]
    \node[snode] (s0) at (80pt, 50pt) {$s_-$};
    \node[snode] (s1) at (0pt, 0pt) {$s_1$};
    \node[snode] (s2) at (40pt, 0pt) {$s_2$};
    \node[snode] (s3) at (80pt, 0pt) {$s_3$};
    \node (s4) at (120pt, 0pt) {$\cdots$};
    \node[snode] (sm) at (160pt, 0pt) {$s_m$};
    \node[snode] (sp) at (200pt, 0pt) {$s_+$};

    \draw (s1) edge node[above]{$a_1$} (s2);
    \draw (s2) edge node[above]{$a_1$} (s3);
    \draw (s3) edge node[above]{$a_1$} (s4);
    \draw (s4) edge node[above]{$a_1$} (sm);
    \draw (sm) edge node[above]{$a_1$} (sp);
    \draw (sp) edge[loop above] node[above]{\scriptsize\color{blue} $r=1$} (sp);

    \draw (s1) edge node[left]{$a_0$} (s0);
    \draw (s2) edge node[right]{$a_0$} (s0);
    \draw (s3) edge node[right]{$a_0$} (s0);
    \draw (sm) edge node[right]{$a_0$} (s0);
    \draw (s0) edge[loop above] node[left]{} (s0);
  \end{tikzpicture}\vspace{-8pt}
    \caption{MDP diagram for the string guessing game.}\label{fig:6-1-string_guessing}
  \end{figure}
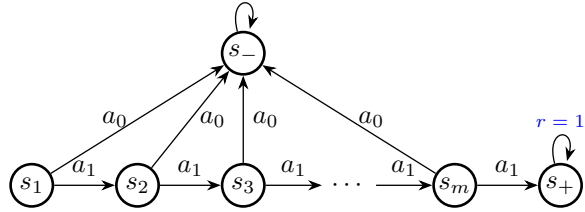

  To formulate robust MDPs around the nominal model, suppose that the transition probabilities are subject to uncertainties of at most $\delta$. Then the standard $(\phi,\mu,\nu)$-ambiguity set $\M$ shall be specified by $\r_{\phi,h} = \r_{\nu,h} = 0$ and $\r_{\mu,h} = \delta$, while the $(\xi,\eta)$-ambiguity set $\widehat{\M}$ shall be specified by $\r_{\xi,h} = (H-h)\delta$ and $\r_{\eta,h} = 0$. It can be verifed that $\widehat{\M}$ is a relaxation of $\M$ (see \Cref{sec:appx-A-1-transformation}).

  Consider a policy that always takes action $a_1$. The nominal, standard robust and the new robust values shall be calculated as $\Vnom^{\pi}_1(s_1) = H-m$, $\Vrob^{\pi}_1(s_1) = (1-\delta)^m (H-m)$, and $\Vrlx^{\pi}_1(s_1) = (H - m) - \sum_{h=1}^{m} (H-h)\delta$, respectively.
  
  With these values in hand, we first verify their ordinal relations, which follows from Bernoulli's inequality as
  \begin{equation}
    \Vnom^{\pi}_1(s_1)
    > \Vrob^{\pi}_1(s_1)
    > \Vrlx^{\pi}_1(s_1).
  \end{equation}
  Meanwhile, it can be shown by Taylor expansion that
  \begin{align}
    \Vrob^{\pi}_1(s_1) - \Vrlx^{\pi}_1(s_1)
    &= \frac{m(m+1)}{2} \delta + \o(\delta), \\
    \Vnom^{\pi}_1(s_1) - \Vrob^{\pi}_1(s_1)
    &= (H-m)m \delta + \o(\delta).
  \end{align}
  Therefore, with sufficiently small $\delta$, $\Vrob^{\pi}_1(s_1)$ would be much closer to $\Vrlx^{\pi}_1(s_1)$ when $H \gg m$, but become much closer to $\Vnom^{\pi}_1(s_1)$ when $H = \Theta(m)$, which implies that in general we cannot expect anything better than \Cref{thm:3-value_relation}.

  Details of this example can be found in \Cref{sec:appx:A-2-value_relation}.
\end{example}

\paragraph{Robustness Induced by Low-rank Robust MDPs.} We show by another simple example that the optimal robust policy for the proposed low-rank robustness concept indeed displays robust behavior to a certain extent.
\begin{example}[gamble-or-guarantee]
  Consider the following ``gamble-or-guarantee'' game, where the agent is required to enter one of the two branches: a no-risk ``guarantee'' branch (taking action $a_0$) including a single absorptive state $s_{\alpha}$ to receive a constant reward $\alpha$ from then on, and a risky ``gamble'' branch (taking action $a_1$) that includes a potentially transient state $s_1$ to receive rewards of $1$, at the risk of permanently falling into the 0-reward absorption state $s_0$. The MDP is illustrated in \Cref{fig:4-2-risk_seeking} below.

  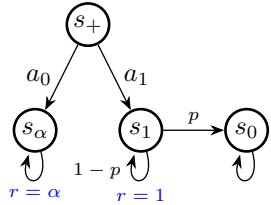
\begin{figure}[ht]
    \centering
      \tikzset{snode/.style = {draw=black, shape=circle, line width=1.0pt, inner sep=0pt, minimum width=16pt}}
  \tikzset{apath/.style = {draw=black, line width=0.5pt, ->, >={Stealth}}}
  \begin{tikzpicture}
    \node[snode] (si) at (0pt, 40pt) {$s_+$};
    \node[snode] (sa) at (-20pt, 0pt) {$s_{\alpha}$};
    \node[snode] (s1) at (20pt, 0pt) {$s_1$};
    \node[snode] (s0) at (60pt, 0pt) {$s_0$};

    \draw[apath] (si) edge node[left]{$a_0$} (sa);
    \draw[apath] (si) edge node[right]{$a_1$} (s1);
    \draw[apath] (s1) edge node[above=-1pt]{\scriptsize $p$} (s0);
    
    \draw[apath] (sa) edge[loop below] node[below]{\scriptsize\color{blue} $r=\alpha$} (sa);
    \draw[apath] (s1) edge[loop below] node[left=4pt,yshift=3pt]{\scriptsize $1-p$} node[below]{\scriptsize\color{blue} $r=1$} (s1);
    \draw[apath] (s0) edge[loop below] node[below]{} (s0);
  \end{tikzpicture}\vspace{-8pt}
    \caption{MDP diagram for the gamble-or-guarantee game.}\label{fig:4-2-risk_seeking}
  \end{figure}

  Direct computation shows that, when $H$ is sufficiently large, the optimal nominal policy is to take action $a_1$ at the initial state $s_+$. Meanwhile, we also have $\Vrlx^*_h(s_{\alpha}) = \frac{(1-\delta)\alpha}{\delta} \prn[\big]{1 - (1-\delta)^{H-h+1}}$ and $\Vrlx^*_h(s_1) \leq \frac{1-p-\delta}{p+\delta}$, so that $\Vrlx^*_2(s_1) < \Vrlx^*_2(s_{\alpha})$ when $\delta < \frac{p\alpha}{2(1-p)}$, in which case the optimal robust policy is to take action $a_0$ at intial state $s_+$. 
  
  The above example indicates that, under appropriate perturbation radii, the policy obtained by solving \eqref{eq:3-effective_robust_policy} indeed displays certain level of robustness in behavior.
  
  Details of this example can be found in \Cref{sec:appx-A-3-robustness_example}.
\end{example}
  \section{R\tsup{2}PG: Representation Robust Policy Gradient}

With the new robustness concept in hand, now we shall present our algorithm that iteratively solves for the optimal robust policy $\pirlx^*$ as defined in \eqref{eq:3-effective_robust_policy}.

\begin{algorithm*}[t]
  \caption{R\tsup{2}PG: \textbf{R}epresentation \textbf{R}obust \textbf{P}olicy \textbf{G}radient}\label{alg:4-main}
  \begin{algorithmic}[1]
    \State Initialize $\pi^1_h(\cdot | s) \gets \mathsf{Unif}(\A)$.
    \For{$k = 1,2,\cdots,K$}
      \State Initialize $\Vrlx^k_{H+1}(s) \gets 0$.
      \State Compute $d^{\pi^k}_h$ via recursion: $d^{\pi^k}_1(s,a) \gets \rho(s) \pi(a | s)$, $d^{\pi^k}_{h+1}(s',a') \gets \sum_{s,a} d^{\pi^k}_h(s,a) \P^{\circ}_h(s' | s,a) \pi(a' | s')$.
      \For{$h = H,H-1,\cdots,1$}
        \State Compute $\bomega^{\circ,k}_h \gets \bnu^{\circ}_h + \sum_{s'} \Vrlx^k_{h+1}(s') \bmu^{\circ}_h(s')$, and solve the following program for $\bxi^k_h$, $\bm{\eta}^k_h$:
        \begin{equation}\label{eq:4-policy_eval_optim}
          \min_{(\bxi^k_h, \bm{\eta}^k_h) \in \widehat{\M}} \angl*{\E[(s,a) \sim d^{\pi^k}_h]{\bphi^{\circ}_h(s,a)} + \bm{\eta}^k_h, \bomega^{\circ,k}_h + \bxi^k_h}.
        \end{equation}
        \State Perform feature update: $\bphi^k_h(s,a) \gets \bphi^{\circ}_h(s,a) + \bm{\eta}^k_h$, $\bomega^k_h \gets \bomega^{\circ,k}_h + \bxi^k_h$.
        \State Update value functions: $\Qrlx^k_h(s,a) \gets \angl{\bphi^k_h(s,a), \bomega^k_h}$, $\Vrlx^k_h(s) \gets \sum_{a} \pi^k_h(a | s) \Qrlx^k_h(s,a)$.
        \State Use Natural Policy Gradient to update the policy: $\pi^{k+1}_h(a | s) \propto \pi^k_h(a | s) \cdot \exp\prn[\big]{\alpha \Qrlx^{\pi}_h(s,a)}$.
      \EndFor
    \EndFor
    \State \Return $\pi^{\mathrm{out}} \sim \mathsf{Unif}(\pi^{1:K})$
  \end{algorithmic}
\end{algorithm*}

\subsection{Algorithm Design}

The proposed algorithm, \textbf{R}epresentation \textbf{R}obust \textbf{P}olicy \textbf{G}radient (R\tsup{2}PG), can be found in \Cref{alg:4-main}. Overall, it follows the standard evaluation-improvement protocol, except that it uses the new robust policy evaluation scheme proposed in \Cref{sec:3-2-robust_definition}. The R\tsup{2}PG algorithm consists of two main components for each step in each iteration:
\begin{enumerate}
  \item \textit{Policy evaluation (line 6-8)}. The algorithm follows the effective robust policy evaluation scheme formulated in \Cref{sec:3-2-robust_definition}. At time step $h$, it first solves \eqref{eq:4-policy_eval_optim} (or equivalently \eqref{eq:3-effective_robust_update}), to get the optimal $(\bxi^*_h, \bm{\eta}^*_h)$ \textit{(line 6)}, and then updates the robust $Q$- and $V$-functions according to \eqref{eq:3-effective_robust_operator} and \eqref{eq:3-effective_robust_value} with identical perturbations $\bm{\eta}^*_h$ around each nominal feature $\bphi^{\circ}_h(\cdot,\cdot)$ \textit{(line 7-8)}. Note that \eqref{eq:3-effective_robust_update} is a non-convex optimization problem, for which the global optimum is hard to find using a general-purpose optimizer. We will discuss how to efficiently solve this problem in the next section by reducing it to an SDP.

  \item \textit{Policy improvement (line 9)}. To update the policy, we use the Natural Policy Gradient (NPG) algorithm that is widely-used in literature \citep{agarwal2021theory, cen2022fast,mei2021leveraging}. Given the robust $Q$-function $\Qrlx^{\pi^k}_h$ of policy $\pi^k$, the update rule is given by
  \begin{equation}
    \pi^{k+1}_h(a | s) \propto \pi^k_h(a | s) \cdot \exp\prn[\big]{\alpha \Qrlx^{\pi^k}_h(s,a)}
  \end{equation}
  for some step size $\alpha > 0$. We would like to point out that NPG in the episodic setting can be interpreted as maintaining an Online Mirror Descent instance at each $(h,s) \in [H] \times \S$ to solve an \textit{expert-advice} problem, which is a well-studied area in online learning literature \cite{orabona2019modern}. Details regarding the interpretation and its learning regret can be found in \Cref{sec:appx-NPG}.
\end{enumerate}

For explanatory purposes, we temporarily ignore the computational difficulties of finding the global minimum of a non-convex program \eqref{eq:4-policy_eval_optim}, as well as summing over the whole state-action space to exactly calculate $d^{\pi^k}_h$ and $\bomega^{\circ,k}_h$. The issue of generalizing the algorithm to MDPs with large state-action spaces will be discussed in later sections.

\subsection{Computational Considerations}

A crucial computational bottleneck in \Cref{alg:4-main} is to solve the optimization problem \eqref{eq:4-policy_eval_optim} for the proposed robust Bellman update. In fact, as has been indicated above, \eqref{eq:4-policy_eval_optim} is a non-convex program, for which the global optimum may be hard to find by a general-purpose gradient-based optimizer. Fortunately, we can reduce it to a constrained \textit{Semi-Definite Programming (SDP)} problem that is computationally more approachable.

For the ease of exposition, consider the general optimization problem with the same structure as \eqref{eq:4-policy_eval_optim}, i.e.
\begin{subequations}\label{eq:4-2-optim_inner_product}
\begin{align}
  \min_{\bm{x}, \bm{y}}\quad& \angl*{\bm{a}+\bm{x}, \bm{b}+\bm{y}},\qquad \\
          \textrm{s.t.}\quad& \norm{\bm{x}} \leq R_x,~ \norm{\bm{y}} \leq R_y.
\end{align}
\end{subequations}
Here $\bm{x}$ and $\bm{y}$ correspond to $\bxi_h$ and $\bm{\eta}_h$ in \eqref{eq:4-policy_eval_optim}, respectively. To describe how to reduce \eqref{eq:4-2-optim_inner_product} to a constrained SDP, we first rewrite it as a quadratic program. Let $\bm{z} := [\bm{x}^{\top}, \bm{y}^{\top}]^{\top}$, so that the objective function and constraints can all be rewritten as a quadratic function in $\bm{z}$, i.e.
\begin{subequations}\label{eq:4-2-optim_QC2QP}
\begin{align}
  \min_{\bm{z}}\quad& \bm{z}^{\top} \underbrace{\begin{bmatrix}
    0 & \frac{1}{2} \bm{I} \\
    \frac{1}{2}\bm{I} & 0
  \end{bmatrix}}_{\bm{\bm{A}}} \bm{z} + 2 {\underbrace{\begin{bmatrix}
    \tfrac{1}{2}\bm{b} \\ \tfrac{1}{2}\bm{a}
  \end{bmatrix}}_{\bm{\beta}}}^{\top} \bm{z} + \underbrace{\angl{\bm{a}, \bm{b}}}_{c}, \\
  \textrm{s.t.}\quad& \bm{z}^{\top} \underbrace{\begin{bmatrix}
    \bm{I} & \\
    & 0
  \end{bmatrix}}_{\bm{\bm{A}}_x} \bm{z} \leq R_x^2,\quad
  \bm{z}^{\top} \underbrace{\begin{bmatrix}
    0 & \\
    & \bm{I}
  \end{bmatrix}}_{\bm{\bm{A}}_y} \bm{z} \leq R_y^2.
\end{align}
\end{subequations}
In this way, \eqref{eq:4-2-optim_QC2QP} becomes a \textit{Quadratic Program with \textit{Two} Quadratic Constraints (QC2QP)} by nature, which has been widely studied in literature \citep{ai2009strong, chen2021optimality}. In fact, we have the following equivalence.

\begin{theorem}[Reduction]\label{thm:4-2-reduction}
  Consider the following SDP
  \begin{subequations}\label{eq:4-2-QC2QP_dual-SDP}
  \begin{align}
    \min_{\bm{X} \in S^{2d+1}}&\quad \tr(\bm{C} \bm{X}) \label{eq:4-2-QC2QP_dual-SDP:1}\\
    \mathrm{s.t.}&\quad \tr(\bm{C}_x \bm{X}) \leq 0,\quad
                        \tr(\bm{C}_y \bm{X}) \leq 0, \label{eq:4-2-QC2QP_dual-SDP:2}\\
                 &\quad \tr(\bm{C}_0 \bm{X}) = 1,\quad
                        \bm{X} \succeq 0, \label{eq:4-2-QC2QP_dual-SDP:3}
  \end{align}
  \end{subequations}
  where $S^n$ denotes the set of $n$-by-$n$ real-symmetric matrices, and the constant matrices are defined as
  \begin{equation}
    \bm{C} := \begin{bmatrix}
      \bm{A} & \bm{\beta} \\
      \bm{\beta}^{\top} & c
    \end{bmatrix},\quad
    \bm{C}_0 := \begin{bmatrix}
       \bm{0} & \\
       & 1
    \end{bmatrix},\quad
    \bm{C}_x := \begin{bmatrix}
       \bm{A}_x & \\
       & -R_x^2
    \end{bmatrix},\quad
    \bm{C}_y := \begin{bmatrix}
       \bm{A}_y & \\
       & -R_y^2
    \end{bmatrix}.
  \end{equation}
  Let $\bm{X}^*$ be the optimal solution of \eqref{eq:4-2-QC2QP_dual-SDP}. Then we have $\bm{z}^* := \bm{X}^*_{1:2d,2d+1}$, i.e. the first $2d$ entries of the last column in $\bm{X}^*$, is the optimal solution of \eqref{eq:4-2-optim_QC2QP}.
\end{theorem}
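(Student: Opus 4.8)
The plan is to recognize \eqref{eq:4-2-QC2QP_dual-SDP} as the standard Shor (lifting) relaxation of the homogenized quadratic program \eqref{eq:4-2-optim_QC2QP}, and then to argue that this relaxation is \emph{exact} for the two-constraint structure at hand, so that a rank-one optimizer exists and the claimed read-off recovers it. Concretely, I homogenize \eqref{eq:4-2-optim_QC2QP} by introducing the lifted vector $\bm{w} = [\bm{z}^{\top}, t]^{\top} \in \mathbb{R}^{2d+1}$ with the convention $t = 1$, so that the objective becomes $\bm{w}^{\top}\bm{C}\bm{w}$, the two ball constraints become $\bm{w}^{\top}\bm{C}_x\bm{w} \le 0$ and $\bm{w}^{\top}\bm{C}_y\bm{w}\le 0$, and the normalization $t^2 = 1$ is encoded by $\bm{w}^{\top}\bm{C}_0\bm{w} = 1$. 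Replacing the rank-one matrix $\bm{w}\bm{w}^{\top}$ by a general $\bm{X}\succeq 0$ and using $\bm{w}^{\top}M\bm{w} = \tr(M\bm{w}\bm{w}^{\top})$ produces exactly \eqref{eq:4-2-QC2QP_dual-SDP}. I will then prove the theorem in three steps: (i) the SDP is a relaxation; (ii) the read-off point is always feasible for \eqref{eq:4-2-optim_QC2QP}; and (iii) the relaxation is exact, so that the read-off is optimal.

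The first two steps are routine. For (i), any feasible $\bm{z}$ of \eqref{eq:4-2-optim_QC2QP} lifts to the rank-one feasible point $\bm{X} = \bm{w}\bm{w}^{\top}$ of the SDP with identical objective, whence the SDP optimal value is at most the optimal value of \eqref{eq:4-2-optim_QC2QP}. For (ii), the constraint $\tr(\bm{C}_0\bm{X}^*) = 1$ forces the trailing diagonal entry of $\bm{X}^*$ to equal $1$, so I may partition $\bm{X}^* = \left[\begin{smallmatrix}\bm{Z}^* & \bm{z}^* \\ (\bm{z}^*)^{\top} & 1\end{smallmatrix}\right]$ with $\bm{z}^* = \bm{X}^*_{1:2d,\,2d+1}$; the constraint $\bm{X}^*\succeq 0$ together with the Schur complement gives $\bm{Z}^* \succeq \bm{z}^*(\bm{z}^*)^{\top}$. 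Because both constraint matrices $\bm{A}_x$ and $\bm{A}_y$ are positive semidefinite, this yields $(\bm{z}^*)^{\top}\bm{A}_x\bm{z}^* \le \tr(\bm{A}_x\bm{Z}^*) \le R_x^2$ and likewise for $y$, so the read-off $\bm{z}^*$ is feasible for \eqref{eq:4-2-optim_QC2QP} regardless of the rank of $\bm{X}^*$.

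The heart of the argument is step (iii). Writing $\bm{W} := \bm{Z}^* - \bm{z}^*(\bm{z}^*)^{\top}\succeq 0$, the QC2QP objective at $\bm{z}^*$ exceeds the SDP value by $-\tr(\bm{A}\bm{W})$; since $\bm{A}$ is indefinite this need not vanish for an arbitrary optimal $\bm{X}^*$, so feasibility alone does not give optimality. What rescues the argument is that \eqref{eq:4-2-optim_QC2QP} is a quadratic program with exactly two quadratic constraints (a CDT-type problem), for which the Shor relaxation is known to be tight: I will invoke the exactness/strong-duality theory of \citet{ai2009strong, chen2021optimality}. The required regularity condition — existence of a positive-definite combination of the constraint matrices — holds automatically here, since $\bm{A}_x + \bm{A}_y = \bm{I} \succ 0$. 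Combined with attainment of the optimum over the compact feasible set, exactness guarantees a rank-one optimal solution $\bm{X}^* = \bm{w}^*(\bm{w}^*)^{\top}$ (so $\bm{W}=0$ and the gap vanishes); note that generic rank reduction with three linear constraints would only yield rank $\le 2$, so rank-one existence genuinely uses the CDT-type analysis. Finally, $\tr(\bm{C}_0\bm{X}^*) = 1$ forces the trailing entry $t$ of $\bm{w}^*$ to satisfy $t^2=1$, and with $\bm{u}$ the leading $2d$ entries of $\bm{w}^*$ the read-off is $\bm{z}^* = t\,\bm{u}$; since $t^2=1$ its QC2QP objective equals $\bm{u}^{\top}\bm{A}\bm{u} + 2t\bm{\beta}^{\top}\bm{u} + c = (\bm{w}^*)^{\top}\bm{C}\bm{w}^*$, the SDP optimal value, so the sign is handled automatically and $\bm{z}^*$ attains the optimum of \eqref{eq:4-2-optim_QC2QP}.

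The main obstacle is establishing exactness in step (iii): unlike the single-constraint trust-region subproblem, two-constraint QCQPs are not universally SDP-tight, and tightness genuinely relies on the CDT-type theory together with verifying the positive-definite-combination condition for our specific matrices. A secondary subtlety worth flagging is that the recovery must be read off a \emph{rank-one} optimal solution (which exactness provides), since extraction from a higher-rank optimizer can give a feasible but strictly suboptimal point. The remaining pieces — the lifting identity, Schur-complement feasibility, and the sign bookkeeping — are routine once exactness is in hand.
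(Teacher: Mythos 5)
Your overall architecture --- lift \eqref{eq:4-2-optim_QC2QP} to the Shor relaxation \eqref{eq:4-2-QC2QP_dual-SDP}, argue the relaxation is exact so that a rank-one optimizer exists, and read the solution off the last column --- is the same as the paper's, and your steps (i) and (ii) are sound (step (ii), the rank-free feasibility of the read-off via the Schur complement, is even a detail the paper does not spell out). But step (iii) has a genuine gap. You assert that exactness follows from CDT-type strong-duality theory once one checks the positive-definite-combination condition $\bm{A}_x + \bm{A}_y = \bm{I} \succ 0$. That condition alone is \emph{not} sufficient for tightness of the SDP relaxation of a two-constraint QCQP: the classical CDT instances with a positive duality gap have two ball-type constraints whose Hessians already admit a positive-definite combination. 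This is precisely why \citet{ai2009strong} is a paper about a \emph{necessary and sufficient} condition for strong duality --- it can fail under exactly the regularity you verify. Invoking that theory therefore obligates you to check its condition at the dual optimum, which you never do; as written, your step (iii) asserts the very thing that must be proved.

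What closes the gap --- and what the paper actually does --- is to invoke \citet{beck2006strong} (\Cref{thm:B-1-QC2QP_duality}), whose sufficient condition for strong duality carries a third hypothesis beyond strict feasibility and the positive-definite combination: the null space of $\bm{A} - \alpha^* \bm{A}_x - \beta^* \bm{A}_y$ at the dual optimum must not have dimension exactly $1$. This is the hypothesis that rules out the CDT gap, and it is verified using the special block structure of the matrices at hand: since
\begin{equation*}
  \bm{A} - \alpha \bm{A}_x - \beta \bm{A}_y
  = \begin{bmatrix} -\alpha \bm{I} & \tfrac{1}{2}\bm{I} \\[2pt] \tfrac{1}{2}\bm{I} & -\beta \bm{I} \end{bmatrix},
  \qquad
  \det\bigl(\bm{A} - \alpha \bm{A}_x - \beta \bm{A}_y\bigr) = \Bigl(\alpha\beta - \tfrac{1}{4}\Bigr)^{d},
\end{equation*}
the null space has dimension $0$ (when $\alpha\beta \neq \tfrac{1}{4}$) or $d$ (when $\alpha\beta = \tfrac{1}{4}$), and in particular never $1$ when $d \geq 2$. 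Without this structural computation --- or an equivalent verification of the Ai--Zhang condition --- your exactness claim does not go through; with it, the remainder of your argument (existence of a rank-one optimizer, the sign bookkeeping via $t^2 = 1$, and the read-off) is correct and matches the paper's conclusion.
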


In this way, we reduce the non-convex problem to a constrained SDP, for which efficient solvers are known to exist \citep{hazan2008sparse, gartner2012approximation}.

\subsection{Generalize to Large State-Action Spaces}\label{sec:large_state_space}

We would also like to discuss how our R\tsup{2}PG algorithm can be generalized to handle large state-action spaces. Note that the challenges lie in the computation of $d^{\pi^k}_h$ and $\bomega^{\circ,k}_h$ that involves summing over the state-action spaces, for which we shall propose a few alternative methods.

Throughout this section, we assume that we have access to a sampling oracle that collects $N$ trajectories $\set{(s^i_h, a^i_h, r^i_h)_{h \in [H]}}[i \in [N]]$ from the the \textit{nominal} MDP.

\paragraph{Estimating $\bm{\omega^{\circ,k}_h}$.} Similar to existing algorithms for linear MDPs \cite{bradtke1996linear,jin2019LSVI-UCB}, we estimate $\bomega^{\circ,k}_h$ by least-squares regression (with adjustable $\lambda > 0$):
\begin{equation}
  \bomega^{\circ,k}_h \gets \min_{\bomega} \textstyle\sum_{i=1}^{N} \E[\pi]{\norm[\big]{r^i_h + \Qrlx^k_{h+1}(s^i_{h+1}, \cdot) - \angl{\bphi^{\circ}_h(s^i_h, a^i_h), \bomega}}^2 + \lambda \norm{\bomega}^2}.
\end{equation}
In this way we have a sample-based estimator for $\bomega^{\circ,k}_h$ that can be computed by Stochastic Gradient Descent (SGD).

\paragraph{Approximating the Solution of (\ref{eq:4-policy_eval_optim}).} Note that we do not necessarily need to compute $d^{\pi^k}_h$ --- we can use any method that approximates the solution of \eqref{eq:4-policy_eval_optim}. Here we propose two different approaches to do this.
\begin{itemize}
  \item \textit{Monte-Carlo estimation of the averaged feature.} Assuming the same sampling oracle as above, we shall simply replace $\E[(s,a) \sim d^{\pi^k}_h]{\bphi^{\circ}_h(s,a)}$ with its Monte-Carlo estimation $\frac{1}{N} \sum_{i=1}^{N} \bphi^{\circ}_h(s^i_h, a^i_h)$, and solve the resulting optimization problem using the SDP reduction above.

  \item \textit{Perform SGD for regularized objective.} Alternatively, we may convert the constrained program to an unconstrained program with constraint-induced regularizers, i.e. 
  \begin{equation}
    \min_{\bxi^k_h, \bm{\eta}^k_h} \E[s, a]{\angl[\big]{\bphi^{\circ}_h(s,a) + \bm{\eta}^k_h, \bomega^{\circ,k}_h + \bxi^k_h}}    + \lambda_{\xi} \prn[\big]{\norm{\bxi^k_h}^2 - R_{\xi,h}^2} + \lambda_{\eta} \prn[\big]{\norm{\bm{\eta}^k_h}^2 - R_{\eta,h}^2},  
  \end{equation}
  and use SGD with samples $(s,a)\sim \set{(s^i_h, a^i_h) \mid i \in [N]}$ to approximate the solution of the regularized program. 
\end{itemize}
  \section{Theoretical Analysis}\label{sec:5-analysis}

In this section, we present the convergence guarantee for our R\tsup{2}PG algorithm. Specifically, we would like to show that the robust value of the output policy $\pi^{\mathrm{out}}$ is close to that of the optimal robust policy $\pirlx^*$ defined in ~\eqref{eq:3-effective_robust_policy}. To this end, we have the following bound.

\begin{theorem}[Convergence]\label{thm:5-convergence}
  Under \Cref{assm:2-low-rank_MDP_bound}, by running \Cref{alg:4-main} with $\alpha = \sqrt{2\log A / (KH^2)}$, the robust $V$-function of  $\pi^{\mathrm{out}}$ satisfies
  \begin{equation}
    \E[\pi^{\mathrm{out}}]{\Vrlx_1^*(\rho) - \Vrlx_1^{\pi^{\mathrm{out}}}(\rho)}
    \leq \sqrt{\frac{2H^4 \log A}{K}} + \sum_{h=1}^{H} \prn[\big]{2\r_{\xi,h} (1+\r_{\eta,h}) + 6\r_{\eta,h} \sqrt{d}}. 
  \end{equation}
\end{theorem}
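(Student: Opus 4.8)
The plan is to reduce this to a standard NPG regret bound plus an additive perturbation bias. The key reformulation is that the robust value is an ordinary (non-robust) value in the nominal-transition MDP with a shifted reward. Expanding \eqref{eq:3-effective_robust_operator}, at each step $\Qrlx^{\pi}_h(s,a) = [r^{\circ}_h + \P^{\circ}_h \Vrlx^{\pi}_{h+1}](s,a) + b^{\pi}_h(s,a)$, where $b^{\pi}_h(s,a) := \angl{\bm{\eta}^*_h,\bomega_h} + \angl{\bphi^{\circ}_h(s,a),\bxi^*_h} + \angl{\bm{\eta}^*_h,\bxi^*_h}$ collects the three perturbation cross-terms. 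Since $\Vrlx^{\pi}_h(s) = \angl{\pi_h(\cdot\mid s),\Qrlx^{\pi}_h(s,\cdot)}$, this Bellman recursion is exactly that of the value of $\pi$ in the MDP with nominal kernel $\P^{\circ}$ and reward $r^{\circ}_h + b^{\pi}_h$. First I would record this identification together with the uniform envelope $|b^{\pi}_h(s,a)| \le 2\r_{\eta,h}\sqrt{d} + (1+\r_{\eta,h})\r_{\xi,h}$, which follows from $\norm{\bphi^{\circ}_h}\le 1$, $\norm{\bxi^*_h}\le\r_{\xi,h}$, $\norm{\bm{\eta}^*_h}\le\r_{\eta,h}$ and $\norm{\bomega_h}\le 2\sqrt{d}$ (\Cref{assm:2-low-rank_MDP_bound} together with the range $0\le\Vrlx^{\pi}_{h+1}\le H$ inherited from \Cref{thm:3-value_relation}).

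Since $\pi^{\mathrm{out}}\sim\Unif(\pi^{1:K})$, the left-hand side equals $\frac{1}{K}\sum_{k=1}^{K}\prn{\Vrlx^*_1(\rho) - \Vrlx^{\pi^k}_1(\rho)}$ with $\pi^*:=\pirlx^*$ (so $\Vrlx^*_1 = \Vrlx^{\pi^*}_1$). Introducing $\Vref$, the value of the comparator $\pi^*$ in the nominal-kernel MDP that carries $\pi^k$'s shifted reward $r^{\circ}+b^{\pi^k}$, I would split each summand as $\prn{\Vrlx^{\pi^*}_1(\rho)-\Vref_1(\rho)} + \prn{\Vref_1(\rho)-\Vrlx^{\pi^k}_1(\rho)}$. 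The second difference compares $\pi^*$ and $\pi^k$ inside one fixed MDP, so the performance difference lemma gives exactly $\sum_h\E[(s,a)\sim d^{\pi^*}_h]{\angl{\pi^*_h(\cdot\mid s)-\pi^k_h(\cdot\mid s),\Qrlx^{\pi^k}_h(s,\cdot)}}$, whose action-gains are precisely the robust $Q$-values driving the NPG update (line 9), and whose occupancies $d^{\pi^*}_h$ are nominal-kernel occupancies of the kind computed in line 4. The first difference reduces to the reward mismatch $\sum_h\E[d^{\pi^*}_h]{b^{\pi^*}_h - b^{\pi^k}_h}$; since the minimizer defining $b^{\pi^*}_h$ is no worse than the zero perturbation we get $\E[d^{\pi^*}_h]{b^{\pi^*}_h}\le 0$, and combined with the uniform envelope above this bias collapses to $\sum_h\prn{2\r_{\eta,h}\sqrt{d}+(1+\r_{\eta,h})\r_{\xi,h}}$, up to the bookkeeping constants stated in the theorem.

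Next I would feed the NPG term into the online-learning analysis of \Cref{sec:appx-NPG}: at every $(h,s)$ the iterates $\pi^k_h(\cdot\mid s)$ are exponentiated-gradient updates against the gain vectors $\Qrlx^{\pi^k}_h(s,\cdot)$, whose range I bound by $H$ (using \Cref{thm:3-value_relation}). The standard mirror-descent regret bound then yields $\sum_{k=1}^{K}\angl{\pi^*_h(\cdot\mid s)-\pi^k_h(\cdot\mid s),\Qrlx^{\pi^k}_h(s,\cdot)}\le \frac{\log A}{\alpha}+\frac{\alpha K H^2}{2}$ at each $(h,s)$. Averaging over $k$, summing over $h$ (the weights $d^{\pi^*}_h$ being convex combinations), and substituting $\alpha=\sqrt{2\log A/(KH^2)}$ collapses the NPG contribution to $\sqrt{2H^4\log A/K}$; adding the bias term then completes the proof.

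The main obstacle is the first (bias) difference: because the surrogate reward $b^{\pi}_h$ depends on the policy being evaluated (and, through $\bomega_h$, recursively on $\Vrlx^{\pi}_{h+1}$), $\pi^*$ and $\pi^k$ live in slightly different MDPs, so the performance difference lemma cannot be applied to the raw difference $\Vrlx^{\pi^*}_1-\Vrlx^{\pi^k}_1$ directly. The insertion of $\Vref$ is precisely what quarantines this discrepancy into a single fixed-policy reward-mismatch term; bounding it cleanly needs both the sign control from the optimality of $\pi^*$'s perturbation and the uniform envelope on $b^{\pi^k}_h$, and tracking how $\norm{\bomega_h}$ and the $Q$-range enter is what pins down the final constants (a cruder bound $\abs{b^{\pi^*}_h}+\abs{b^{\pi^k}_h}$ with the looser envelope $\norm{\bomega_h}\le 3\sqrt{d}$ already reaches $6\r_{\eta,h}\sqrt{d}+2(1+\r_{\eta,h})\r_{\xi,h}$). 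A secondary check is verifying that the gains fed to mirror descent really have range $H$, rather than $H$ plus perturbation terms, so that the leading $\sqrt{\cdot}$ term carries no dependence on $\r_{\xi},\r_{\eta}$.
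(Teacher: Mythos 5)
Your proposal is correct and reaches the stated bound by a genuinely different route from the paper. The paper proceeds through three lemmas: a quasi-contraction property of $\Brlx^{\pi}_h$ that holds in expectation over $d^{\pi}_h$ with additive error $2\r_{\eta,h}\sqrt{d}$ (\Cref{thm:5-contraction}), an extended performance difference lemma built on it (\Cref{thm:5-performance_difference}), and a pointwise bound $2\r_{\xi,h}(1+\r_{\eta,h})+4\r_{\eta,h}\sqrt{d}$ on the operator mismatch $[\Brlx^{\pi}_h V]-[\Brlx^{\pi'}_h V]$ (\Cref{thm:C-3-operator_difference}); the theorem's $6\r_{\eta,h}\sqrt{d}$ is the sum of these two error contributions. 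You bypass both key lemmas: identifying $\Vrlx^{\pi}$ with a nominal-kernel value under the shifted reward $r^{\circ}_h+b^{\pi}_h$ and inserting the reference value $\Vref$ makes your recursion the \emph{exact} performance difference lemma inside one fixed MDP, so no quasi-contraction error accrues per step, and all perturbation effects are quarantined in the reward mismatch $\sum_h\E[(s,a)\sim d^{\pi^*}_h]{b^{\pi^*}_h-b^{\pi^k}_h}$. There, optimality of $\pirlx^*$'s perturbation against $(\bm{0},\bm{0})\in\widehat{\M}_h$ kills the first term in sign --- this works precisely because the mismatch is weighted by the same nominal occupancy $d^{\pi^*}_h$ that defines $\pirlx^*$'s robust update in \eqref{eq:3-effective_robust_update} --- and the envelope bounds the second, giving per-step bias $(1+\r_{\eta,h})\r_{\xi,h}+2\r_{\eta,h}\sqrt{d}$, strictly sharper than the paper's constant, which is obtained by Cauchy--Schwarz on \emph{both} policies' perturbations. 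The NPG regret machinery and the final assembly (max over $(h,s)$, choice of $\alpha$) are identical to the paper's use of \Cref{thm:C-2-expert_regret_RL}. What the paper's route buys is modularity (reusable contraction-type and operator-difference lemmas); what yours buys is a shorter argument with tighter constants. One shared caveat: the bound $\norm{\bomega_h}\le 2\sqrt{d}$ invokes \Cref{assm:2-low-rank_MDP_bound} for $V$ ranging in $[0,H]$, yet neither your proof nor the paper's verifies $\Vrlx^{\pi}_{h+1}\ge 0$ (and \Cref{thm:3-value_relation} supplies only the upper bound), so you are no less rigorous than the paper on this point.
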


\paragraph{Proof sketch.} We use a similar proof structure as in \citet{liu2023optimistic}, which consists of three main steps.

\textit{Step 1: Quasi-contraction property.} The first key observation is that the new robust Bellman operator satisfies a \textit{quasi}-contraction property, which is similar to the contraction of nominal and standard robust Bellman operators \citep{iyengar2005robust}, but is also different in a way that it only holds in expectation over $d^{\pi}_h$.

\begin{lemma}\label{thm:5-contraction}
  For any $V$-functions $V, V': \S \to \R$ and any policy $\pi$, we have
  \begin{equation}
    \E[(s,a) \sim d^{\pi}_h]{[\Brlx^{\pi}_h V](s,a) - [\Brlx^{\pi}_h V'](s,a)}
    \leq \E[s' \sim \rho^{\pi}_{h+1}]{V(s') - V'(s')} + 2 \r_{\eta,h} \sqrt{d}.
  \end{equation}
\end{lemma}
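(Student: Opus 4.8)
The plan is to exploit the defining feature of the new operator, namely that its perturbation $(\bxi^*_h, \bm{\eta}^*_h)$ is computed once from the \emph{averaged} feature and applied uniformly to every $(s,a)$, so that averaging $\Brlx^{\pi}_h$ over $d^{\pi}_h$ reproduces exactly the optimization objective in \eqref{eq:3-effective_robust_update}. Concretely, write $\bar{\bphi}_h := \E[(s,a) \sim d^{\pi}_h]{\bphi^{\circ}_h(s,a)}$ and, for a value function $W$, let $\bomega_h^W := \bnu^{\circ}_h + \sum_{s'} W(s') \bmu^{\circ}_h(s')$ with $(\bxi^{*,W}_h, \bm{\eta}^{*,W}_h)$ the corresponding minimizer of \eqref{eq:3-effective_robust_update}. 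Since $\bm{\eta}^{*,W}_h$, $\bxi^{*,W}_h$ and $\bomega_h^W$ do not depend on $(s,a)$, linearity of expectation gives
\begin{equation*}
  \E[(s,a) \sim d^{\pi}_h]{[\Brlx^{\pi}_h W](s,a)} = \angl{\bar{\bphi}_h + \bm{\eta}^{*,W}_h, \bomega_h^W + \bxi^{*,W}_h} = \min_{(\bxi,\bm{\eta}) \in \widehat{\M}_h} \angl{\bar{\bphi}_h + \bm{\eta}, \bomega_h^W + \bxi} =: g_W.
\end{equation*}
Hence the left-hand side of the lemma equals $g_V - g_{V'}$, and the whole problem reduces to bounding a difference of two minimized inner products that share the same constraint set and differ only in the $\bomega$-vector.

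Next I would control $g_V - g_{V'}$ by a standard suboptimality (envelope) argument: the minimizer $(\bxi^{*,V'}_h, \bm{\eta}^{*,V'}_h)$ of the $V'$-problem is feasible for the $V$-problem, so $g_V \leq \angl{\bar{\bphi}_h + \bm{\eta}^{*,V'}_h, \bomega_h^V + \bxi^{*,V'}_h}$, while $g_{V'}$ equals the same expression with $\bomega_h^{V'}$ in place of $\bomega_h^V$. Subtracting, the $\bxi^{*,V'}_h$ terms cancel and I am left with
\begin{equation*}
  g_V - g_{V'} \leq \angl{\bar{\bphi}_h + \bm{\eta}^{*,V'}_h, \bomega_h^V - \bomega_h^{V'}} = \angl[\Big]{\bar{\bphi}_h + \bm{\eta}^{*,V'}_h, \textstyle\sum_{s'} (V(s') - V'(s')) \bmu^{\circ}_h(s')},
\end{equation*}
using $\bomega_h^V - \bomega_h^{V'} = \sum_{s'}(V(s')-V'(s'))\bmu^{\circ}_h(s')$. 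I would then split this inner product into a $\bar{\bphi}_h$-part and an $\bm{\eta}^{*,V'}_h$-part.

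For the $\bar{\bphi}_h$-part, the key identity is the occupancy-measure propagation $\angl{\bar{\bphi}_h, \bmu^{\circ}_h(s')} = \sum_{s,a} d^{\pi}_h(s,a) \P^{\circ}_h(s'|s,a) = \rho^{\pi}_{h+1}(s')$, which turns it into exactly $\E[s' \sim \rho^{\pi}_{h+1}]{V(s') - V'(s')}$, the first term on the right-hand side of the lemma. For the $\bm{\eta}^{*,V'}_h$-part, I would apply Cauchy--Schwarz together with feasibility $\norm{\bm{\eta}^{*,V'}_h} \leq \r_{\eta,h}$ and the triangle inequality, so that (assuming, as is implicit for value functions, $V, V' : \S \to [0,H]$) \Cref{assm:2-low-rank_MDP_bound} bounds $\norm{\sum_{s'} V(s')\bmu^{\circ}_h(s')} + \norm{\sum_{s'} V'(s')\bmu^{\circ}_h(s')} \leq 2\sqrt{d}$; this yields the residual $2\r_{\eta,h}\sqrt{d}$. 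Adding the two parts gives the claim.

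The main conceptual step --- and the one most prone to error --- is the first one: recognizing that averaging the Bellman update over $d^{\pi}_h$ collapses to the optimization value $g_W$. This is precisely why contraction holds only \emph{in expectation} (a single global perturbation cannot be pointwise-optimal), and it is the feature that distinguishes this quasi-contraction from the pointwise contraction of nominal or $(s,a)$-rectangular robust operators; the remaining steps (suboptimality, occupancy propagation, Cauchy--Schwarz) are routine.
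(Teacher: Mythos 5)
Your proposal is correct and follows essentially the same route as the paper's proof: the same observation that averaging $\Brlx^{\pi}_h$ over $d^{\pi}_h$ recovers the minimized objective of \eqref{eq:3-effective_robust_update}, the same suboptimality argument substituting the $V'$-minimizer into the $V$-problem so the $\bxi$-terms cancel, the same occupancy-measure identity for the $\bar{\bphi}_h$-part, and the same Cauchy--Schwarz bound $\norm{\bm{\eta}^{*,V'}_h}\,\norm{\bomega_h^V - \bomega_h^{V'}} \leq 2\r_{\eta,h}\sqrt{d}$ for the residual. Your explicit remark that $V, V' \in [0,H]$ is needed to invoke \Cref{assm:2-low-rank_MDP_bound} is a point the paper glosses over, but it does not change the argument.
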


\textit{Step 2: Extended performance difference lemma.} We proceed to prove an extended Performance Difference Lemma that is similar to its counterpart in \citet{efroni2020optimistic}, which builds upon the quasi-contraction property.

\begin{lemma}\label{thm:5-performance_difference}
  For any policies $\pi$ and $\pi'$, we have
  \begin{align}\label{eq:5-performance_difference:e1}
    &\Vrlx_1^{\pi}(\rho) - \Vrlx_1^{\pi'}(\rho)
    \leq \sum_{h=1}^{H} \mathbb{E}_{\pi, \P^{\circ}} \Bigl[ \angl*{\Qrlx_h^{\pi'}(s_h,\cdot), \pi_h(\cdot | s_h) - \pi'_h(\cdot | s_h)} \nonumber\\
    &\hspace{9em} {}+ \angl*{[\Brlx^{\pi}_h \Vrlx_{h+1}^{\pi'}](s_h,\cdot) - \Qrlx_h^{\pi'}(s_h,\cdot), \pi_h(\cdot | s_h)} \Bigr] + 2 \sum_{h=1}^{H} \r_{\eta,h} \sqrt{d}.
  \end{align}
\end{lemma}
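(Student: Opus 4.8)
The plan is to prove the inequality by a backward telescoping recursion on the robust analogue of the object appearing in the ordinary performance difference lemma, namely
\[
  G_h := \mathbb{E}_{s_h \sim \rho^{\pi}_h}\bigl[\Vrlx^{\pi}_h(s_h) - \Vrlx^{\pi'}_h(s_h)\bigr].
\]
Since $\rho^{\pi}_1 = \rho$ and $\Vrlx^{\pi}_{H+1} \equiv \Vrlx^{\pi'}_{H+1} \equiv 0$, we have $G_1 = \Vrlx^{\pi}_1(\rho) - \Vrlx^{\pi'}_1(\rho)$ (the left-hand side) and $G_{H+1} = 0$, so it suffices to control each one-step increment.

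For the per-step decomposition I would use $\Vrlx^{\pi}_h(s) = \angl*{\pi_h(\cdot|s), \Qrlx^{\pi}_h(s,\cdot)}$ together with the same identity for $\pi'$, and insert the hybrid term $\angl*{\pi_h(\cdot|s), \Qrlx^{\pi'}_h(s,\cdot)}$ to write
\[
  \Vrlx^{\pi}_h(s) - \Vrlx^{\pi'}_h(s) = \angl*{\pi_h(\cdot|s), \Qrlx^{\pi}_h(s,\cdot) - \Qrlx^{\pi'}_h(s,\cdot)} + \angl*{\pi_h(\cdot|s) - \pi'_h(\cdot|s), \Qrlx^{\pi'}_h(s,\cdot)}.
\]
The second term is already the advantage term in the statement. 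For the first term I would recall $\Qrlx^{\pi}_h = [\Brlx^{\pi}_h \Vrlx^{\pi}_{h+1}]$ and $\Qrlx^{\pi'}_h = [\Brlx^{\pi'}_h \Vrlx^{\pi'}_{h+1}]$, then insert the hybrid $[\Brlx^{\pi}_h \Vrlx^{\pi'}_{h+1}]$ to split
\[
  \Qrlx^{\pi}_h - \Qrlx^{\pi'}_h = \underbrace{\bigl([\Brlx^{\pi}_h \Vrlx^{\pi}_{h+1}] - [\Brlx^{\pi}_h \Vrlx^{\pi'}_{h+1}]\bigr)}_{\text{same operator, different values}} + \underbrace{\bigl([\Brlx^{\pi}_h \Vrlx^{\pi'}_{h+1}] - \Qrlx^{\pi'}_h\bigr)}_{\text{operator mismatch}}.
\]
Paired against $\pi_h(\cdot|s)$ and taken in expectation over $s \sim \rho^{\pi}_h$, the operator-mismatch piece is exactly the second bracketed term in the statement; here I would use that $s_h$ under $\pi$ in the nominal model is distributed as $\rho^{\pi}_h$, and that $d^{\pi}_h(s,a) = \rho^{\pi}_h(s)\pi_h(a|s)$ collapses the $\pi_h$-weighted inner products into the trajectory expectation $\mathbb{E}_{\pi,\P^{\circ}}$.

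The key step is to control the ``same operator, different values'' piece via the quasi-contraction of \Cref{thm:5-contraction}. Collapsing it into a $d^{\pi}_h$-expectation and applying the lemma with $V = \Vrlx^{\pi}_{h+1}$, $V' = \Vrlx^{\pi'}_{h+1}$ gives
\[
  \mathbb{E}_{(s_h,a_h)\sim d^{\pi}_h}\bigl[[\Brlx^{\pi}_h \Vrlx^{\pi}_{h+1}](s_h,a_h) - [\Brlx^{\pi}_h \Vrlx^{\pi'}_{h+1}](s_h,a_h)\bigr] \leq G_{h+1} + 2\r_{\eta,h}\sqrt{d}.
\]
Assembling the three pieces yields the one-step bound $G_h \leq G_{h+1} + 2\r_{\eta,h}\sqrt{d} + (\text{the two bracketed terms at step } h)$, and summing over $h = 1,\dots,H$ with $G_{H+1} = 0$ telescopes to the claimed inequality.

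The main obstacle is conceptual rather than computational: the recursion closes only because the troublesome term is the difference of the \emph{same} policy-dependent operator $\Brlx^{\pi}_h$ applied to two value functions, which is precisely the object that \Cref{thm:5-contraction} controls and which reproduces $G_{h+1}$ one level down. This is what forces the hybrid insertion of $[\Brlx^{\pi}_h \Vrlx^{\pi'}_{h+1}]$ (rather than $[\Brlx^{\pi'}_h \Vrlx^{\pi}_{h+1}]$), and it is also why the conclusion is an inequality rather than an equality: the quasi-contraction is only an upper bound, valid solely in $d^{\pi}_h$-expectation, and it carries the additive slack $2\r_{\eta,h}\sqrt{d}$. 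I would carefully track the direction of every bound so that the accumulated slack $2\sum_{h=1}^{H}\r_{\eta,h}\sqrt{d}$ lands on the correct (upper) side of the final estimate.
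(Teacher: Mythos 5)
Your proposal is correct and follows essentially the same route as the paper's proof: the same two hybrid insertions (first $\angl*{\Qrlx^{\pi'}_h(s,\cdot), \pi_h(\cdot|s)}$ to isolate the advantage term, then $[\Brlx^{\pi}_h \Vrlx^{\pi'}_{h+1}]$ to isolate the operator mismatch), followed by collapsing the same-operator difference into a $d^{\pi}_h$-expectation, applying the quasi-contraction of \Cref{thm:5-contraction}, and telescoping over $h$ with $\Vrlx^{\pi}_{H+1} \equiv \Vrlx^{\pi'}_{H+1} \equiv 0$. Your closing remarks about why the recursion closes and why the slack $2\sum_h \r_{\eta,h}\sqrt{d}$ accumulates on the upper side match the paper's reasoning exactly.
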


\textit{Step 3: Plugging in the upper bounds.} To complete the proof, we average \eqref{eq:5-performance_difference:e1} over the $K$ policies obtained in $K$ episodes. To bound the first term in the expectation in \eqref{eq:5-performance_difference:e1}, we plug in the regret bound for NPG (\Cref{thm:C-2-expert_regret_RL}); the second term can be bounded by a technical lemma (\Cref{thm:C-3-operator_difference}) that characterizes the difference between robust Bellman updates with respect to different policies.

Details of the proof are deferred to \Cref{sec:appx-C-proof}.
  \section{Numerical Simulation}

We study the numerical performance of our R\tsup{2}PG algorithm via simulations on a toy model. The setup is illustrated in \Cref{fig:6-simulation_MDP} (details deferred to \Cref{sec:appx-D-simulation}), where at each step the agent is allowed to stay unmoved or move to the adjacent states. Here $s_1$ is the 0-reward state to be avoided, $s_2$ and $s_4$ are higher-reward states subject to risk after perturbation, and $s_3$ is the lower-reward safe state. Suppose that all transition probabilities are subject to uncertainty $\delta$.

\begin{figure}[ht]
  \centering
  \begin{subfigure}[b]{0.36\linewidth}
    \centering
      \tikzset{snode/.style = {draw=black, shape=circle, line width=1.0pt, inner sep=0pt, minimum width=16pt}}
  \tikzset{apath/.style = {draw=black, line width=0.5pt, ->, >={Stealth}}}
  \begin{tikzpicture}
    \node[snode, fill=red!30!white] (s1) at (0pt, 20pt) {$s_1$};
    \node[snode, fill=yellow!30!white] (s2) at (20pt, 0pt) {$s_2$};
    \node[snode, fill=green!30!white] (s3) at (0pt, -20pt) {$s_3$};
    \node[snode, fill=yellow!30!white] (s4) at (-20pt, 0pt) {$s_4$};

    \draw[apath] (s1) edge[bend left] (s2);
    \draw[apath] (s2) edge[bend left] (s3);
    \draw[apath] (s3) edge[bend left] (s4);
    \draw[apath] (s4) edge[bend left] (s1);

    \draw[apath] (s2) edge[bend left] (s1);
    \draw[apath] (s3) edge[bend left] (s2);
    \draw[apath] (s4) edge[bend left] (s3);
    \draw[apath] (s1) edge[bend left] (s4);
    
    \draw[apath] (s1) edge[loop above] (s1);
    \draw[apath] (s2) edge[loop right] (s2);
    \draw[apath] (s3) edge[loop below] (s3);
    \draw[apath] (s4) edge[loop left] (s4);
  \end{tikzpicture}
    \caption{Setup.}\label{fig:6-simulation_MDP}
  \end{subfigure}
  \begin{subfigure}[b]{0.6\linewidth}
    \centering
    \includegraphics[width=0.56\linewidth]{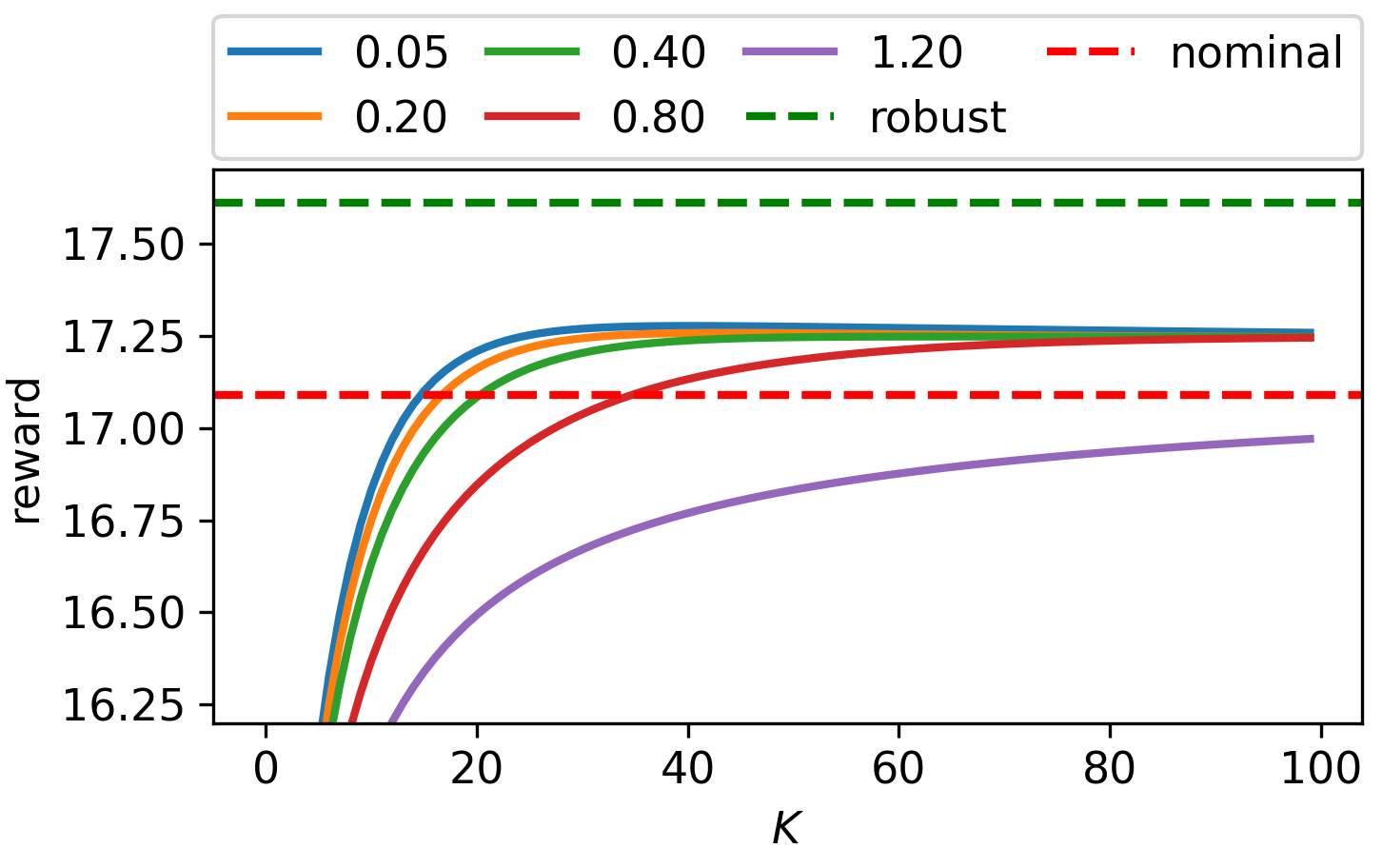}
    \caption{Policy improvement over time.}\label{fig:6-simulation_results}
  \end{subfigure}
  \vspace{-2mm}
  \caption{Numerical simulations in a toy model. }
\end{figure}

The R\tsup{2}PG algorithm is run with different perturbation radii, and the policies obtained in all the episodes are evaluated by the \textit{minimum} cumulative reward evaluated in a few perturbed MDPs. Simulation results for $R_{\eta,h} = 0.01$ and $R_{\xi,h} \in \set{0.05,0.2,0.4,0.8,1.2}$ are plotted in \Cref{fig:6-simulation_results}. It can be observed that policies tend to converge in all executions, and a larger perturbation radius generally leads to more conservative behavior. This phenomenon is largely expected in that, as perturbation radius increases, the misspecification error induced by the worst-case pseudo-MDP also increases, which leads to an intrinsically pessimistic estimation of policy values. However, the output policies still perform better than the nominal optimal policy when the MDP is appropriately perturbed, highlighting again the need for robustness in environments with uncertainty.

Analyzing the output policies in details, we shall further find that, as time elapses, all output policies gradually lean towards the safe state by increasing the transition probabilities to it. However, since the D\tsup{2}PG algorithm is designed to optimize over the average performance for policy evaluation, it is also reasonable that it does not fully converge to a policy that yields optimal worst-case performance.

Details of the simulation can be found in \Cref{sec:appx-D-simulation}.
  \vspace{-6pt}
\section{Conclusion}
\vspace{-3pt}

In this paper, we propose a novel robustness concept based on $(\xi, \eta)$-rectangularity, which achieves efficient dual perturbation robustness in low-rank MDPs. The new robustness concept features computational efficiency, scalability and compatability with low-rank representation structure. Based on the new robustness concept, we design an algorithm (D\tsup{2}PG) to solve the proposed robust low-rank MDP that provably converges to the optimal robust policy with bounded suboptimality gap.

Future work includes designing algorithms that solve for the robust policy in an asymptotically accurate and/or more computationally efficient way, incorporating sample-based methods to estimate the nominal MDP and use the estimated MDP to generate robust policies, and further, discovering other robustness concepts that are compatible with different low-rankness concepts for more scalable robust RL.


  \bibliography{biblio}

\begin{thebibliography}{49}
\providecommand{\natexlab}[1]{#1}
\providecommand{\url}[1]{\texttt{#1}}
\expandafter\ifx\csname urlstyle\endcsname\relax
  \providecommand{\doi}[1]{doi: #1}\else
  \providecommand{\doi}{doi: \begingroup \urlstyle{rm}\Url}\fi

\bibitem[Mnih et~al.(2015)Mnih, Kavukcuoglu, Silver, Rusu, Veness, Bellemare,
  Graves, Riedmiller, Fidjeland, Ostrovski, et~al.]{mnih2015human}
Volodymyr Mnih, Koray Kavukcuoglu, David Silver, Andrei~A Rusu, Joel Veness,
  Marc~G Bellemare, Alex Graves, Martin Riedmiller, Andreas~K Fidjeland, Georg
  Ostrovski, et~al.
\newblock Human-level control through deep reinforcement learning.
\newblock \emph{nature}, 518\penalty0 (7540):\penalty0 529--533, 2015.

\bibitem[Silver et~al.(2017)Silver, Schrittwieser, Simonyan, Antonoglou, Huang,
  Guez, Hubert, Baker, Lai, Bolton, et~al.]{silver2017mastering}
David Silver, Julian Schrittwieser, Karen Simonyan, Ioannis Antonoglou, Aja
  Huang, Arthur Guez, Thomas Hubert, Lucas Baker, Matthew Lai, Adrian Bolton,
  et~al.
\newblock Mastering the game of go without human knowledge.
\newblock \emph{nature}, 550\penalty0 (7676):\penalty0 354--359, 2017.

\bibitem[Vinyals et~al.(2019)Vinyals, Babuschkin, Czarnecki, Mathieu, Dudzik,
  Chung, Choi, Powell, Ewalds, Georgiev, et~al.]{vinyals2019grandmaster}
Oriol Vinyals, Igor Babuschkin, Wojciech~M Czarnecki, Micha{\"e}l Mathieu,
  Andrew Dudzik, Junyoung Chung, David~H Choi, Richard Powell, Timo Ewalds,
  Petko Georgiev, et~al.
\newblock Grandmaster level in starcraft ii using multi-agent reinforcement
  learning.
\newblock \emph{Nature}, 575\penalty0 (7782):\penalty0 350--354, 2019.

\bibitem[Levine et~al.(2016)Levine, Finn, Darrell, and Abbeel]{levine2016end}
Sergey Levine, Chelsea Finn, Trevor Darrell, and Pieter Abbeel.
\newblock End-to-end training of deep visuomotor policies.
\newblock \emph{The Journal of Machine Learning Research}, 17\penalty0
  (1):\penalty0 1334--1373, 2016.

\bibitem[Dalal et~al.(2021)Dalal, Pathak, and
  Salakhutdinov]{dalal2021accelerating}
Murtaza Dalal, Deepak Pathak, and Russ~R Salakhutdinov.
\newblock Accelerating robotic reinforcement learning via parameterized action
  primitives.
\newblock \emph{Advances in Neural Information Processing Systems},
  34:\penalty0 21847--21859, 2021.

\bibitem[Ziegler et~al.(2019)Ziegler, Stiennon, Wu, Brown, Radford, Amodei,
  Christiano, and Irving]{ziegler2019fine}
Daniel~M Ziegler, Nisan Stiennon, Jeffrey Wu, Tom~B Brown, Alec Radford, Dario
  Amodei, Paul Christiano, and Geoffrey Irving.
\newblock Fine-tuning language models from human preferences.
\newblock \emph{arXiv preprint arXiv:1909.08593}, 2019.

\bibitem[Ouyang et~al.(2022)Ouyang, Wu, Jiang, Almeida, Wainwright, Mishkin,
  Zhang, Agarwal, Slama, Ray, et~al.]{ouyang2022training}
Long Ouyang, Jeffrey Wu, Xu~Jiang, Diogo Almeida, Carroll Wainwright, Pamela
  Mishkin, Chong Zhang, Sandhini Agarwal, Katarina Slama, Alex Ray, et~al.
\newblock Training language models to follow instructions with human feedback.
\newblock \emph{Advances in Neural Information Processing Systems},
  35:\penalty0 27730--27744, 2022.

\bibitem[Peng et~al.(2018)Peng, Andrychowicz, Zaremba, and Abbeel]{peng2018sim}
Xue~Bin Peng, Marcin Andrychowicz, Wojciech Zaremba, and Pieter Abbeel.
\newblock Sim-to-real transfer of robotic control with dynamics randomization.
\newblock In \emph{2018 IEEE international conference on robotics and
  automation (ICRA)}, pages 3803--3810. IEEE, 2018.

\bibitem[Salvato et~al.(2021)Salvato, Fenu, Medvet, and
  Pellegrino]{salvato2021crossing}
Erica Salvato, Gianfranco Fenu, Eric Medvet, and Felice~Andrea Pellegrino.
\newblock Crossing the reality gap: A survey on sim-to-real transferability of
  robot controllers in reinforcement learning.
\newblock \emph{IEEE Access}, 9:\penalty0 153171--153187, 2021.

\bibitem[Iyengar(2005)]{iyengar2005robust}
Garud~N Iyengar.
\newblock Robust dynamic programming.
\newblock \emph{Mathematics of Operations Research}, 30\penalty0 (2):\penalty0
  257--280, 2005.

\bibitem[El~Ghaoui and Nilim(2005)]{el2005robust}
Laurent El~Ghaoui and Arnab Nilim.
\newblock Robust solutions to {M}arkov decision problems with uncertain
  transition matrices.
\newblock \emph{Operations Research}, 53\penalty0 (5):\penalty0 780--798, 2005.

\bibitem[Goyal and Grand-Clement(2023)]{goyal2023robust}
Vineet Goyal and Julien Grand-Clement.
\newblock Robust {M}arkov decision processes: Beyond rectangularity.
\newblock \emph{Mathematics of Operations Research}, 48\penalty0 (1):\penalty0
  203--226, 2023.

\bibitem[Ho et~al.(2018)Ho, Petrik, and Wiesemann]{ho2018fast}
Chin~Pang Ho, Marek Petrik, and Wolfram Wiesemann.
\newblock Fast {B}ellman updates for robust {MDP}s.
\newblock In \emph{International Conference on Machine Learning}, pages
  1979--1988. PMLR, 2018.

\bibitem[Ho et~al.(2021)Ho, Petrik, and Wiesemann]{ho2021partial}
Chin~Pang Ho, Marek Petrik, and Wolfram Wiesemann.
\newblock Partial policy iteration for $\ell_1$-robust {M}arkov decision
  processes.
\newblock \emph{The Journal of Machine Learning Research}, 22\penalty0
  (1):\penalty0 12612--12657, 2021.

\bibitem[Zhou et~al.(2021)Zhou, Zhou, Bai, Qiu, Blanchet, and
  Glynn]{zhou2021finite}
Zhengqing Zhou, Zhengyuan Zhou, Qinxun Bai, Linhai Qiu, Jose Blanchet, and
  Peter Glynn.
\newblock Finite-sample regret bound for distributionally robust offline
  tabular reinforcement learning.
\newblock In \emph{International Conference on Artificial Intelligence and
  Statistics}, pages 3331--3339. PMLR, 2021.

\bibitem[Yang et~al.(2022)Yang, Zhang, and Zhang]{yang2022toward}
Wenhao Yang, Liangyu Zhang, and Zhihua Zhang.
\newblock Toward theoretical understandings of robust {M}arkov decision
  processes: Sample complexity and asymptotics.
\newblock \emph{The Annals of Statistics}, 50\penalty0 (6):\penalty0
  3223--3248, 2022.

\bibitem[Blanchet et~al.(2023)Blanchet, Lu, Zhang, and
  Zhong]{blanchet2023double}
Jose Blanchet, Miao Lu, Tong Zhang, and Han Zhong.
\newblock Double pessimism is provably efficient for distributionally robust
  offline reinforcement learning: Generic algorithm and robust partial
  coverage.
\newblock \emph{arXiv preprint arXiv:2305.09659}, 2023.

\bibitem[Jin et~al.(2020)Jin, Yang, Wang, and Jordan]{jin2019LSVI-UCB}
Chi Jin, Zhuoran Yang, Zhaoran Wang, and Michael~I Jordan.
\newblock Provably efficient reinforcement learning with linear function
  approximation.
\newblock In \emph{Conference on Learning Theory}, pages 2137--2143. PMLR,
  2020.

\bibitem[Yang and Wang(2020)]{yang2020reinforcement}
Lin Yang and Mengdi Wang.
\newblock Reinforcement learning in feature space: Matrix bandit, kernels, and
  regret bound.
\newblock In \emph{International Conference on Machine Learning}, pages
  10746--10756. PMLR, 2020.

\bibitem[Ren et~al.(2022{\natexlab{a}})Ren, Zhang, Szepesv{\'a}ri, and
  Dai]{ren2022free}
Tongzheng Ren, Tianjun Zhang, Csaba Szepesv{\'a}ri, and Bo~Dai.
\newblock A free lunch from the noise: Provable and practical exploration for
  representation learning.
\newblock In \emph{Uncertainty in Artificial Intelligence}, pages 1686--1696.
  PMLR, 2022{\natexlab{a}}.

\bibitem[Ren et~al.(2023)Ren, Ren, Li, and Dai]{ren2023stochastic}
Tongzheng Ren, Zhaolin Ren, Na~Li, and Bo~Dai.
\newblock Stochastic nonlinear control via finite-dimensional spectral dynamic
  embedding.
\newblock \emph{arXiv preprint arXiv:2304.03907}, 2023.

\bibitem[Agarwal et~al.(2020)Agarwal, Kakade, Krishnamurthy, and
  Sun]{agarwal2020flambe}
Alekh Agarwal, Sham Kakade, Akshay Krishnamurthy, and Wen Sun.
\newblock Flambe: Structural complexity and representation learning of low rank
  mdps.
\newblock \emph{Advances in neural information processing systems},
  33:\penalty0 20095--20107, 2020.

\bibitem[Uehara et~al.(2021)Uehara, Zhang, and Sun]{uehara2021representation}
Masatoshi Uehara, Xuezhou Zhang, and Wen Sun.
\newblock Representation learning for online and offline {RL} in low-rank
  {MDP}s.
\newblock \emph{arXiv preprint arXiv:2110.04652}, 2021.

\bibitem[Tamar et~al.(2014)Tamar, Mannor, and Xu]{tamar2014scaling}
Aviv Tamar, Shie Mannor, and Huan Xu.
\newblock Scaling up robust {MDP}s using function approximation.
\newblock In \emph{International conference on machine learning}, pages
  181--189. PMLR, 2014.

\bibitem[Badrinath and Kalathil(2021)]{badrinath2021robust}
Kishan~Panaganti Badrinath and Dileep Kalathil.
\newblock Robust reinforcement learning using least squares policy iteration
  with provable performance guarantees.
\newblock In \emph{International Conference on Machine Learning}, pages
  511--520. PMLR, 2021.

\bibitem[Ma et~al.(2023)Ma, Liang, Blanchet, Liu, Xia, Zhang, Zhao, and
  Zhou]{ma2023distributionally}
Xiaoteng Ma, Zhipeng Liang, Jose Blanchet, Mingwen Liu, Li~Xia, Jiheng Zhang,
  Qianchuan Zhao, and Zhengyuan Zhou.
\newblock Distributionally robust offline reinforcement learning with linear
  function approximation.
\newblock \emph{{arXiv preprint arXiv}: 2209.06620}, 2023.

\bibitem[Ren et~al.(2022{\natexlab{b}})Ren, Xiao, Zhang, Li, Wang, Sanghavi,
  Schuurmans, and Dai]{ren2022latent}
Tongzheng Ren, Chenjun Xiao, Tianjun Zhang, Na~Li, Zhaoran Wang, Sujay
  Sanghavi, Dale Schuurmans, and Bo~Dai.
\newblock Latent variable representation for reinforcement learning.
\newblock \emph{arXiv preprint arXiv:2212.08765}, 2022{\natexlab{b}}.

\bibitem[Silver(1963)]{silver1963markovian}
Edward~Allan Silver.
\newblock \emph{Markovian decision processes with uncertain transition
  probabilities or rewards}.
\newblock PhD thesis, Massachusetts Institute of Technology, Dept. of Civil
  Engineering, 1963.

\bibitem[Satia and Lave~Jr(1973)]{satia1973markovian}
Jay~K Satia and Roy~E Lave~Jr.
\newblock Markovian decision processes with uncertain transition probabilities.
\newblock \emph{Operations Research}, 21\penalty0 (3):\penalty0 728--740, 1973.

\bibitem[Wiesemann et~al.(2013)Wiesemann, Kuhn, and
  Rustem]{wiesemann2013robust}
Wolfram Wiesemann, Daniel Kuhn, and Ber{\c{c}} Rustem.
\newblock Robust {M}arkov decision processes.
\newblock \emph{Mathematics of Operations Research}, 38\penalty0 (1):\penalty0
  153--183, 2013.

\bibitem[Rajeswaran et~al.(2016)Rajeswaran, Ghotra, Ravindran, and
  Levine]{rajeswaran2016epopt}
Aravind Rajeswaran, Sarvjeet Ghotra, Balaraman Ravindran, and Sergey Levine.
\newblock Epopt: Learning robust neural network policies using model ensembles.
\newblock \emph{arXiv preprint arXiv:1610.01283}, 2016.

\bibitem[Pattanaik et~al.(2017)Pattanaik, Tang, Liu, Bommannan, and
  Chowdhary]{pattanaik2017robust}
Anay Pattanaik, Zhenyi Tang, Shuijing Liu, Gautham Bommannan, and Girish
  Chowdhary.
\newblock Robust deep reinforcement learning with adversarial attacks.
\newblock \emph{arXiv preprint arXiv:1712.03632}, 2017.

\bibitem[Pinto et~al.(2017)Pinto, Davidson, Sukthankar, and
  Gupta]{pinto2017robust}
Lerrel Pinto, James Davidson, Rahul Sukthankar, and Abhinav Gupta.
\newblock Robust adversarial reinforcement learning.
\newblock In \emph{International Conference on Machine Learning}, pages
  2817--2826. PMLR, 2017.

\bibitem[Zhang et~al.(2020)Zhang, Chen, Xiao, Li, Liu, Boning, and
  Hsieh]{zhang2020robust}
Huan Zhang, Hongge Chen, Chaowei Xiao, Bo~Li, Mingyan Liu, Duane Boning, and
  Cho-Jui Hsieh.
\newblock Robust deep reinforcement learning against adversarial perturbations
  on state observations.
\newblock \emph{Advances in Neural Information Processing Systems},
  33:\penalty0 21024--21037, 2020.

\bibitem[Ren et~al.(2022{\natexlab{c}})Ren, Zhang, Lee, Gonzalez, Schuurmans,
  and Dai]{ren2022spectral}
Tongzheng Ren, Tianjun Zhang, Lisa Lee, Joseph~E Gonzalez, Dale Schuurmans, and
  Bo~Dai.
\newblock Spectral decomposition representation for reinforcement learning.
\newblock \emph{arXiv preprint arXiv:2208.09515}, 2022{\natexlab{c}}.

\bibitem[Yao et~al.(2014)Yao, Szepesv{\'a}ri, Pires, and Zhang]{yao2014pseudo}
Hengshuai Yao, Csaba Szepesv{\'a}ri, Bernardo~Avila Pires, and Xinhua Zhang.
\newblock Pseudo-{MDP}s and factored linear action models.
\newblock In \emph{2014 IEEE Symposium on Adaptive Dynamic Programming and
  Reinforcement Learning (ADPRL)}, pages 1--9. IEEE, 2014.

\bibitem[Agarwal et~al.(2021)Agarwal, Kakade, Lee, and
  Mahajan]{agarwal2021theory}
Alekh Agarwal, Sham~M Kakade, Jason~D Lee, and Gaurav Mahajan.
\newblock On the theory of policy gradient methods: Optimality, approximation,
  and distribution shift.
\newblock \emph{The Journal of Machine Learning Research}, 22\penalty0
  (1):\penalty0 4431--4506, 2021.

\bibitem[Cen et~al.(2022)Cen, Cheng, Chen, Wei, and Chi]{cen2022fast}
Shicong Cen, Chen Cheng, Yuxin Chen, Yuting Wei, and Yuejie Chi.
\newblock Fast global convergence of natural policy gradient methods with
  entropy regularization.
\newblock \emph{Operations Research}, 70\penalty0 (4):\penalty0 2563--2578,
  2022.

\bibitem[Mei et~al.(2021)Mei, Gao, Dai, Szepesvari, and
  Schuurmans]{mei2021leveraging}
Jincheng Mei, Yue Gao, Bo~Dai, Csaba Szepesvari, and Dale Schuurmans.
\newblock Leveraging non-uniformity in first-order non-convex optimization.
\newblock In \emph{International Conference on Machine Learning}, pages
  7555--7564. PMLR, 2021.

\bibitem[Orabona(2019)]{orabona2019modern}
Francesco Orabona.
\newblock A modern introduction to online learning.
\newblock \emph{arXiv preprint arXiv:1912.13213}, 2019.

\bibitem[Ai and Zhang(2009)]{ai2009strong}
Wenbao Ai and Shuzhong Zhang.
\newblock Strong duality for the {CDT} subproblem: a necessary and sufficient
  condition.
\newblock \emph{SIAM Journal on Optimization}, 19\penalty0 (4):\penalty0
  1735--1756, 2009.

\bibitem[Cheng and Martins(2021)]{chen2021optimality}
Sheng Cheng and Nuno~C. Martins.
\newblock An optimality gap test for a semidefinite relaxation of a quadratic
  program with two quadratic constraints.
\newblock \emph{SIAM Journal on Optimization}, 31\penalty0 (1):\penalty0
  866–886, January 2021.

\bibitem[Hazan(2008)]{hazan2008sparse}
Elad Hazan.
\newblock Sparse approximate solutions to semidefinite programs.
\newblock In \emph{Latin American symposium on theoretical informatics}, pages
  306--316. Springer, 2008.

\bibitem[G{\"a}rtner and Matousek(2012)]{gartner2012approximation}
Bernd G{\"a}rtner and Jiri Matousek.
\newblock \emph{Approximation algorithms and semidefinite programming}.
\newblock Springer Science \& Business Media, 2012.

\bibitem[Bradtke and Barto(1996)]{bradtke1996linear}
Steven~J Bradtke and Andrew~G Barto.
\newblock Linear least-squares algorithms for temporal difference learning.
\newblock \emph{Machine learning}, 22:\penalty0 33--57, 1996.

\bibitem[Liu et~al.(2023)Liu, Weisz, György, Jin, and
  Szepesvári]{liu2023optimistic}
Qinghua Liu, Gellért Weisz, András György, Chi Jin, and Csaba Szepesvári.
\newblock Optimistic natural policy gradient: A simple efficient policy
  optimization framework for online {RL}.
\newblock \emph{{arXiv preprint arXiv}: 2305.11032}, 2023.

\bibitem[Efroni et~al.(2020)Efroni, Shani, Rosenberg, and
  Mannor]{efroni2020optimistic}
Yonathan Efroni, Lior Shani, Aviv Rosenberg, and Shie Mannor.
\newblock Optimistic policy optimization with bandit feedback.
\newblock \emph{{arXiv preprint arXiv}: 2002.08243}, 2020.

\bibitem[Beck and Eldar(2006)]{beck2006strong}
Amir Beck and Yonina~C Eldar.
\newblock Strong duality in nonconvex quadratic optimization with two quadratic
  constraints.
\newblock \emph{SIAM Journal on optimization}, 17\penalty0 (3):\penalty0
  844--860, 2006.

\bibitem[Devinatz(1953)]{devinatz1953integral}
A~Devinatz.
\newblock Integral representations of positive definite functions.
\newblock \emph{Transactions of the American Mathematical Society}, 74\penalty0
  (1):\penalty0 56--77, 1953.

\end{thebibliography}
  \bibliographystyle{unsrtnat}

  \newpage
  \appendix
  
  \section{Rationale of the Proposed Low-rank Robustness with \titlemath{(\bxi, \bm{\eta})}-Rectangularity}

In this section we present a series of properties and examples to illustrate the rationale of the low-rank robustness concept proposed in this paper.

\subsection{Transformation from \titlemath{(\bphi,\bmu,\bnu)}- to \titlemath{(\bxi,\bm{\eta})}-Rectangular Ambiguity Sets}\label{sec:appx-A-1-transformation}

We start by specifying how to transform between $(\phi,\mu,\nu)$-rectangular ambiguity sets (required by the proposed low-rank robustness concept) and $(\xi,\eta)$-rectangular ambiguity sets (required by the effective robustness concept). Specifying the transformation may also help to promote the understanding of the new robustness concept.

Recall the definition of  $(\phi,\mu,\nu)$-rectangular ambiguity sets in \eqref{eq:3-standard_robust_MDP}. For the sake of transformation, we would like to rewrite the optimization problem \eqref{eq:3-robust_update} in terms of perturbations; i.e., define
\begin{subequations}
\begin{align}
  \bdelta_{\phi, h}(s,a) &:= \bphi_h(s,a)-\bphi_h^{\circ}(s,a),~ \forall (s,a) \in \S \times \A;\\
  \bdelta_{\mu, h}(s') &:= \bmu_h(s')-\bmu_h^{\circ}(s'),~ \forall s' \in \S;\\
  \bdelta_{\nu, h} &:= \bnu_h -\bnu_h^{\circ},
\end{align}
\end{subequations}
so that for any $(s,a) \in \S \times \A$, \eqref{eq:3-robust_update} shall be equivalently rewritten as
\begin{equation}\label{eq:3-robust_update_delta}
  \min_{\bdelta_{\phi,h},\bdelta_{\mu,h}, \bdelta_{\nu,h}} \angl*{\bphi_h^{\circ}(s,a) + \bdelta_{\phi,h}(s,a), \bnu_h^{\circ} + \bdelta_{\nu,h} + \sum_{s'} \Vrob^{\pi}_{h+1}(s') \prn[\big]{\bmu_h^\circ(s') + \bdelta_{\mu,h}(s')}}.
\end{equation}
It is evident that $\bxi_h$ is simply a collection of perturbation terms in the second component of the inner product, namely
\begin{equation}
  \bxi_h := \bdelta_{\nu,h} + \sum_{s'} \Vrob^{\pi}_{h+1}(s') \bdelta_{\mu,h}(s'),~ \forall h \in [H].
\end{equation}
Meanwhile, since in the optimization problem \eqref{eq:3-effective_robust_update} that formulates the effective robustness concept, we only care about the weighted average of \eqref{eq:3-robust_update_delta} over $d^{\pi}_h$ for some policy $\pi$, we shall define
\begin{equation}
  \bm{\eta}_h := \E[(s,a) \sim d^{\pi}_h]{\bdelta_{\phi,h}(s,a)},~ \forall h \in [H].
\end{equation}
In this way, we have recovered the form of \eqref{eq:3-effective_robust_update}.

To determine the range of $\bm{\eta}_h$, note that by linearity of expectation, we always have
\begin{equation}\label{eq:A-1-linearity_of_eta}
  \E[s,a]{\bphi_h^{\circ}(s,a) + \bdelta_{\phi,h}(s,a)}
  = \E[s,a]{\bphi_h^{\circ}(s,a)} + \bm{\eta}_h
  = \E[s,a]{\bphi_h^{\circ}(s,a) + \bm{\eta}_h}.
\end{equation}
Equation \eqref{eq:A-1-linearity_of_eta} can be interpreted in two directions: on the one hand, given perturbations $\bdelta_{\phi,h}(\cdot,\cdot)$ such that $\norm{\bdelta_{\phi,h}(s,a)} \leq \r_{\phi,h}$, we always have $\norm{\bm{\eta}_h} \leq \r_{\phi,h}$; on the other hand, given the averaged perturbation $\bm{\eta}_h$, we shall simply regard each individual perturbation $\bdelta_{\phi,h}(s,a)$ identically as $\bm{\eta}_h$ to recover \textit{one} possible realization of the averaged perturbation. Therefore, if we assume $\bphi_h(\cdot,\cdot)$ are perturbed within radius $\r_{\phi,h}$, $\bm{\eta}_h$ can also be regarded as perturbed within radius $\r_{\phi,h}$.

We would like to point out that a complicated set of implicit constraints should be appended to the $(\xi,\eta)$-rectangular ambiguity set to keep the transformation equivalent, which contradicts with the objective of reducing computational complexity for introducing the $(\xi,\eta)$-ambiguity set. Therefore, we shall always relax the constraints for the $(\xi, \eta)$-ambiguity set after transformation, and use an upper bound for $\r_{\xi,h}$ and $\r_{\eta,h}$, so that all the MDPs represented by the original $(\phi,\mu,\nu)$-ambiguity set are included in the new $(\xi,\eta)$-ambiguity set. For example, when we are given a $V$-function to be updated, we shall regard the $(\xi,\eta)$-rectangular ambiguity set after transformation as specified by the radii
\begin{equation}\label{eq:A-1-ambiguity_radius_transform}
  \r_{\xi,h} = \r_{\nu,h} + \sum_{s'} |V_h(s')| \r_{\mu,h},\quad
  \r_{\eta,h} = \r_{\phi,h}.
\end{equation}
We may further replace each $|V_h(s')|$ with a trivial upper bound $H-h$. The takeaway message here is that we can always select appropriate $\r_{\xi,h}$ and $\r_{\eta,h}$ so that the transformed $(\xi,\eta)$-ambiguity set is a relaxation of the original ambiguity set.

\begin{remark}
  It is evident that some information gets lost when we collect the perturbation terms into $\bxi_h$, in the sense that a ball constraint for $\bxi_h$ might be equivalent to a continuum of different constraints for $(\bnu_h, \bmu_h)$, making it impossible to uniquely determine the inverse transformation.
\end{remark}

\subsection{The Relationship Between Nominal, Standard Robust and the Low-rank Robust Values}\label{sec:appx:A-2-value_relation}

We proceed to study the relationship between the three different Bellman operators and the corresponding value functions. Throughout this section, we will assume that all MDPs specified by the $(\phi,\mu,\nu)$-ambiguity set $\M$ (for the standard robust Bellman operator) are included in the $(\xi,\eta)$-ambiguity set $\widehat{\M}$ (for the effective robust Bellman operator), which is feasible due to the discussion in the previous section (\Cref{sec:appx-A-1-transformation}).

For the sake of clarity, we collect their definitions (with respect to policy $\pi$) here for exposition:
\begin{itemize}
  \item The nominal $V$- and $Q$-functions, i.e. $\Vnom^{\pi}_h(\cdot)$ and $\Qnom^{\pi}_h(\cdot,\cdot)$, are defined in \eqref{eq:2-nominal_value}. The nominal Bellman operator is specified by the Bellman equation (let $\P^{\circ}_h(s' \mid s,a) = \angl{\bphi^{\circ}_h(s,a), \bmu^{\circ}_h(s')}$ and $r^{\circ}_h(s,a) = \angl{\bphi^{\circ}_h(s,a), \bnu^{\circ}_h}$)
  \begin{equation}
    \Qnom^{\pi}_h(s,a) = \E[s' \sim \P^{\circ}_h(\cdot | s, a)]{r^{\circ}_h(s, a) + \Vnom^{\pi}_{h+1}(s')} =: [\Bnom_h \Vnom^{\pi}_{h+1}](s,a).
  \end{equation}

  \item The standard robust $V$-function $\Vrob^{\pi}_h(\cdot)$ is defined in \eqref{eq:3-standard_robust_value}. The corresponding $Q$-function $\Qrob^{\pi}_h(\cdot,\cdot)$ and the standard robust Bellman operator are specified by the robust Bellman equation
  \begin{equation}
    \Qrob^{\pi}_h(s,a) := \min_{\begin{subarray}{c} (\bphi_{1:H}, \bmu_{1:H}, \bnu_{1:H}) \in \M,\\ \mathbb{P}_h = \angl{\bphi_h,\bmu_h}, r_h = \angl{\bphi_h,\bnu_h} \end{subarray}} \E[s' \sim \P_h(\cdot | s, a)]{r_h(s, a) + \Vrob^{\pi}_{h+1}(s')} =: [\Brob_h \Vrob^{\pi}_{h+1}](s,a).
  \end{equation}

  \item The proposed low-rank robust Bellman operator is defined via \eqref{eq:3-effective_robust_update}, \eqref{eq:3-effective_ambiguity_set} and \eqref{eq:3-effective_robust_operator}, and the corresponding robust $V$-function is recursively defined in \eqref{eq:3-effective_robust_value} with $\Vrlx^{\pi}_{H+1}(\cdot) \equiv 0$. The robust $Q$-function is further defined as $\Qrlx^{\pi}_h(s,a) := [\Brlx^{\pi}_h \Vrlx^{\pi}_{h+1}](s,a)$.
\end{itemize}
It is worth pointing out that $\Vnom^{\pi}_h(s) = \angl{\Qnom^{\pi}_h(s,\cdot), \pi_h(\cdot | s)}$, $\Vrob^{\pi}_h(s) = \angl{\Qrob^{\pi}_h(s,\cdot), \pi_h(\cdot | s)}$ and $\Vrlx^{\pi}_h(s) = \angl{\Qrlx^{\pi}_h(s,\cdot), \pi_h(\cdot | s)}$.

\paragraph{Ordinal Relation.} It is intuitive that, for any policy $\pi$, its robust values should be no greater than its nominal value, as the former represents certain kinds of worst-case evaluations within ambiguity sets that include the nominal model used in the latter evaluation; further, its ``relaxed'' robust value should also be no greater than its standard robust value (assuming the transform specified in \eqref{eq:A-1-ambiguity_radius_transform}), since the former ambiguity set is a superset of the latter.

To formally prove the first part of \Cref{thm:3-value_relation}, we start by showing a few useful lemmas capturing the above intuition.

\begin{lemma}\label{thm:A-2-update_ordering}
  For any step $h \in [H]$, any $V$-function $V: \S \to \R$ and any policy $\pi$, we have
  \begin{equation}
    [\Brlx^{\pi}_h V](s,a) \leq [\Brob_h V](s,a) \leq [\Bnom_h V](s,a),~ \forall (s,a) \in \S \times \A.
  \end{equation}
\end{lemma}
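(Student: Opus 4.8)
The plan is to establish the two inequalities separately, each as a single-step statement for the generic $V$ in the hypothesis; no induction on $h$ is needed at this stage, since the lifting to the value functions of \Cref{thm:3-value_relation} will be a later monotonicity argument.

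The right inequality $[\Brob_h V](s,a) \leq [\Bnom_h V](s,a)$ I would get from feasibility of the nominal model. The set $\M_h$ is centered at $(\bphi^{\circ}_h, \bmu^{\circ}_h, \bnu^{\circ}_h)$ with nonnegative radii $\r_{\phi,h}, \r_{\mu,h}, \r_{\nu,h}$, and the nominal model automatically satisfies the validity constraint \eqref{eq:3-standard_robust_MDP:4}; hence the zero-perturbation point lies in $\M_h$. Since $\Brob_h$ is a minimum over $\M_h$, evaluating its objective at the nominal point yields exactly $[\Bnom_h V](s,a) = \langle \bphi^{\circ}_h(s,a), \bomega_h\rangle$, so the minimum is no larger.

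For the left inequality $[\Brlx^{\pi}_h V](s,a) \leq [\Brob_h V](s,a)$ I would pass to perturbation coordinates as in \Cref{sec:appx-A-1-transformation}. Fixing $(s,a)$, only the feature $\bphi_h(s,a)$ enters the objective of $\Brob_h$, so $[\Brob_h V](s,a)$ equals the minimum of $\langle \bphi^{\circ}_h(s,a) + \bdelta_{\phi,h}(s,a),\, \bomega_h + \bxi_h\rangle$ over the $\M_h$-induced perturbations, where $\bxi_h := \bdelta_{\nu,h} + \sum_{s'} V(s')\bdelta_{\mu,h}(s')$. The radius bookkeeping \eqref{eq:A-1-ambiguity_radius_transform} guarantees $\norm{\bxi_h} \leq \r_{\xi,h}$ and $\norm{\bdelta_{\phi,h}(s,a)} \leq \r_{\eta,h}$, so every such pair is feasible for $\widehat{\M}_h$. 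Thus the set over which $\Brob_h$ minimizes at $(s,a)$ is contained in $\widehat{\M}_h$, and minimizing the same bilinear objective $\langle \bphi^{\circ}_h(s,a) + \bm{\eta}, \bomega_h + \bxi\rangle$ over the larger $\widehat{\M}_h$ can only lower the value.

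The delicate point — the step I would scrutinize most — is reconciling this subset argument with the actual definition \eqref{eq:3-effective_robust_operator} of $\Brlx^{\pi}_h$, which does not minimize the pointwise objective at each $(s,a)$, but instead solves the single averaged program \eqref{eq:3-effective_robust_update} over $\E[(s,a)\sim d^{\pi}_h]{\bphi^{\circ}_h(s,a)}$ and then applies the \emph{same} minimizer $(\bxi^*_h, \bm{\eta}^*_h)$ to every $(s,a)$. Because the objective is affine in the feature, the averaged objective equals $\E[(s,a)\sim d^{\pi}_h]{\langle \bphi^{\circ}_h(s,a) + \bm{\eta}, \bomega_h + \bxi\rangle}$, which makes the shared minimizer a natural certificate; the crux is to argue that evaluating this shared $(\bxi^*_h, \bm{\eta}^*_h)$ at an individual $(s,a)$ still yields a value dominated by the per-$(s,a)$ robust optimum. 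I expect this to require the structure of the product-of-balls bilinear minimization (the same structure reduced to an SDP in \Cref{thm:4-2-reduction}) to show the shared perturbation is simultaneously optimal across feature directions, or else to settle for the ordering in the $d^{\pi}_h$-averaged sense that the downstream contraction and performance-difference arguments (\Cref{thm:5-contraction,thm:5-performance_difference}) actually consume.
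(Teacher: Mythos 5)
Your proposal follows the paper's proof essentially step for step: the right inequality comes from feasibility of the nominal model in $\M_h$ (zero perturbation satisfies \eqref{eq:3-standard_robust_MDP:4}, so the minimum over $\M_h$ is at most the nominal value), and the left inequality from transporting the standard-robust minimizer of \eqref{eq:3-robust_update} into $(\bxi,\bm{\eta})$-coordinates and invoking the subset hypothesis, so that minimizing the same bilinear objective over the larger set $\widehat{\M}_h$ can only lower the value. If anything, your certificate is cleaner than the paper's: you keep the pointwise feature perturbation $\bdelta_{\phi,h}(s,a)$, whereas the paper substitutes its $d^{\pi}_h$-average $\bm{\eta}^*_h = \E[(s,a) \sim d^{\pi}_h]{\bphi^*_h(s,a)-\bphi^{\circ}_h(s,a)}$ and then asserts equality with $[\Brob_h V](s,a)$, which only holds when the optimal feature perturbation is constant across state-action pairs.

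The ``delicate point'' you single out is a genuine gap, and the paper's own proof does not close it: the proof simply writes $[\Brlx^{\pi}_h V](s,a) = \min_{(\bxi_h,\bm{\eta}_h)\in\widehat{\M}_h}\angl{\bphi^{\circ}_h(s,a)+\bm{\eta}_h, \bomega_h+\bxi_h}$, silently adopting the pointwise-minimum reading of the operator, which is not the definition in \eqref{eq:3-effective_robust_update}--\eqref{eq:3-effective_robust_operator}, where one shared $(\bxi^*_h,\bm{\eta}^*_h)$, optimal for the $d^{\pi}_h$-averaged objective, is evaluated at every $(s,a)$. Under the actual shared-minimizer definition the pointwise inequality can fail, so your first proposed repair (simultaneous optimality of the shared perturbation across feature directions) is not available. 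A concrete failure: take tabular orthonormal features $\bphi^{\circ}_h(s,a)=\bm{e}_{(s,a)}$, perturbation only on $\bnu$ (so $\r_{\phi,h}=\r_{\mu,h}=0$, $\r_{\eta,h}=0$, $\r_{\xi,h}=\r_{\nu,h}$), and uniform occupancy $d^{\pi}_h$. The shared minimizer is $\bxi^*_h=-\r_{\nu,h}\,\bar{\bphi}/\norm{\bar{\bphi}}$ with $\bar{\bphi}=\E[(s,a)\sim d^{\pi}_h]{\bphi^{\circ}_h(s,a)}$, which lowers each $Q$-value by only $\r_{\nu,h}/\sqrt{SA}$, while $[\Brob_h V](s,a)$ lets the adversary target each pair individually and lower it by the full $\r_{\nu,h}$; hence $[\Brlx^{\pi}_h V](s,a) > [\Brob_h V](s,a)$ whenever $SA\ge 2$. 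What does survive is exactly your fallback: the ordering holds with $\Brlx^{\pi}_h$ read as the pointwise minimum, or in the $d^{\pi}_h$-averaged sense, and it is the latter, shared-minimizer form that the downstream analysis (\Cref{thm:5-contraction} and \Cref{thm:5-performance_difference}) actually consumes. So your write-up reproduces the paper's argument and, beyond that, correctly isolates the step at which both it and the paper are incomplete.
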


\begin{proof}
  This lemma is basically formalizing the intuition stated above. Note that $(\bphi^{\circ}, \bmu^{\circ}, \bnu^{\circ}) \in \M$, so we have
  \begin{subequations}
  \begin{align}
    [\Brob_h V](s,a)
    &= \min_{\begin{subarray}{c} (\bphi_{1:H}, \bmu_{1:H}, \bnu_{1:H}) \in \M,\\ \mathbb{P}_h = \angl{\bphi_h,\bmu_h}, r_h = \angl{\bphi_h,\bnu_h} \end{subarray}} \E[s' \sim \P_h(\cdot | s, a)]{r_h(s, a) + V(s')} \\
    &\leq \E[s' \sim \P^{\circ}_h(\cdot | s, a)]{r^{\circ}_h(s, a) + V(s')} \\
    &= [\Bnom_h V](s,a).
  \end{align}
  \end{subequations}
  Further, let $(\bphi^*, \bmu^*, \bnu^*) \in \M$ be the optimal solutions to \eqref{eq:3-robust_update} that attain minimum with respect to $V$, and define
  \begin{equation}
    \bxi^*_h := (\bnu^*_h - \bnu^{\circ}_h) + \sum_{s'} V(s') \prn[\big]{\bmu^*_h(s') - \bmu^{\circ}_h(s')},\quad
    \bm{\eta}_h^* = \E[(s,a) \sim d^{\pi}_h]{\bphi^*_h(s,a) - \bphi^{\circ}_h(s,a)}.
  \end{equation}
  By assumption we have $(\bxi^*_h, \bm{\eta}^*_h) \in \widehat{\M}$, and thus shall derive
  \begin{subequations}
  \begin{align}
    [\Brlx^{\pi}_h V](s,a)
    &= \min_{(\bxi_h, \bm{\eta}_h) \in \widehat{\M}_h} \angl*{\bphi^{\circ}_h(s,a) + \bxi_h, \bomega^{\circ}_V + \bm{\eta}_h} \\
    &\leq \angl*{\bphi^{\circ}_h(s,a) + \bxi^*_h, \bomega^{\circ}_V + \bm{\eta}^*_h} \\
    &= [\Brob_h V](s,a),
  \end{align}
  \end{subequations}
  where $\bomega^{\circ}_V := \bnu^{\circ}_h + \sum_{s'} V(s') \bmu^{\circ}_h(s')$.
\end{proof}

\begin{lemma}\label{thm:A-2-robust_ordering}
  For any $V$-functions $V, V': \S \to \R$ such that $V(s) \leq V'(s),~ \forall s \in \S$, we have
  \begin{equation}
    [\Brob_h V](s,a) \leq [\Brob_h V'](s,a),~ \forall (s,a) \in \S \times \A.
  \end{equation}
\end{lemma}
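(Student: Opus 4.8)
The plan is to establish monotonicity of $\Brob_h$ by exploiting the fact that every model in the ambiguity set $\M$ is a \emph{valid} MDP, so that its transition kernel $\P_h(\cdot \mid s,a)$ is a genuine probability distribution with nonnegative entries. This nonnegativity is precisely what propagates the pointwise inequality $V \leq V'$ through the Bellman update, and it is the reason the argument applies to the standard robust operator but would fail for the relaxed operator $\Brlx_h$, which admits pseudo-MDPs with possibly negative transition weights.

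First I would fix an arbitrary admissible model $(\bphi_h, \bmu_h, \bnu_h) \in \M_h$, which by the validity constraint \eqref{eq:3-standard_robust_MDP:4} induces a transition kernel $\P_h(\cdot \mid s,a) \in \Delta(\S)$. Since $\P_h(s' \mid s,a) \geq 0$ for every $s'$ and $V(s') \leq V'(s')$ by hypothesis, summing against these nonnegative weights gives
$$\sum_{s'} \P_h(s' \mid s,a) V(s') \leq \sum_{s'} \P_h(s' \mid s,a) V'(s').$$
Adding the common reward term $r_h(s,a)$ to both sides shows that, for this fixed model, the Bellman objective evaluated at $V$ is dominated pointwise in $(s,a)$ by the same objective evaluated at $V'$.

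Next I would invoke the elementary principle that minimizing over a \emph{common} domain preserves pointwise order: if $f(\cdot) \leq g(\cdot)$ on a set $D$, then $\min_D f \leq \min_D g$, since the minimizer of $g$ is feasible for $f$. Here the domain is the stage ambiguity set $\M_h$, which does not depend on the value function being evaluated --- the $V$-function enters only through the objective, not through the feasible region. Applying this principle with $f,g$ taken to be the two Bellman objectives yields $[\Brob_h V](s,a) \leq [\Brob_h V'](s,a)$ for all $(s,a) \in \S \times \A$, as desired.

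The argument is short, and I do not expect a genuine obstacle. The only point requiring care is the reliance on the validity constraint \eqref{eq:3-standard_robust_MDP:4}: the monotonicity hinges entirely on the nonnegativity of the transition probabilities, so I would verify explicitly that the feasible set over which the minimum is taken is identical for $V$ and $V'$ (legitimizing the min-preservation step), and note that this nonnegativity is exactly what distinguishes $\Brob_h$ from the relaxed operator $\Brlx_h$, which need not be monotone.
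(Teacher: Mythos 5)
Your proof is correct and takes essentially the same route as the paper's: both hinge on the nonnegativity $\angl*{\bphi_h(s,a), \bmu_h(s')} \geq 0$ guaranteed by the validity constraint \eqref{eq:3-standard_robust_MDP:4}, and both conclude via the min-preservation argument over the common feasible set $\M_h$ (the paper just instantiates that principle concretely by evaluating the objective for $V$ at the minimizer attained for $V'$). No gap.
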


\begin{proof}
  Let $(\bphi^*, \bmu^*, \bnu^*) \in \M$ be the optimal solution of \eqref{eq:3-robust_update} that attains minimum with respect to $V'$. Then we have
  \begin{subequations}
  \begin{align}
    [\Brob_h V'](s,a)
    &= \angl[\Big]{\bphi^*_h(s,a), \bnu^*_h + \sum_{s'} V'(s') \bmu^*_h(s')} \\
    &= \angl*{\bphi^*_h(s,a), \bnu^*_h} + \sum_{s'} V'(s') \angl*{\bphi^*_h(s,a), \bmu^*_h(s')} \\
    &\geq \angl*{\bphi^*_h(s,a), \bnu^*_h} + \sum_{s'} V(s') \angl*{\bphi^*_h(s,a), \bmu^*_h(s')} \\
    &= \angl[\Big]{\bphi^*_h(s,a), \bnu^*_h + \sum_{s'} V(s') \bmu^*_h(s')} \\
    &\geq \min_{\bphi_h, \bmu_h, \bnu_h} \angl[\Big]{\bphi_h(s,a), \bnu_h + \sum_{s'} V(s') \bmu_h(s')} \\
    &= [\Brob_h V](s,a).
  \end{align}
  \end{subequations}
  Here we use the fact $\angl*{\bphi^*_h(s,a), \bmu^*_h(s')} \geq 0$, since $(\bphi^*, \bmu^*, \bnu^*)$ represents a linear MDP.
\end{proof}

Now we shall proceed to show the first part of \Cref{thm:3-value_relation}.

\begin{proof}[Proof of \Cref{thm:3-value_relation} (first part)]
  This is a proof by mathematical induction on $h$. For the base case $h = H+1$, the inequality is trivial, since by definition we have $\Qrlx^{\pi}_{H+1}(\cdot,\cdot) = \Qrob^{\pi}_{H+1}(\cdot,\cdot) = \Qnom^{\pi}_{H+1}(\cdot,\cdot) = 0$ and $\Vrlx^{\pi}_{H+1}(\cdot) = \Vrob^{\pi}_{H+1}(\cdot) = \Vnom^{\pi}_{H+1}(\cdot) = 0$.

  For the induction step, suppose we have already shown $\Vrlx^{\pi}_{h+1}(s) \leq \Vrob^{\pi}_{h+1}(s) \leq \Vnom^{\pi}_{h+1}(s)$. Then we have
  \begin{equation}
    \underbrace{[\Brlx^{\pi}_h \Vrlx^{\pi}_{h+1}](s,a)}_{\Qrlx^{\pi}_h(s,a)}
    \leq [\Brob_h \Vrlx^{\pi}_{h+1}](s,a)
    \leq \underbrace{[\Brob_h \Vrob^{\pi}_{h+1}](s,a)}_{\Qrob^{\pi}_h(s,a)}
    \leq [\Brob_h \Vnom^{\pi}_{h+1}](s,a)
    \leq \underbrace{[\Bnom_h \Vnom^{\pi}_{h+1}](s,a)}_{\Qnom^{\pi}_h(s,a)},
  \end{equation}
  where we apply \Cref{thm:A-2-update_ordering} and \Cref{thm:A-2-robust_ordering}. Further, by taking inner product with the policy $\pi_h$, we have
  \begin{equation}
    \underbrace{\angl[\big]{\pi_h(\cdot | s), \Qrlx^{\pi}_h(s,\cdot)}}_{\Vrlx^{\pi}_h(s)}
    \leq \underbrace{\angl[\big]{\pi_h(\cdot | s), \Qrob^{\pi}_h(s,\cdot)}}_{\Vrob^{\pi}_h(s)}
    \leq \underbrace{\angl[\big]{\pi_h(\cdot | s), \Qnom^{\pi}_h(s,\cdot)}}_{\Vnom^{\pi}_h(s)},
  \end{equation}
  This completes the proof.
\end{proof}

\paragraph{Bounded Gap.} We proceed to show that the gap between the nominal, standard robust and low-rank robust value functions are upper bounded by a function of the perturbation radii. For this purpose, we shall first shown a few technical lemmas.

\begin{lemma}\label{thm:A-2-diff_nominal_effective}
  For any $V$-function $V: \S \to \R$ and any policy $\pi$, we have
  \begin{equation}
    [\Bnom_h V](s,a) - [\Brlx^{\pi}_h V](s,a) \leq 2 \r_{\eta,h} \sqrt{d} + (1+\r_{\eta,h}) \r_{\xi,h},~ \forall (s,a) \in \S \times \A.
  \end{equation}
\end{lemma}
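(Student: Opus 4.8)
The plan is to expand the difference $[\Bnom_h V](s,a) - [\Brlx^\pi_h V](s,a)$ directly from the two operator definitions, cancel the common nominal term, and bound the three remaining cross terms one at a time via Cauchy--Schwarz together with the norm bounds of \Cref{assm:2-low-rank_MDP_bound}. Writing $\bomega^{\circ}_V := \bnu^{\circ}_h + \sum_{s'} V(s') \bmu^{\circ}_h(s')$, recall that $[\Bnom_h V](s,a) = \angl{\bphi^{\circ}_h(s,a), \bomega^{\circ}_V}$, while by \eqref{eq:3-effective_robust_operator} we have $[\Brlx^\pi_h V](s,a) = \angl{\bphi^{\circ}_h(s,a) + \bm{\eta}^*_h, \bomega^{\circ}_V + \bxi^*_h}$, where $(\bxi^*_h, \bm{\eta}^*_h) \in \widehat{\M}_h$ denotes the minimizer of \eqref{eq:3-effective_robust_update} associated with this $V$.

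First I would subtract the two expressions and expand the second inner product. The term $\angl{\bphi^{\circ}_h(s,a), \bomega^{\circ}_V}$ appears in both and cancels, leaving
\[
  [\Bnom_h V](s,a) - [\Brlx^\pi_h V](s,a) = -\angl{\bphi^{\circ}_h(s,a), \bxi^*_h} - \angl{\bm{\eta}^*_h, \bomega^{\circ}_V} - \angl{\bm{\eta}^*_h, \bxi^*_h}.
\]
Next I would bound each term by its absolute value via Cauchy--Schwarz. For the first term, $\norm{\bphi^{\circ}_h(s,a)} \le 1$ by \Cref{assm:2-low-rank_MDP_bound} and $\norm{\bxi^*_h} \le \r_{\xi,h}$ since the minimizer lies in $\widehat{\M}_h$, yielding $\r_{\xi,h}$. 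For the third term, $\norm{\bm{\eta}^*_h} \le \r_{\eta,h}$ and $\norm{\bxi^*_h} \le \r_{\xi,h}$, yielding $\r_{\eta,h}\r_{\xi,h}$. Summing these two already produces the $(1+\r_{\eta,h})\r_{\xi,h}$ contribution.

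The one step needing a little care is the middle term $\angl{\bm{\eta}^*_h, \bomega^{\circ}_V}$, which requires a bound on $\norm{\bomega^{\circ}_V}$. Here I would apply the triangle inequality to split $\bomega^{\circ}_V$ into its reward and transition parts and invoke \Cref{assm:2-low-rank_MDP_bound} twice, namely $\norm{\bnu^{\circ}_h} \le \sqrt{d}$ and $\norm{\sum_{s'} V(s') \bmu^{\circ}_h(s')} \le \sqrt{d}$, so that $\norm{\bomega^{\circ}_V} \le 2\sqrt{d}$; combined with $\norm{\bm{\eta}^*_h} \le \r_{\eta,h}$ this bounds the middle term by $2\r_{\eta,h}\sqrt{d}$. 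Adding the three contributions gives $2\r_{\eta,h}\sqrt{d} + (1+\r_{\eta,h})\r_{\xi,h}$, and since this bound is uniform in $(s,a)$ it holds for all $(s,a) \in \S \times \A$. There is no genuine obstacle here; the only point to watch is that the factor bound $\norm{\sum_{s'} V(s')\bmu^{\circ}_h(s')} \le \sqrt{d}$ applies to the $V$ at hand, i.e. to value functions taking values in $[0,H]$, which is exactly the regime in which the lemma is invoked downstream.
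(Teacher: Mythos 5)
Your proof is correct and follows essentially the same route as the paper's: expand the difference at the minimizer $(\bxi^*_h, \bm{\eta}^*_h)$ of \eqref{eq:3-effective_robust_update}, cancel the common nominal term, and bound the three remaining cross terms via Cauchy--Schwarz with $\norm{\bphi^{\circ}_h(s,a)} \leq 1$, $\norm{\bomega^{\circ}_V} \leq 2\sqrt{d}$, and the ambiguity-set radii, giving $2\r_{\eta,h}\sqrt{d} + (1+\r_{\eta,h})\r_{\xi,h}$. If anything, your sign bookkeeping is more careful than the paper's (which writes the difference with the two operators' roles swapped before bounding absolute values), and your closing caveat that the $\sqrt{d}$ factor bound requires $V$ to take values in $[0,H]$ is a legitimate point the paper leaves implicit.
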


\begin{proof}
  Let $(\bxi^*, \bm{\eta}^*)$ be the optimal solutions of \eqref{eq:3-effective_robust_update} that attain minimum with respect to $V$ and $\Brlx^{\pi}_h$. Then we have
  \begin{subequations}
  \begin{align}
    [\Bnom_h V](s,a) - [\Brlx^{\pi}_h V](s,a)
    &= \angl{\bphi^{\circ}_h(s,a) + \bm{\eta}^*, \bomega^{\circ}_h + \bxi^*} - \angl{\bphi^{\circ}_h(s,a), \bomega^{\circ}_h} \\
    &= \angl{\bm{\eta}^*, \bomega^{\circ}_h} + \angl{\bxi^*, \bphi^{\circ}_h + \bm{\eta}^*} \\
    &\leq \norm{\bm{\eta}^*} \norm{\bomega^{\circ}_h} + \norm{\bxi^*} \prn[\big]{\norm{\bphi^{\circ}_h} + \norm{\bm{\eta}^*}} \\
    &\leq 2 \r_{\eta,h} \sqrt{d} + (1+\r_{\eta,h}) \r_{\xi,h},
  \end{align}
  \end{subequations}
  where we use the fact that $\norm{\bomega^{\circ}_h} = \norm{\bnu^{\circ}_h + \sum_{s'} V(s') \bmu^{\circ}_h(s')} \leq 2\sqrt{d}$. This completes the proof.
\end{proof}

\begin{lemma}\label{thm:A-2-nominal_Lipschitz}
  For any $V$-function $V: \S \to \R$, we have $\norm{\Bnom_h V - \Bnom_h V'}_{\infty} \leq \norm{V - V'}_{\infty}$.
\end{lemma}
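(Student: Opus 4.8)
The plan is to exploit the fact that the nominal Bellman operator $\Bnom_h$ is an affine map whose linear part is an average against a genuine probability distribution, and averaging against a stochastic kernel is automatically a non-expansion in the $\infty$-norm. Concretely, I would read off from the definition collected in this appendix that $[\Bnom_h V](s,a) = r^{\circ}_h(s,a) + \sum_{s'} \P^{\circ}_h(s'|s,a)\, V(s')$, and apply the same expression to a second value function $V'$ (the statement is really for two value functions $V, V'$). The reward term $r^{\circ}_h(s,a)$ does not depend on the value function, so it cancels upon subtraction, which is the only structural observation the argument needs.

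First I would subtract the two expansions to obtain, for every fixed $(s,a)$,
\begin{equation*}
  [\Bnom_h V](s,a) - [\Bnom_h V'](s,a) = \sum_{s'} \P^{\circ}_h(s'|s,a)\bigl(V(s') - V'(s')\bigr).
\end{equation*}
Next I would bound its absolute value by the triangle inequality, pull the $\infty$-norm out of the sum, and use that the weights form a probability distribution:
\begin{equation*}
  \bigl| [\Bnom_h V](s,a) - [\Bnom_h V'](s,a) \bigr|
  \leq \sum_{s'} \P^{\circ}_h(s'|s,a)\,\bigl|V(s') - V'(s')\bigr|
  \leq \norm{V - V'}_{\infty} \sum_{s'} \P^{\circ}_h(s'|s,a)
  = \norm{V - V'}_{\infty},
\end{equation*}
where the final equality is $\sum_{s'} \P^{\circ}_h(s'|s,a) = 1$. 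Taking the supremum over all $(s,a)$ gives $\norm{\Bnom_h V - \Bnom_h V'}_{\infty} \leq \norm{V - V'}_{\infty}$, as claimed.

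I expect no genuine obstacle here, since this is the classical statement that the nominal Bellman backup is a sup-norm non-expansion. The one point worth flagging — more conceptual than technical — is that the argument hinges entirely on $\P^{\circ}_h(\cdot|s,a)$ being a \emph{bona fide} probability distribution (nonnegative weights summing to one). This is precisely what separates $\Bnom_h$ from the relaxed robust operator $\Brlx_h$, whose worst-case ``pseudo-MDP'' dynamics may violate normalization and therefore only satisfy the weaker quasi-contraction of \Cref{thm:5-contraction} rather than a clean $\infty$-norm non-expansion.
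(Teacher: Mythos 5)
Your proof is correct and is essentially the same argument as the paper's: both cancel the reward term, write the difference $[\Bnom_h V](s,a) - [\Bnom_h V'](s,a)$ as an expectation of $V - V'$ under the genuine probability kernel $\P^{\circ}_h(\cdot|s,a)$, bound it by $\norm{V - V'}_{\infty}$, and take the supremum over $(s,a)$. Your version is slightly more explicit about absolute values and the normalization $\sum_{s'} \P^{\circ}_h(s'|s,a) = 1$, but there is no substantive difference in approach.
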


\begin{proof}
  By definition we have
  \begin{equation}
    [\Bnom_h V](s,a) - [\Bnom_h V'](s,a)
    = \E[s' \sim \P^{\circ}_h(\cdot | s, a)]{V(s') - V'(s')}
    \leq \norm{V - V'}_{\infty}.
  \end{equation}
  Taking maximum over the left-hand side gives the desired inequality.
\end{proof}

\begin{proof}[Proof of \Cref{thm:3-value_relation} (second part)]
  We proceed by mathematical induction on $h$ to show that
  \begin{equation}
    \norm{\Vrob^{\pi}_h - \Vrlx^{\pi}_h}_{\infty}
    \leq \norm{\Vnom^{\pi}_h - \Vrlx^{\pi}_h}_{\infty} 
    \leq \sum_{\tau = h}^{H} \prn[\big]{2 \r_{\eta,\tau} \sqrt{d} + (1+\r_{\eta,\tau}) \r_{\xi,\tau}},~ \forall h \in [H+1].
  \end{equation}
  The base case $h = H+1$ trivially holds, since by definition we have $\Vrlx^{\pi}_{H+1}(\cdot) = \Vrob^{\pi}_{H+1}(\cdot) = \Vnom^{\pi}_{H+1}(\cdot) = 0$. For the induction step at $h$, note that
  \begin{subequations}
  \begin{align}
    \norm{\Vrob^{\pi}_h - \Vrlx^{\pi}_h}_{\infty}
    &\leq \norm{\Vnom^{\pi}_h - \Vrlx^{\pi}_h}_{\infty} \\
    &\leq \norm{\Bnom_h \Vnom^{\pi}_{h+1} - \Brlx^{\pi}_h \Vrlx^{\pi}_{h+1}}_{\infty} \\
    &\leq \norm{\Bnom_h \Vnom^{\pi}_{h+1} - \Bnom_h \Vrlx^{\pi}_{h+1}}_{\infty} + \norm{\Bnom_h \Vrlx^{\pi}_{h+1} - \Brlx^{\pi}_h \Vrlx^{\pi}_{h+1}}_{\infty} \\
    &\leq \norm{\Vnom^{\pi}_{h+1} - \Vrlx^{\pi}_{h+1}}_{\infty} + 2 \r_{\eta,h} \sqrt{d} + (1+\r_{\eta,h}) \r_{\xi,h} \\
    &\leq \sum_{\tau = h}^{H} \prn[\big]{2 \r_{\eta,\tau} \sqrt{d} + (1+\r_{\eta,\tau}) \r_{\xi,\tau}},
  \end{align}
  \end{subequations}
  where we apply \Cref{thm:A-2-diff_nominal_effective} and \Cref{thm:A-2-nominal_Lipschitz} for the second last inequality, and the induction hypothesis for the last inequality. Finally, we apply \Cref{thm:A-2-update_ordering} to derive the desired inequality.
\end{proof}

\paragraph{The String Guessing Example.} The above theoretical results fail to answer the question whether the effective robust value is a good approximation of the robust value, and it may seem suspicious that the gap is only upper bounded by $\Theta(H)$. To justify why we cannot expect anything better in the worst case, we include the following example for illustration.

Consider a string guessing game with the answer set to be an $m$-bit binary string. Without loss of generality, let  $11 \cdots 1$ be the answer (otherwise we shall rename the two actions on any bit of 0). There are two actions, i.e. $\A = \set{a_0,a_1}$. The game proceeds in a bit-wise manner, and the transient states $s_{1:m}$ are used to record the progress. There are two absorptive states: $s_-$ for an error on any bit, which yields a reward of 0 for each of the remaining steps; $s_+$ for success on all bits, which yields a reward of 1 for each of the remaining steps. Therefore, the state space is $\S = \brac{s_-,s_+,s_1,s_2,\ldots,s_m}$ with a deterministic initial state $s_1$. The MDP is illustrated in \Cref{fig:A-2-string_guessing} below, where transitions are deterministic (as indicated by the arrows), and all rewards are 0 except for the self-loop at $s_+$.

\begin{figure}[H]
  \centering
    \tikzset{snode/.style = {draw=black, shape=circle, line width=1.0pt, inner sep=0pt, minimum width=16pt}}
  \begin{tikzpicture}[%
    every path/.style = {draw=black, line width=0.5pt, ->, >={Stealth}}%
  ]
    \node[snode] (s0) at (80pt, 50pt) {$s_-$};
    \node[snode] (s1) at (0pt, 0pt) {$s_1$};
    \node[snode] (s2) at (40pt, 0pt) {$s_2$};
    \node[snode] (s3) at (80pt, 0pt) {$s_3$};
    \node (s4) at (120pt, 0pt) {$\cdots$};
    \node[snode] (sm) at (160pt, 0pt) {$s_m$};
    \node[snode] (sp) at (200pt, 0pt) {$s_+$};

    \draw (s1) edge node[above]{$a_1$} (s2);
    \draw (s2) edge node[above]{$a_1$} (s3);
    \draw (s3) edge node[above]{$a_1$} (s4);
    \draw (s4) edge node[above]{$a_1$} (sm);
    \draw (sm) edge node[above]{$a_1$} (sp);
    \draw (sp) edge[loop above] node[above]{\scriptsize\color{blue} $r=1$} (sp);

    \draw (s1) edge node[left]{$a_0$} (s0);
    \draw (s2) edge node[right]{$a_0$} (s0);
    \draw (s3) edge node[right]{$a_0$} (s0);
    \draw (sm) edge node[right]{$a_0$} (s0);
    \draw (s0) edge[loop above] node[left]{} (s0);
  \end{tikzpicture}
  \caption{MDP diagram for the string guessing game.}\label{fig:A-2-string_guessing}
\end{figure}
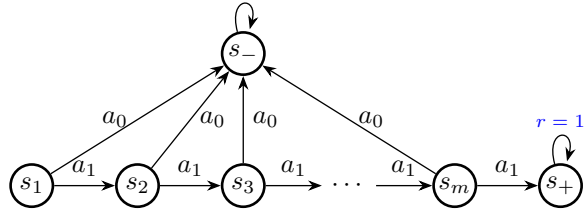

We proceed to define the low-rank representation for this MDP. Set the feature dimension to be $d = 4$, and the feature vectors for each state-action pair at time step $h \in [H]$ are defined as
\begin{equation}
  \bphi^{\circ}_h(s_-, \cdot) = \begin{bmatrix}
    1 \\ 0 \\ 0 \\ 0
  \end{bmatrix};\quad
  \bphi^{\circ}_h(s_i, a_0) = \begin{bmatrix}
    0 \\ 1 \\ 0 \\ 0
  \end{bmatrix},~
  \bphi^{\circ}_h(s_i, a_1) = \begin{bmatrix}
    0 \\ 0 \\ 1 \\ 0
  \end{bmatrix} (\forall i \in [m]);\quad
  \bphi^{\circ}_h(s_+, \cdot) = \begin{bmatrix}
    0 \\ 0 \\ 0 \\ 1
  \end{bmatrix},
\end{equation}
with corresponding factors (for the sake of convenience, unspecified factors are set to $\bm{0}$ by default)
\begin{equation}
  \bnu^{\circ}_h = \begin{bmatrix}
    0 \\ 0 \\ 0 \\ 1
  \end{bmatrix};\quad
  \bmu^{\circ}_h(s_-) = \begin{bmatrix}
    1 \\ 1 \\ 0 \\ 0
  \end{bmatrix},~
  \bmu^{\circ}_h(s_{h+1}) = \begin{bmatrix}
    0 \\ 0 \\ 1 \\ 0
  \end{bmatrix} (\forall h \leq m),~
  \bmu^{\circ}_h(s_+) = \begin{bmatrix}
    0 \\ 0 \\ 0 \\ 1
  \end{bmatrix} (\forall h > m),
\end{equation}
where we regard $s_{m+1}$ as $s_+$ for simplicity.

To formulate robust MDPs around the nominal model, suppose that the transition probabilities are subject to uncertainty, so that each transition probability may differ from the nominal value by at most $\delta$. Specifically,
\begin{itemize}
  \item for the standard robust MDP, set $\r_{\phi,h} = \r_{\nu,h} = 0$ and $\r_{\mu,h} = \delta$ for the $(\phi,\mu,\nu)$-rectangular ambiguity set $\M$;
  \item for the low-rank robust MDP, set $\r_{\xi,h} = (H-h)\delta$ and $\r_{\eta,h} = 0$ for the $(\xi,\eta)$-rectangular ambiguity set $\widehat{\M}$.
\end{itemize}
Based on the discussion in \Cref{sec:appx-A-1-transformation}, we know that the low-rank robust ambiguity set $\widehat{\M}$ is a relaxation of the standard robust ambiguity set $\M$.

Now we consider a policy $\pi$ that always takes action $a_1$. The nominal, standard robust and low-rank robust values for policy $\pi$ shall be calculated as follows:
\begin{itemize}
  \item \textit{The nominal value.} It is clear that, in the nominal model, the cumulative reward obtained by policy $\pi$ is $H-m$, which comes from the last $H-m$ steps at $s_+$. Hence $\Vnom^{\pi}_1(s_1) = H-m$.

  \item \textit{The standard robust value.} Among all possible perturbations, the worst case happens when there is a $\delta$ probability for $a_1$ to lead to $s_-$ at any $s_i$, so that the worst-case factors are given by
  \begin{equation}
    \bmu_h(s_-) = \begin{bmatrix}
      1 \\ 1 \\ \delta \\ 0
    \end{bmatrix},~
    \bmu_h(s_{h+1}) = \begin{bmatrix}
      0 \\ 0 \\ 1-\delta \\ 0
    \end{bmatrix} (\forall h \leq m).
  \end{equation}
  It is straight-forward to verify that it remains a valid MDP after perturbation. In this case the standard robust value is given by $\Vrob^{\pi}_1(s_1) = (1-\delta)^m (H-m)$, which is exponentially far from the nominal value as $m$ increases.

  \item \textit{The low-rank robust value.} Note that, in \eqref{eq:3-effective_robust_update}, since $\bphi^{\circ}_h(\cdot, \cdot)$ is left unperturbed, while the nominal state distribution $\rho^{\pi}_h$ is a point mass at $s_i$ (for $h \leq m$) or $s_+$ (for $h > m$). Therefore, the optimal $\xi^*_h$ is always given by $- \r_{\xi,h} \frac{\bphi^{\circ}(s_i, a_1)}{\norm{\bphi^{\circ}(s_i, a_1)}}$, and thus an additional $(H-h)\delta$ should be subtracted from the lowr-tank robust Bellman update at time step $h$, as compared to the nominal Bellman update. Therefore, the low-rank robust value is given by $\Vrlx^{\pi}_1(s_1) = (H - m) - \sum_{h=1}^{m} (H-h)\delta$.
\end{itemize}
With these values in hand, we first verify their ordinal relations, which follows from Bernoulli's inequality as
\begin{equation}
  \Vnom^{\pi}_1(s_1)
  > \Vrob^{\pi}_1(s_1)
  = (1-\delta)^m (H-m)
  \geq (1-m\delta) (H-m)
  = (H - m) - \sum_{h=1}^{m} (H-m)\delta
  > \Vrlx^{\pi}_1(s_1).
\end{equation}
Meanwhile, it can be shown by Taylor expansion that
\begin{align}
  \Vrob^{\pi}_1(s_1) - \Vrlx^{\pi}_1(s_1)
  &= \sum_{h=1}^{m} (H-h)\delta - (H-m) \cdot m\delta + \o(\delta) 
  = \frac{m(m+1)}{2} \delta + \o(\delta), \\
  \Vnom^{\pi}_1(s_1) - \Vrob^{\pi}_1(s_1) &= (H-m)m \delta + \o(\delta).
\end{align}
Therefore, with sufficiently small $\delta$, $\Vrob^{\pi}_1(s_1)$ would be much closer to $\Vrlx^{\pi}_1(s_1)$ when $H \gg m$, but become much closer to $\Vnom^{\pi}_1(s_1)$ when $H = \Theta(m)$, which implies that in general we cannot expect anything better than \Cref{thm:3-value_relation}.

\subsection{Robustness Induced by Low-rank Robust MDPs}\label{sec:appx-A-3-robustness_example}

Although we have illustrated the rationale behind the proposed robust low-rank MDPs, it is still unclear whether solving the low-rank robust planning problem \Cref{eq:3-effective_robust_policy} actually leads to a policy that displays certain level of robustness. For this purpose, we include the following example to compare the optimal effective robust policy under different perturbation radii to demonstrate its robust behavior.

Consider the following ``gamble-or-guarantee'' game where the agent is required to enter one of the two branches: a no-risk ``guarantee'' branch (taking action $a_0$) including a single absorptive state $s_{\alpha}$ to receive a constant reward $\alpha$ from then on, and a risky ``gamble'' branch (taking action $a_1$) that includes a potentially transient state $s_1$ to receive rewards of $1$, at the risk of permanently falling into the 0-reward absorption state $s_0$. The MDP is illustrated in \Cref{fig:A-3-risk_seeking} below.

\begin{figure}[ht]
  \centering
    \tikzset{snode/.style = {draw=black, shape=circle, line width=1.0pt, inner sep=0pt, minimum width=16pt}}
  \tikzset{apath/.style = {draw=black, line width=0.5pt, ->, >={Stealth}}}
  \begin{tikzpicture}
    \node[snode] (si) at (0pt, 40pt) {$s_+$};
    \node[snode] (sa) at (-20pt, 0pt) {$s_{\alpha}$};
    \node[snode] (s1) at (20pt, 0pt) {$s_1$};
    \node[snode] (s0) at (60pt, 0pt) {$s_0$};

    \draw[apath] (si) edge node[left]{$a_0$} (sa);
    \draw[apath] (si) edge node[right]{$a_1$} (s1);
    \draw[apath] (s1) edge node[above=-1pt]{\scriptsize $p$} (s0);
    
    \draw[apath] (sa) edge[loop below] node[below]{\scriptsize\color{blue} $r=\alpha$} (sa);
    \draw[apath] (s1) edge[loop below] node[left=4pt,yshift=3pt]{\scriptsize $1-p$} node[below]{\scriptsize\color{blue} $r=1$} (s1);
    \draw[apath] (s0) edge[loop below] node[below]{} (s0);
  \end{tikzpicture}
  \caption{MDP diagram for the gamble-or-guarantee game.}\label{fig:A-3-risk_seeking}
\end{figure}
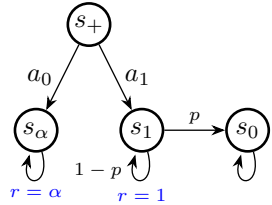

The above game can be formulated as an MDP with state space $\S = \brac{s_{\alpha}, s_0, s_1, s_+}$ and action space $\A = \brac{a_0, a_1}$. Transitions may be probabilistic as indicated by the arrows, and the self-loops at states $s_{\alpha}$, $s_1$, $s_0$ yield rewards $\alpha$, $1$ and $0$, respectively. For the low-rank representation, set the feature dimension to be $d = 5$, and define the feature vectors by
\begin{equation}
  \bphi^{\circ}_h(s_+, a_0) = \begin{bmatrix}
    1 \\ 0 \\ 0 \\ 0 \\ 0
  \end{bmatrix},~
  \bphi^{\circ}_h(s_+, a_1) = \begin{bmatrix}
    0 \\ 1 \\ 0 \\ 0 \\ 0
  \end{bmatrix},~
  \bphi^{\circ}_h(s_{\alpha}, \cdot) = \begin{bmatrix}
    0 \\ 0 \\ 1 \\ 0 \\ 0
  \end{bmatrix},~
  \bphi^{\circ}_h(s_1, \cdot) = \begin{bmatrix}
    0 \\ 0 \\ 0 \\ 1 \\ 0
  \end{bmatrix},~
  \bphi^{\circ}_h(s_0, \cdot) = \begin{bmatrix}
    0 \\ 0 \\ 0 \\ 0 \\ 1
  \end{bmatrix},
\end{equation}
with corresponding factors (for the sake of convenience, unspecified factors are set to $\bm{0}$ by default)
\begin{equation}
  \bnu^{\circ}_h = \begin{bmatrix}
    0 \\ 0 \\ \alpha \\ 1 \\ 0
  \end{bmatrix};\quad
  \bmu^{\circ}_h(s_{\alpha}) = \begin{bmatrix}
    1 \\ 0 \\ 1 \\ 0 \\ 0
  \end{bmatrix},~
  \bmu^{\circ}_h(s_1) = \begin{bmatrix}
    0 \\ 1 \\ 0 \\ 1-p \\ 0
  \end{bmatrix},~
  \bmu^{\circ}_h(s_0) = \begin{bmatrix}
    0 \\ 0 \\ 0 \\ p \\ 1
  \end{bmatrix}.
\end{equation}
We specify the $(\xi,\eta)$-ambiguity set $\widehat{\M}$ (for the low-rank robust evaluation) by radii $\r_{\xi,h} = \delta$ and $\r_{\eta,h} = 0$.

Now we shall calculate the optimal policies in the nominal and low-rank robust MDPs. Here we assume that the horizon $H$ is sufficiently large, such that $(1-p)^{H-1} < \frac{1}{2}$, $(1-\delta)^{H-1} < \frac{1}{2}$ and $\alpha < \frac{1-p}{2pH}$.
\begin{itemize}
  \item \textit{Optimal nominal policy.} Note that $\Vnom^*_h(s_0) = 0$ for all $h \in [H]$, by Bellman optimality equation we have $\Vnom^*_h(s_1) = (1-p) \prn[\big]{V_{h+1}(s_1) + 1}$, which gives $\Vnom^*_h(s_1) = \frac{1-p}{p} \prn[\big]{1 - (1-p)^{H-h+1}}$. Meanwhile, we also have $\Vnom^*_h(s_{\alpha}) = (H-h+1) \alpha$. By assumption, $\Vnom^*_2(s_1) > \Vnom^*_2(s_{\alpha})$, and thus the optimal nominal policy $\pi^*$ is to take action $a_1$ at the initial state $s_+$.

  \item \textit{Optimal low-rank robust policy.} Since the action at $s_+$ is the only decision made by the agent, in this special case we shall still study $\Vrlx^*_2(s_1)$ and $\Vrlx^*_2(s_{\alpha})$. Note that the worst-case perturbation at $s_{\alpha}$ is equivalent to increasing the transition probabilities from $s_{\alpha}$ to $s_0$ by $\delta$, so that we have $\Vrlx^*_h(s_{\alpha}) = \frac{(1-\delta)\alpha}{\delta} \prn[\big]{1 - (1-\delta)^{H-h+1}}$. On the other hand, an admissible perturbation at $s_1$ is to increase the transition probability from $s_1$ to $s_0$ by $\delta$, so we have $\Vrlx^*_h(s_1) \leq \frac{1-p-\delta}{p+\delta}$. Therefore, when $\alpha > \frac{2 \delta (1-p-\delta)}{(1-\delta)(p+\delta)}$ (or as a sufficient condition, when $\delta < \frac{p\alpha}{2(1-p)}$), we have $\Vrlx^*_2(s_1) < \Vrlx^*_2(s_{\alpha})$, and thus the optimal standard robust policy is to take action $a_0$ at intial state $s_+$.
\end{itemize}
The above example indicates that, with appropriate perturbation radius, the policy obtained by solving \eqref{eq:3-effective_robust_policy} indeed displays certain level of robust behavior.
  \section{Reduction of QC2QPs to Constrained SDPs}

In this section we include the details of the reduction from \textit{Quadratic Program with \textit{Two} Quadratic Constraints (QC2QP)} to \textit{Semi-Definite Program (SDP)}. Note that QC2QP, and more generally all quadratic programs with quadratic constraints (QCQP), are extensively studied in literature, since they are closely related to the trust-region method \citep{beck2006strong, ai2009strong, chen2021optimality}.

We have the following sufficient condition for \textit{strong duality} to hold between a QC2QP and its corresponding relaxed SDP.

\begin{theorem}[\citet{beck2006strong}, Theorem 3.5]\label{thm:B-1-QC2QP_duality}
  Given a QC2QP in the form
  \begin{subequations}\label{eq:B-1-QC2QP}
  \begin{align}
    \min_{\bm{x} \in \R^n}\quad& \bm{x}^{\top} \bm{A} \bm{x} + 2 \bm{\beta}^{\top} \bm{x} + c,\\
    \mathrm{s.t.}\quad& \bm{x}^{\top} \bm{A}_1 \bm{x} + 2 \bm{\beta}_1^{\top} \bm{x} + c_1 \geq 0,\\
                      & \bm{x}^{\top} \bm{A}_2 \bm{x} + 2 \bm{\beta}_2^{\top} \bm{x} + c_2 \geq 0,
  \end{align}
  \end{subequations}
  let its SDP relaxation be
  \begin{subequations}\label{eq:B-1-SDP}
  \begin{align}
    \min_{\bm{X} \in S^{n+1}}&\quad \tr(\bm{C} \bm{X}) \\
    \mathrm{s.t.}&\quad \tr(\bm{C}_1 \bm{X}) \leq 0,\quad
                        \tr(\bm{C}_2 \bm{X}) \leq 0,\\
                 &\quad \bm{X}_{n+1, n+1} = 1,\quad
                        \bm{X} \succeq 0,
  \end{align}
  \end{subequations}
  where $S^n$ denotes the set of $n$-by-$n$ real-symmetric matrices. Suppose the following conditions hold:
  \begin{itemize}
    \item the QC2QP \eqref{eq:B-1-QC2QP} is strictly feasible, \textit{i.e.}, there exists an $\bm{x}_0$ such that the inequality constraints are strict;
    \item there exists $\alpha_1, \alpha_2 \in \R$, such that $\alpha_1 \bm{A}_1 + \alpha_2 \bm{A}_2 \succ 0$;
    \item the dimension of the null space $\dim \mathcal{N}(\bm{A} - \alpha^* \bm{A}_1 - \beta^* \bm{A}_2) \neq 1$, where $(\lambda^*, \alpha^*, \beta^*)$ is the solution to the dual program,
  \end{itemize}
  then strong duality holds for the QC2QP \eqref{eq:B-1-QC2QP}. Consequently, the optimal values of \eqref{eq:B-1-QC2QP} and \eqref{eq:B-1-SDP} are identical.
\end{theorem}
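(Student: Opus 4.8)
The plan is to establish this via SDP Lagrangian duality combined with a rank-reduction argument, the standard route for losslessness of the S-procedure with two quadratic constraints. First I would homogenize: introduce the lifted variable $\bm{X} \succeq 0$ with the normalization $\bm{X}_{n+1,n+1}=1$, so that every feasible $\bm{x}$ of~\eqref{eq:B-1-QC2QP} lifts to the rank-one point $\bm{X}=[\bm{x}^{\top},1]^{\top}[\bm{x}^{\top},1]$ feasible for~\eqref{eq:B-1-SDP} with identical objective value. This immediately gives the easy inequality that the SDP optimum lower-bounds the QC2QP optimum (the SDP is a relaxation). Since the conic dual of~\eqref{eq:B-1-SDP} coincides with the Lagrangian dual of~\eqref{eq:B-1-QC2QP}, the assertion ``strong duality holds for the QC2QP'' is equivalent to equality of the QC2QP and SDP optimal values, so the whole theorem reduces to \emph{tightness}: producing a rank-one optimal SDP solution whose last coordinate can be normalized to $1$.

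Next I would set up the duality plumbing. Forming the Lagrangian of~\eqref{eq:B-1-QC2QP} with multipliers $\alpha,\beta \ge 0$, the dual function is finite exactly when $\bm{A}-\alpha\bm{A}_1-\beta\bm{A}_2 \succeq 0$ together with a linear consistency condition, and a direct computation identifies the resulting dual with the conic dual of~\eqref{eq:B-1-SDP}. Here the two hypotheses play complementary roles for the convex SDP pair. The strict feasibility of the QC2QP lifts to a Slater point for the primal SDP (perturb the lifted rank-one matrix by a small positive-definite term), which yields attainment of the dual; the existence of $\alpha_1\bm{A}_1+\alpha_2\bm{A}_2\succ 0$ is the coercivity condition that makes the dual strictly feasible and forces attainment of the primal SDP. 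With both attained I would let $(\lambda^*,\alpha^*,\beta^*)$ be the optimal dual variables and $\bm{X}^*$ an optimal SDP solution, and invoke complementary slackness: the dual slack matrix annihilates $\bm{X}^*$, i.e.\ the range of $\bm{X}^*$ lies in its null space, and the leading $n\times n$ block of that slack matrix is precisely $\bm{A}-\alpha^*\bm{A}_1-\beta^*\bm{A}_2$.

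The crux is the rank-one recovery, and this is where the third hypothesis enters. By a Barvinok--Pataki rank bound, since~\eqref{eq:B-1-SDP} carries only three affine constraints there is an optimal $\bm{X}^*$ of rank $r$ with $r(r+1)/2 \le 3$, hence $r \le 2$. I would then split on $\dim\mathcal{N}(\bm{A}-\alpha^*\bm{A}_1-\beta^*\bm{A}_2)$. When this dimension is $0$ the block is positive definite, the inner unconstrained minimization over $\bm{x}$ has a unique minimizer, and $\bm{X}^*$ is automatically rank one and de-homogenizable. When it is at least $2$, I would perform a rank-reduction step inside the null space: starting from the (at most) rank-two $\bm{X}^*$, combine it with a suitably chosen null direction to drive the rank down to one while preserving both feasibility and objective value, and crucially keeping the $(n+1,n+1)$ entry nonzero so the solution normalizes to a genuine feasible $\bm{x}^*$.

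The hard part will be exactly this de-homogenization inside the rank-reduction step, and verifying it is where the hypothesis $\dim\mathcal{N}\neq 1$ is indispensable. The excluded case $\dim\mathcal{N}=1$ is the notorious ``hard case'' of trust-region-type problems: the unique (up to scale) null vector may have a vanishing last coordinate, so its rank-one matrix cannot be normalized to a feasible QC2QP point, and the SDP value can then strictly undercut the QC2QP value. Excluding it guarantees that either the minimizer is unique (dimension $0$) or there is a two-dimensional null space offering enough freedom (dimension $\ge 2$) to rotate onto a rank-one matrix with nonzero last entry. I would spend the most care on showing that this rotation can always be carried out while simultaneously respecting both inequality constraints and the normalization, which is the delicate point on which the whole tightness argument turns.
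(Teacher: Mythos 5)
First, a point of reference: the paper contains no proof of this statement at all --- it is imported verbatim from \citet{beck2006strong} (their Theorem~3.5) and used as a black box in the proof of \Cref{thm:4-2-reduction}. So the benchmark is the argument in the cited source, and at the level of architecture your outline does track it: homogenization, identification of the SDP dual with the QC2QP Lagrangian dual, Slater-type attainment arguments, complementary slackness, a Pataki/Barvinok extreme-point rank bound ($r(r+1)/2 \le 3$, hence $r \le 2$), and a case split on $\dim\mathcal{N}$. Your $\dim\mathcal{N}=0$ case is also correct as sketched: the leading block of the dual slack matrix is $\bm{A}-\alpha^*\bm{A}_1-\beta^*\bm{A}_2$ (since $\bm{C}_0$ touches only the corner entry), so if that block is positive definite the slack has rank at least $n$, complementarity forces $\mathrm{rank}(\bm{X}^*)\le 1$, and $\bm{X}^*_{n+1,n+1}=1$ makes de-homogenization automatic. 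The genuine gap is that the case $\dim\mathcal{N}\ge 2$ --- which is the entire content of the theorem --- is only promised, and the plan as stated runs into an obstruction you explicitly flag but do not resolve. Worse, the case split itself does not directly organize the geometry: complementary slackness constrains $\mathrm{range}(\bm{X}^*)$ by the null space of the \emph{full} $(n{+}1)\times(n{+}1)$ slack matrix $\bm{Z}^*$, whereas the hypothesis concerns its leading $n\times n$ block, and for a PSD block matrix $\bm{Z}^*[\bm{v}^{\top},t]^{\top}=0$ does \emph{not} imply $(\bm{A}-\alpha^*\bm{A}_1-\beta^*\bm{A}_2)\bm{v}=0$. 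The source circumvents this by working on the Lagrangian side: the minimizers of $L(\cdot,\alpha^*,\beta^*)$ form an affine set $\bm{x}^0+\mathcal{N}(\bm{A}-\alpha^*\bm{A}_1-\beta^*\bm{A}_2)$, and when that null space has dimension $0$ or at least $2$ there is enough freedom to select a minimizer satisfying the two scalar complementarity conditions simultaneously --- precisely the selection that can fail when the dimension is $1$. Your matrix-side ``rotate within the rank-two range keeping $\bm{X}_{n+1,n+1}=1$'' is exactly the step where the dangerous configuration (every admissible null direction having vanishing last coordinate) must be excluded, and nothing in the sketch excludes it.

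A second, smaller but real, error is your reading of the second hypothesis. You treat ``$\exists\,\alpha_1,\alpha_2\in\R$ with $\alpha_1\bm{A}_1+\alpha_2\bm{A}_2\succ 0$'' as a coercivity condition yielding dual strict feasibility. But $\alpha_1,\alpha_2$ are permitted to be negative, while dual feasibility requires \emph{nonnegative} multipliers $\alpha,\beta$ with $\bm{A}-\alpha\bm{A}_1-\beta\bm{A}_2\succeq 0$; the hypothesis therefore furnishes no dual Slater point in general. In Beck and Eldar it is a regularity assumption on the matrix pencil (enabling a simultaneous-congruence normalization and guaranteeing solvability of the relaxation), and attainment of the primal SDP has to be argued separately, e.g.\ via a recession-cone analysis of the feasible set. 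In the paper's actual application (\Cref{thm:4-2-reduction}) this is harmless because $\bm{A}_x,\bm{A}_y\succeq 0$ with $\bm{A}_x+\bm{A}_y=\bm{I}$, so nonnegative coefficients suffice and the recession cone is trivial, but a proof of the general theorem cannot borrow that. In short: your outline is a plausible reconstruction of the cited proof's skeleton, but as a proof it is incomplete at the decisive $\dim\mathcal{N}\ge 2$ step and rests on a misattributed role for the second hypothesis.
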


Now we are ready to show \Cref{thm:4-2-reduction} using \Cref{thm:B-1-QC2QP_duality}.

\begin{proof}[Proof of \Cref{thm:4-2-reduction}]
  Note that \eqref{eq:4-2-optim_QC2QP} can be further equivalently rewritten with an auxiliary matrix $\bm{Z} = \bm{z}\bm{z}^{\top}$ as
  \begin{subequations}\label{eq:4-2-QC2QP_primal}
  \begin{align}
    \min_{\bm{z}, \bm{Z}}&\quad \tr(\bm{A} \bm{Z}) + 2 \bm{\beta}^{\top} \bm{z} + c \label{eq:4-2-QC2QP_primal:1}\\
    \mathrm{s.t.}&\quad \tr(\bm{A}_x \bm{Z}) - R_x^2 \leq 0, \label{eq:4-2-QC2QP_primal:2}\\
                 &\quad \tr(\bm{A}_y \bm{Z}) - R_y^2 \leq 0, \label{eq:4-2-QC2QP_primal:3}\\
                 &\quad \bm{Z} = \bm{z} \bm{z}^{\top}. \label{eq:4-2-QC2QP_primal:4}
  \end{align}
  \end{subequations}
  Therefore, if we partition $\bm{X}$ into blocks as
  \( \brak*{\begin{smallmatrix}
      \bm{Z} & \bm{z} \\
      \bm{z}^{\top} & 1
    \end{smallmatrix}} \),
  the objective function in \eqref{eq:4-2-QC2QP_primal:1} and the constraints \eqref{eq:4-2-QC2QP_primal:2}, \eqref{eq:4-2-QC2QP_primal:3} would be equivalent to \eqref{eq:4-2-QC2QP_dual-SDP:1} and \eqref{eq:4-2-QC2QP_dual-SDP:2}, respectively. Meanwhile, by properties of Schur complement, $\bm{X} \succeq 0$ is equivalent to $\bm{Z} \succeq \bm{z} \bm{z}^{\top}$, which is a relaxation of \eqref{eq:4-2-QC2QP_primal:4}. Therefore, \eqref{eq:4-2-QC2QP_dual-SDP} is an SDP relaxation of \eqref{eq:4-2-optim_QC2QP}.

  It only suffices to verify that strong duality holds for this SDP \eqref{eq:4-2-QC2QP_dual-SDP}, so that the optimal $\bm{z}^*$ for \eqref{eq:4-2-optim_QC2QP} can be recovered from optimal $\bm{X}^*$ for \eqref{eq:4-2-QC2QP_dual-SDP}. For this purpose, we only need to verify the sufficient conditions required by \Cref{thm:B-1-QC2QP_duality}:
  \begin{itemize}
    \item The inequality constraints are strictly feasible at, e.g., $\bm{z}_0 = \bm{0}$.
    \item $\bm{A}_x + \bm{A}_y = \bm{I}_{2d} \succ 0$.
    \item For any $\alpha, \beta \in \R$, since the determinant of $\bm{A} - \alpha \bm{A}_x - \beta \bm{A}_y$ is $(\alpha \beta - \frac{1}{4})^d$, the dimension of the null space is either $0$ (when $\alpha \beta \neq \frac{1}{4}$) or $d$ (when $\alpha \beta = \frac{1}{4}$).
  \end{itemize}
  Therefore, the theorem directly follows from \Cref{thm:B-1-QC2QP_duality}.
\end{proof}

Although the exact solution of SDPs is still computationally hard, there are a series of approximation algorithms that return good approximate solutions within reasonable time \citep{hazan2008sparse, gartner2012approximation}.

  \section{Theoretical Analysis}\label{sec:appx-C-proof}

In this section, we present the complete proof for the convergence analysis in \Cref{sec:5-analysis}. For the convenience of readers, we also repeat the statement of the results.

Throughout this section we assume the $(\xi,\eta)$-ambiguity set $\hat{\M}$ is rectangular with radii $(\r_{\xi,h}, \r_{\eta,h})$.

\subsection{Quasi-contraction Property}

We first show an important lemma that characterizes the \textit{quasi}-contraction property of the effective robust Bellman operator $\Vrlx^{\pi}_h$, which is similar to the contraction property of standard robust Bellman operators (see Theorem 3.2(a) in \citet{iyengar2005robust}). The difference lies in the fact that \eqref{eq:C-1-contraction:e0} only holds when we take expectation over $(s,a) \sim d^{\pi}_h$, and there is an additional constant term due to the gap between a pseudo-MDP obtained by perturbation and the closest MDP from it.

\begingroup
\def\thelemma{\ref{thm:5-contraction}}
\begin{lemma}
  For any step $h \in [H]$, any $V$-functions $V, V': \S \to \R$ and any policy $\pi$, we have
  \begin{equation}\label{eq:C-1-contraction:e0}
    \E[(s,a) \sim d^{\pi}_h]{[\Brlx^{\pi}_h V](s,a) - [\Brlx^{\pi}_h V'](s,a)} \leq \E[s' \sim \rho^{\pi}_{h+1}]{V(s') - V'(s')} + 2 \r_{\eta,h} \sqrt{d}.
  \end{equation}
\end{lemma}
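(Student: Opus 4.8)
The plan is to reduce the left-hand side of \eqref{eq:C-1-contraction:e0} to a comparison of two scalar optimization values. The crucial observation is that, by the definition of the operator in \eqref{eq:3-effective_robust_operator}, the minimizing perturbations $(\bxi^*, \bm{\eta}^*)$ of \eqref{eq:3-effective_robust_update} are shared across \emph{all} state-action pairs: they depend on $V$ only through the aggregate factor $\bomega_h^V := \bnu^{\circ}_h + \sum_{s'} V(s') \bmu^{\circ}_h(s')$ and through the averaged feature, never on the individual $(s,a)$. Consequently, taking $\E[(s,a) \sim d^{\pi}_h]{\cdot}$ and using linearity of expectation, the quantity $\E[(s,a) \sim d^{\pi}_h]{[\Brlx^{\pi}_h V](s,a)}$ collapses to $\angl*{\bar{\bphi}_h + \bm{\eta}^*_V, \bomega_h^V + \bxi^*_V}$, where $\bar{\bphi}_h := \E[(s,a) \sim d^{\pi}_h]{\bphi^{\circ}_h(s,a)}$, which is precisely the optimal value of \eqref{eq:3-effective_robust_update} for $V$. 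This is exactly where the ``quasi'' qualifier enters: the reduction holds only in expectation over $d^{\pi}_h$, not pointwise, since a single shared perturbation cannot be optimal for every $(s,a)$ simultaneously.

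Writing $F(V)$ for this optimal value and letting $(\bxi^*_{V'}, \bm{\eta}^*_{V'})$ denote the minimizer for $V'$, I would next invoke suboptimality: because $(\bxi^*_{V'}, \bm{\eta}^*_{V'}) \in \widehat{\M}_h$ is feasible for the $V$-problem, we have $F(V) \leq \angl*{\bar{\bphi}_h + \bm{\eta}^*_{V'}, \bomega_h^V + \bxi^*_{V'}}$, while $F(V')$ is attained exactly at that same perturbation. Subtracting, the $\bxi^*_{V'}$-terms cancel and we are left with $F(V) - F(V') \leq \angl*{\bar{\bphi}_h + \bm{\eta}^*_{V'},\, \bomega_h^V - \bomega_h^{V'}}$, where $\bomega_h^V - \bomega_h^{V'} = \sum_{s'} \prn[\big]{V(s') - V'(s')} \bmu^{\circ}_h(s')$.

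Finally I would split this inner product into a $\bar{\bphi}_h$ part and an $\bm{\eta}^*_{V'}$ part. For the first, $\angl*{\bar{\bphi}_h, \bmu^{\circ}_h(s')} = \sum_{s,a} d^{\pi}_h(s,a) \angl*{\bphi^{\circ}_h(s,a), \bmu^{\circ}_h(s')} = \sum_{s,a} d^{\pi}_h(s,a) \P^{\circ}_h(s' \mid s,a) = \rho^{\pi}_{h+1}(s')$, which reproduces exactly the main term $\E[s' \sim \rho^{\pi}_{h+1}]{V(s') - V'(s')}$. For the second, Cauchy--Schwarz gives $\angl*{\bm{\eta}^*_{V'}, \sum_{s'}(V - V')(s') \bmu^{\circ}_h(s')} \leq \norm{\bm{\eta}^*_{V'}} \cdot \norm[\big]{\sum_{s'}(V - V')(s') \bmu^{\circ}_h(s')} \leq \r_{\eta,h} \cdot 2\sqrt{d}$, using $\norm{\bm{\eta}^*_{V'}} \leq \r_{\eta,h}$ from membership in the ambiguity set. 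Summing the two contributions yields \eqref{eq:C-1-contraction:e0}.

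I expect the main subtlety to lie in the last bound rather than in a genuine obstacle: \Cref{assm:2-low-rank_MDP_bound} is stated for value functions mapping into $[0,H]$, so to control $\norm[\big]{\sum_{s'}(V - V')(s') \bmu^{\circ}_h(s')}$ by $2\sqrt{d}$ I must apply the assumption to $V$ and $V'$ \emph{separately} and then combine via the triangle inequality, which is precisely what forces the factor of $2$ in the constant $2\r_{\eta,h}\sqrt{d}$. The only other point requiring care is correctly exploiting the shared-perturbation structure in the first step, since the collapse to the optimization value (and hence the whole argument) genuinely relies on averaging over $d^{\pi}_h$ and would fail under a pointwise comparison.
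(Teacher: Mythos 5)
Your proposal is correct and follows essentially the same route as the paper's own proof: collapsing the expectation via the shared $(\bxi,\bm{\eta})$ perturbation, plugging the minimizer $(\bxi^*_{V'},\bm{\eta}^*_{V'})$ for $V'$ as a feasible suboptimal point in the $V$-problem so the $\bxi^*_{V'}$ terms cancel, identifying the averaged-feature inner product with the occupancy-measure term, and bounding the residual $\bm{\eta}^*_{V'}$ term by Cauchy--Schwarz with $\norm{\sum_{s'}(V-V')(s')\bmu^{\circ}_h(s')} \leq 2\sqrt{d}$. Your remark that the factor $2$ comes from applying \Cref{assm:2-low-rank_MDP_bound} to $V$ and $V'$ separately and combining by the triangle inequality is exactly the (implicit) justification the paper uses as well.
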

\addtocounter{theorem}{-1}
\endgroup

\begin{proof}
  Let $(\bm{\eta}^*_{V}, \bxi^*_{V})$, $(\bm{\eta}^*_{V'}, \bxi^*_{V'})$ be the optimal solutions to \eqref{eq:3-effective_robust_update} that attain minimum with respect to $V$, $V'$, respectively. For the sake of convenience, denote by $\bomega^{\circ}_{V} := \bnu^{\circ}_h + \sum_{s'} V(s') \bmu^{\circ}_h(s')$ and $\bomega^{\circ}_{V'} := \bnu^{\circ}_h + \sum_{s'} V'(s') \bmu^{\circ}_h(s')$ the parameters for the $Q$-functions obtained by updating $V$, $V'$ in the nominal feature space, respectively. Then we have
  \begin{subequations}\label{eq:C-1-contraction:e1}
  \begin{align}
    &\E[(s,a) \sim d^{\pi}_h]{[\Brlx^{\pi}_h V](s,a)} - \E[(s,a) \sim d^{\pi}_h]{[\Brlx^{\pi}_h V'](s,a)} \nonumber\\
    ={}& \angl*{\E[(s,a) \sim d^{\pi}_h]{\bphi^{\circ}(s,a)} + \bm{\eta}^*_{V}, \bomega^{\circ}_{V} + \bxi^*_{V}} - \angl*{\E[(s,a) \sim d^{\pi}_h]{\bphi^{\circ}(s,a)} + \bm{\eta}^*_{V'}, \bomega^{\circ}_{V'} + \bxi^*_{V'}} \label{eq:C-1-contraction:e1:1}\\
    ={}& \min_{\bm{\eta}_h, \bxi_h} \angl*{\E[(s,a) \sim d^{\pi}_h]{\bphi^{\circ}(s,a)} + \bm{\eta}_h, \bomega^{\circ}_{V} + \bxi_h} - \angl*{\E[(s,a) \sim d^{\pi}_h]{\bphi^{\circ}(s,a)} + \bm{\eta}^*_{V'}, \bomega^{\circ}_{V'} + \bxi^*_{V'}} \label{eq:C-1-contraction:e1:2}\\
    \leq{}& \angl*{\E[(s,a) \sim d^{\pi}_h]{\bphi^{\circ}(s,a)} + \bm{\eta}^*_{V'}, \bomega^{\circ}_{V} + \bxi^*_{V'}} - \angl*{\E[(s,a) \sim d^{\pi}_h]{\bphi^{\circ}(s,a)} + \bm{\eta}^*_{V'}, \bomega^{\circ}_{V'} + \bxi^*_{V'}} \label{eq:C-1-contraction:e1:3}\\
    ={}& \E[(s,a) \sim d^{\pi}_h]{\angl*{\bphi^{\circ}(s,a), \bomega^{\circ}_{V} - \bomega^{\circ}_{V'}}} + \angl*{\bm{\eta}^*_{V'}, \bomega^{\circ}_{V} - \bomega^{\circ}_{V'}} \label{eq:C-1-contraction:e1:4}\\
    ={}& \EE[(s,a) \sim d^{\pi}_h][s' \sim \P^{\circ}(\cdot | s,a)]{V(s') - V'(s')} + \angl*{\bm{\eta}^*_{V'}, \bomega^{\circ}_{V} - \bomega^{\circ}_{V'}} \label{eq:C-1-contraction:e1:5}\\
    \leq{}& \E[s' \sim \rho^{\pi}_{h+1}]{V(s') - V'(s')} + 2\r_{\eta,h} \sqrt{d}. \label{eq:C-1-contraction:e1:6}
  \end{align}
  \end{subequations}
  Here in \eqref{eq:C-1-contraction:e1:1} we plug in the definition of $\Brlx^{\pi}_h$ and apply linearity of inner product; in \eqref{eq:C-1-contraction:e1:2} and \eqref{eq:C-1-contraction:e1:3} we use the optimality of $(\bm{\eta}^*_{V}, \bxi^*_{V})$ with respect to $V$ in \eqref{eq:3-effective_robust_update}; in \eqref{eq:C-1-contraction:e1:4} we cancel out and rearrange the terms; in \eqref{eq:C-1-contraction:e1:5} we apply the fact that $\bomega^{\circ}_{V} - \bomega^{\circ}_{V'} = \sum_{s'} \prn[\big]{V(s') - V'(s')} \bmu^{\circ}_h(s')$, and further, $\angl*{\bphi^{\circ}(s,a), \bomega^{\circ}_{V} - \bomega^{\circ}_{V'}} = \sum_{s'} \prn[\big]{V(s') - V'(s')} \angl*{\bphi^{\circ}(s,a), \bmu^{\circ}(s')} = \E[s' \sim \P^{\circ}(\cdot | s,a)]{V(s') - V'(s')}$; in \eqref{eq:C-1-contraction:e1:5} we plug in the relation between state and state-action occupancy measures, and use the fact that $\norm{\bm{\eta}^*_{V'}} \leq \r_{\eta,h}$ and $\norm{\bomega^{\circ}_{V} - \bomega^{\circ}_{V'}} = \norm{\sum_{s'} \prn[\big]{V(s') - V'(s')} \bmu^{\circ}(s')} \leq 2\sqrt{d}$ (by \Cref{assm:2-low-rank_MDP_bound}).
\end{proof}

In contrast to the contraction properties of nominal and standard robust Bellman operators, here we have an additional constant term that is proportional to the perturbation radii of $\bm{\eta}_{1:H}$. This is actually intrinsic for pseudo-MDPs due to the misspecification error. Indeed, in \cite{yao2014pseudo} the bound also carries a constant that characterizes the misspecification error against MDPs.

It is also worth mentioning that \Cref{thm:5-contraction} helps to justify why we are allowed to ``roll out'' the trajectory to the $h$\tsup{th} step with policy $\pi$ in the \textit{nominal} system --- the influence of the perturbation is completely absorbed into the constant term.

\subsection{Performance Difference Lemma}

We proceed to show an extended version of the famous Performance Difference Lemma that characterizes the difference between policies in terms of low-rank robust values. The lemma is similar to a few other extended performance difference lemmas in literature (see, e.g., Lemma 1 in \citet{efroni2020optimistic}).

\begingroup
\def\thelemma{\ref{thm:5-performance_difference}}
\begin{lemma}[Extended Performance Difference Lemma]
  For any policies $\pi$ and $\pi'$, we have
  \begin{align}\label{eq:C-1-performance_difference:e0}
    &\Vrlx_1^{\pi}(\rho) - \Vrlx_1^{\pi'}(\rho)
    \leq{} \\
    &\hspace{3em} \sum_{h=1}^{H} \E[\pi, \P^{\circ}]{\angl*{\Qrlx_h^{\pi'}(s_h,\cdot), \pi_h(\cdot | s_h) - \pi'_h(\cdot | s_h)} + \angl*{[\Brlx^{\pi}_h \Vrlx_{h+1}^{\pi'}](s_h,\cdot) - \Qrlx_h^{\pi'}(s_h,\cdot), \pi_h(\cdot | s_h)}} + 2 \sum_{h=1}^{H}\r_{\eta,h} \sqrt{d}. \nonumber
  \end{align}
\end{lemma}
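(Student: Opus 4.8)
The plan is to establish the bound by a telescoping argument carried out entirely at the level of \emph{expected} value gaps, with the quasi-contraction property (\Cref{thm:5-contraction}) supplying the one-step recursion. The starting point is a pointwise three-way decomposition of the per-state gap $\Vrlx_h^{\pi}(s) - \Vrlx_h^{\pi'}(s)$. Writing $\Vrlx_h^{\pi}(s) = \angl{\pi_h(\cdot | s), \Qrlx_h^{\pi}(s,\cdot)}$ and $\Vrlx_h^{\pi'}(s) = \angl{\pi'_h(\cdot | s), \Qrlx_h^{\pi'}(s,\cdot)}$ and inserting the cross term $\angl{\pi_h(\cdot | s), \Qrlx_h^{\pi'}(s,\cdot)}$ gives
\begin{equation*}
\Vrlx_h^{\pi}(s) - \Vrlx_h^{\pi'}(s) = \angl{\pi_h(\cdot | s) - \pi'_h(\cdot | s), \Qrlx_h^{\pi'}(s,\cdot)} + \angl{\pi_h(\cdot | s), \Qrlx_h^{\pi}(s,\cdot) - \Qrlx_h^{\pi'}(s,\cdot)}.
\end{equation*}
For the second inner product I would further split $\Qrlx_h^{\pi} - \Qrlx_h^{\pi'} = [\Brlx^{\pi}_h \Vrlx_{h+1}^{\pi}] - [\Brlx^{\pi'}_h \Vrlx_{h+1}^{\pi'}]$ by inserting $[\Brlx^{\pi}_h \Vrlx_{h+1}^{\pi'}]$, producing a ``different-operator, same-value'' piece $[\Brlx^{\pi}_h \Vrlx_{h+1}^{\pi'}] - \Qrlx_h^{\pi'}$ and a ``same-operator, different-value'' piece $[\Brlx^{\pi}_h \Vrlx_{h+1}^{\pi}] - [\Brlx^{\pi}_h \Vrlx_{h+1}^{\pi'}]$. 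The first two of the three resulting terms are exactly the two summands appearing in the lemma; the third is the recursion term.

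Next I would take the expectation of this identity over $s \sim \rho^{\pi}_h$, the nominal state-occupancy measure under $\pi$. Because $d^{\pi}_h(s,a) = \rho^{\pi}_h(s)\,\pi_h(a | s)$, averaging $\angl{\pi_h(\cdot | s),\,\cdot\,}$ against $\rho^{\pi}_h$ is the same as averaging against $d^{\pi}_h$, so the recursion term becomes
\begin{equation*}
\E[(s,a) \sim d^{\pi}_h]{[\Brlx^{\pi}_h \Vrlx_{h+1}^{\pi}](s,a) - [\Brlx^{\pi}_h \Vrlx_{h+1}^{\pi'}](s,a)},
\end{equation*}
which is precisely the form to which \Cref{thm:5-contraction} applies with $V = \Vrlx_{h+1}^{\pi}$ and $V' = \Vrlx_{h+1}^{\pi'}$. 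That lemma bounds it by $\E[s' \sim \rho^{\pi}_{h+1}]{\Vrlx_{h+1}^{\pi}(s') - \Vrlx_{h+1}^{\pi'}(s')} + 2\r_{\eta,h}\sqrt{d}$.

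Defining $\Phi_h := \E[s \sim \rho^{\pi}_h]{\Vrlx_h^{\pi}(s) - \Vrlx_h^{\pi'}(s)}$ and recognizing $\E[s \sim \rho^{\pi}_h]{\,\cdot\,}$ as the trajectory expectation $\E[\pi,\P^{\circ}]{\,\cdot\,}$ evaluated at $s_h$, the above yields the one-step recursion $\Phi_h \leq (\text{lemma summand at step } h) + \Phi_{h+1} + 2\r_{\eta,h}\sqrt{d}$. I would then unroll this from $h=1$ to $H$, using the boundary condition $\Phi_{H+1} = 0$ (since $\Vrlx_{H+1}^{\pi} \equiv \Vrlx_{H+1}^{\pi'} \equiv 0$) and the identity $\Phi_1 = \Vrlx_1^{\pi}(\rho) - \Vrlx_1^{\pi'}(\rho)$, to reach the claimed bound.

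The main obstacle — and the reason the statement is an inequality rather than the equality of the classical performance difference lemma — is the recursion term. Because the new robust Bellman operator is only \emph{quasi}-contractive and \Cref{thm:5-contraction} holds only after averaging against $d^{\pi}_h$, there is no pointwise telescoping available; the entire argument must be organized around the expected gaps $\Phi_h$ from the very start. Each step then contributes the irreducible misspecification slack $2\r_{\eta,h}\sqrt{d}$, and these accumulate into the additive $2\sum_{h=1}^{H}\r_{\eta,h}\sqrt{d}$ term in the conclusion.
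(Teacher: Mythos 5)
Your proposal is correct and follows essentially the same route as the paper's proof: the same two cross-term insertions (first $\angl{\pi_h,\Qrlx_h^{\pi'}}$, then $[\Brlx^{\pi}_h \Vrlx_{h+1}^{\pi'}]$), the same conversion of the recursion term to an expectation over $d^{\pi}_h$ via $d^{\pi}_h(s,a)=\rho^{\pi}_h(s)\pi_h(a|s)$ so that \Cref{thm:5-contraction} applies, and the same unrolling with the boundary condition $\Vrlx^{\pi}_{H+1}\equiv\Vrlx^{\pi'}_{H+1}\equiv 0$. Your closing observation—that the quasi-contraction property forces the entire argument to be carried out in expectation over the nominal occupancy measures, yielding the accumulated $2\sum_{h}\r_{\eta,h}\sqrt{d}$ slack—is exactly the point the paper emphasizes after its proof.
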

\addtocounter{theorem}{-1}
\endgroup

\begin{proof}
  By definition of the effective robust value functions, we have
  \begin{subequations}\label{eq:C-1-performance_difference:e1}
  \begin{align}
    \E[s \sim \rho^{\pi}_h]{\Vrlx_h^{\pi}(s) - \Vrlx_h^{\pi'}(s)}
    &= \E[s \sim \rho^{\pi}_h]{\angl[\Big]{\Qrlx_h^{\pi}(s,\cdot), \pi_h(\cdot | s)} - \angl[\Big]{\Qrlx_h^{\pi'}(s,\cdot), \pi'_h(\cdot | s)}} \label{eq:C-1-performance_difference:e1:1}\\
    &= \E[s \sim \rho^{\pi}_h]{\angl*{\Qrlx_h^{\pi'}(s,\cdot), \pi_h(\cdot | s) - \pi'_h(\cdot | s)} + \angl*{\Qrlx_h^{\pi}(s,\cdot) - \Qrlx_h^{\pi'}(s,\cdot), \pi_h(\cdot | s)}} \label{eq:C-1-performance_difference:e1:2}\\
    &= \E[s \sim \rho^{\pi}_h]{\angl*{\Qrlx_h^{\pi'}(s,\cdot), \pi_h(\cdot | s) - \pi'_h(\cdot | s)} + \angl*{[\Brlx^{\pi}_h \Vrlx_{h+1}^{\pi}](s,\cdot) - \Qrlx^{\pi'}_h(s,\cdot), \pi_h(\cdot | s)}} \label{eq:C-1-performance_difference:e1:3}\\
    &= \E[s \sim \rho^{\pi}_h]{\angl*{\Qrlx_h^{\pi'}(s,\cdot), \pi_h(\cdot | s) - \pi'_h(\cdot | s)} + \angl*{[\Brlx^{\pi}_h \Vrlx_{h+1}^{\pi'}](s,\cdot) - \Qrlx_h^{\pi'}(s,\cdot), \pi_h(\cdot | s)}} \nonumber\\
    &\hspace{4em} {}+ \E[s \sim \rho^{\pi}_h]{\angl*{[\Brlx^{\pi}_h \Vrlx_{h+1}^{\pi}](s,\cdot) - [\Brlx^{\pi}_h \Vrlx_{h+1}^{\pi'}](s,\cdot), \pi_h(\cdot | s)}} \label{eq:C-1-performance_difference:e1:4}\\
    &= \E[s \sim \rho^{\pi}_h]{\angl*{\Qrlx_h^{\pi'}(s,\cdot), \pi_h(\cdot | s) - \pi'_h(\cdot | s)} + \angl*{[\Brlx^{\pi}_h \Vrlx_{h+1}^{\pi'}](s,\cdot) - \Qrlx_h^{\pi'}(s,\cdot), \pi_h(\cdot | s)}} \nonumber\\
    &\hspace{4em} {}+ \E[(s,a) \sim d^{\pi}_h]{[\Brlx^{\pi}_h \Vrlx_{h+1}^{\pi}](s,a) - [\Brlx^{\pi}_h \Vrlx_{h+1}^{\pi'}](s,a)} \label{eq:C-1-performance_difference:e1:5}\\
    &\leq \E[s \sim \rho^{\pi}_h]{\angl*{\Qrlx_h^{\pi'}(s,\cdot), \pi_h(\cdot | s) - \pi'_h(\cdot | s)} + \angl*{[\Brlx^{\pi}_h \Vrlx_{h+1}^{\pi'}](s,\cdot) - \Qrlx_h^{\pi'}(s,\cdot), \pi_h(\cdot | s)}} \nonumber\\
    &\hspace{4em} {}+ \E[s' \sim \rho^{\pi}_{h+1}]{\Vrlx_{h+1}^{\pi}(s') - \Vrlx_{h+1}^{\pi'}(s')} + 2\r_{\eta,h} \sqrt{d}, \label{eq:C-1-performance_difference:e1:6}
  \end{align}
  \end{subequations}
  where in \eqref{eq:C-1-performance_difference:e1:1} we plug in the relationship between $V$- and $Q$-functions; in \eqref{eq:C-1-performance_difference:e1:2} through \eqref{eq:C-1-performance_difference:e1:4} we rearrange the terms and plug in the effective robust Bellman operator; in \eqref{eq:C-1-performance_difference:e1:5} we use the fact that $d^{\pi}_h(s,a) = \rho^{\pi}_h(s) \pi(a | s)$; and in \eqref{eq:C-1-performance_difference:e1:6} we apply \Cref{thm:5-contraction}.
  
  Note that in \eqref{eq:C-1-performance_difference:e1:6} the term on the left-hand side recursively appears with subscript $h+1$. Therefore, by induction over $h$ and the conventional boundary condition $\Vrlx^{\pi}_{H+1}(\cdot) = \Vrlx^{\pi'}_{H+1}(\cdot) = 0$, we shall derive \eqref{eq:C-1-performance_difference:e0}. Note that we also rewrite the expectation using the fact that sampling $s_h \sim \rho^{\pi}_h$ is equivalent to rolling out the trajectory using policy $\pi$ in the nominal model. This completes the proof.
\end{proof}

We would like to point out again that the proposed extended version of Performance Difference Lemma is favorable in that the trajectory is rolled out with respect to the \textit{nominal} model, which greatly simplifies the subsequent analysis.

\subsection{Natural Policy Gradient}\label{sec:appx-NPG}

In each iteration of the proposed algorithm, after the robust evaluation of a given policy $\pi^k$, a subsequent policy update step improves the policy by the \textit{Natural Policy Gradient (NPG)} algorithm. The NPG algorithm in the episodic setting is closely related to the \textit{expert-advice} problem that is well-studied in online learning --- the update rule of the policy $\pi^k_h(\cdot | s)$ shall be interpreted as an expert-advice instance at each $(h,s) \in [H] \times \S$, where the loss of action $a$ (the ``expert'') is set to be $g^k_a := -\Qrlx^{\pi^k}_h(s,a)$, and the cost of decision $\bm{x} = \pi^k_h(\cdot | s)$ is set to be $\ell(\bm{x}^k) := \angl{\bm{g}^k, \bm{x}^k}$ (here we regard $\bm{g}^k \in \Delta(\A)$ as a vector in $\R^A$). Then we can use a specific variant of \textit{Online Mirror Descent (OMD)}, i.e. \textit{Exponentiated Gradient Descent}, to solve the problem. The regret of the algorithm can be characterized as the following theorem, which is well-known in literature (see, e.g., Section 6.6 in \cite{orabona2019modern}).

\begin{theorem}[Exponentiated Gradient Descent for expert-advice problem]
  For an expert-advice problem with expert set $\A$, where the expert cost vector is $\bm{g}^k \in \R^A$ in round $k$, suppose the decision $\bm{x}^k$ is updated by
  \begin{subequations}
  \begin{align}
    x^1_a &\gets \tfrac{1}{A},~ \forall a \in \A;\\
    x^{k+1}_a &\gets x^k_a \cdot \exp(-\alpha g^k_a),~ \forall a \in \A.
  \end{align}
  \end{subequations}
  Then the cumulative regret against any static decision $\bm{x} \in \R^A$ is upper bounded by
  \begin{equation}
    \sum_{k=1}^{K} \angl{\bm{g}^k, \bm{x}^k - \bm{x}} \leq \frac{\log d}{\alpha} + \frac{\alpha}{2} \sum_{k=1}^{K} \norm{\bm{g}^k}_{\infty}^2.
  \end{equation}
\end{theorem}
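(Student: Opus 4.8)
The plan is to prove this classical Hedge / exponentiated-gradient regret bound by a direct log-partition (potential) argument. First I would recast the update in terms of unnormalized weights $w^k_a$ defined by $w^1_a := 1$ and $w^{k+1}_a := w^k_a \exp(-\alpha g^k_a)$, so that the stated iterate is exactly the normalization $x^k_a = w^k_a / \sum_b w^k_b$, and introduce the potential $\Phi^k := \log \sum_a w^k_a$. The entire proof then reduces to sandwiching the total change $\Phi^{K+1} - \Phi^1$ between a one-step upper bound and a comparator-based lower bound.

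The core step is the per-round upper bound on the potential increment. Since $\Phi^{k+1} - \Phi^k = \log \mathbb{E}_{a \sim \bm{x}^k}\!\left[\exp(-\alpha g^k_a)\right]$, I would apply Hoeffding's lemma to the bounded random variable $-g^k_a$ (drawn with $a \sim \bm{x}^k$), whose range is at most $2\norm{\bm{g}^k}_{\infty}$. This yields
\[
  \Phi^{k+1} - \Phi^k \le -\alpha \angl{\bm{g}^k, \bm{x}^k} + \tfrac{\alpha^2}{2} \norm{\bm{g}^k}_{\infty}^2 .
\]
Summing over $k = 1, \dots, K$ telescopes the left-hand side, giving $\Phi^{K+1} - \Phi^1 \le -\alpha \sum_k \angl{\bm{g}^k, \bm{x}^k} + \tfrac{\alpha^2}{2} \sum_k \norm{\bm{g}^k}_{\infty}^2$.

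For the matching lower bound, note $\Phi^1 = \log A$ (uniform start over $A := \abs{\A}$ experts), and writing $G_a := \sum_{k=1}^K g^k_a$ I would invoke the Gibbs variational identity $\log \sum_a e^{-\alpha G_a} = \max_{\bm{x} \in \Delta(\A)} \bigl( -\alpha \angl{\bm{x}, \bm{G}} + H(\bm{x}) \bigr)$, where $H$ is the Shannon entropy. Since $H \ge 0$, this gives $\Phi^{K+1} \ge -\alpha \angl{\bm{x}, \bm{G}} = -\alpha \sum_k \angl{\bm{g}^k, \bm{x}}$ for every comparator $\bm{x} \in \Delta(\A)$. Chaining the two bounds and dividing by $\alpha$ produces exactly $\sum_k \angl{\bm{g}^k, \bm{x}^k - \bm{x}} \le \frac{\log A}{\alpha} + \frac{\alpha}{2} \sum_k \norm{\bm{g}^k}_{\infty}^2$, which is the claimed inequality (the $\log d$ in the statement being the log-cardinality of the expert set, i.e.\ $\log A$).

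I expect the main subtlety to be the per-step inequality: obtaining the sharp constant $\tfrac{1}{2}$ in front of $\norm{\bm{g}^k}_{\infty}^2$ requires the $(b-a)^2/8$ form of Hoeffding's lemma together with the range bound $b - a \le 2\norm{\bm{g}^k}_{\infty}$, and a cruder elementary estimate such as $e^{-u} \le 1 - u + u^2/2$ only covers $u \ge 0$ and thus fails for cost vectors of mixed sign. An equivalent and arguably cleaner route is to recognize the update as Online Mirror Descent with the negative-entropy regularizer $\psi(\bm{x}) = \sum_a x_a \log x_a$, which is $1$-strongly convex with respect to $\norm{\cdot}_1$ (Pinsker's inequality); the standard three-point telescoping then yields $\angl{\bm{g}^k, \bm{x}^k - \bm{x}} \le \tfrac{1}{\alpha}\bigl( D_\psi(\bm{x}, \bm{x}^k) - D_\psi(\bm{x}, \bm{x}^{k+1}) \bigr) + \tfrac{\alpha}{2} \norm{\bm{g}^k}_{\infty}^2$, and using $D_\psi(\bm{x}, \bm{x}^1) \le \log A$ for the uniform initialization recovers the same bound while making the comparator term transparent.
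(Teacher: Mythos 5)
Your proof is correct, but it is worth knowing that the paper never proves this theorem at all: it is invoked as a known result with a citation to Section 6.6 of Orabona's monograph, which develops exponentiated gradient as Online Mirror Descent with the negative-entropy regularizer --- precisely the ``alternative route'' you sketch in your last paragraph. Your primary argument is instead the classical Hedge potential analysis, and it is complete and self-contained: the reformulation via unnormalized weights $w^k_a$ with potential $\Phi^k = \log \sum_a w^k_a$, the per-round bound $\Phi^{k+1}-\Phi^k \le -\alpha\angl{\bm{g}^k,\bm{x}^k} + \tfrac{\alpha^2}{2}\norm{\bm{g}^k}_\infty^2$ via Hoeffding's lemma (where you correctly need the $(b-a)^2/8$ form with range $2\norm{\bm{g}^k}_\infty$ to get the constant $\tfrac12$ for mixed-sign costs), the telescoping sum, and the Gibbs variational lower bound $\Phi^{K+1} \ge -\alpha\angl{\bm{x},\bm{G}}$ all check out. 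You also silently repaired three defects in the statement as written: the $\log d$ in the bound should be $\log A$ (the log-cardinality of $\A$, as the paper's own Corollary C.2 confirms), the comparator must lie in $\Delta(\A)$ rather than all of $\R^A$ for the lower-bound step to make sense, and the update rule as displayed omits the normalization, which your identification $x^k_a = w^k_a/\sum_b w^k_b$ reinstates. As for what each approach buys: your potential argument is elementary and needs nothing beyond Hoeffding's lemma, whereas the OMD/Bregman route of the cited source makes the comparator term transparent and generalizes immediately to other regularizers and time-varying step sizes; either one is a legitimate way to discharge a result the paper leaves as a black box.
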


Now we shall switch to the standard RL notations to obtain the following corollary.

\begin{corollary}\label{thm:C-2-expert_regret_RL}
  For any $(h,s) \in [H] \times \S$, suppose the update of $\pi^k_h(\cdot | s)$ follows the rule specified in \Cref{alg:4-main}, then
  \begin{equation}
    \sum_{k=1}^{K} \angl*{\Qrlx_h^{\pi^k}(s,\cdot), \pirlx^*(\cdot | s) - \pi^k_h(\cdot | s)}
    \leq \frac{\log A}{\alpha} + \frac{\alpha}{2} \sum_{k=1}^{K} \norm*{Q^k_h(s,\cdot)}_{\infty}^2
  \end{equation}
\end{corollary}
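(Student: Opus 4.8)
The plan is to obtain this corollary as a direct instantiation of the Exponentiated Gradient Descent regret bound stated immediately above, translating the online-learning notation into the reinforcement-learning notation. Concretely, I would fix an arbitrary pair $(h,s) \in [H] \times \S$ and set up the expert-advice instance in which the expert set is $\A$, the per-round loss vector is $\bm{g}^k := -\Qrlx_h^{\pi^k}(s,\cdot) \in \R^A$, the algorithm's decision is $\bm{x}^k := \pi^k_h(\cdot | s)$, and the static comparator is $\bm{x} := \pirlx^*(\cdot | s)$. Since $\pirlx^*(\cdot | s) \in \Delta(\A) \subset \R^A$, it is an admissible comparator for the theorem, which permits any static $\bm{x} \in \R^A$.

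Next I would verify that the update produced by \Cref{alg:4-main} coincides exactly with the Exponentiated Gradient Descent update for this instance. The initialization $\pi^1_h(\cdot | s) \gets \Unif(\A)$ gives $x^1_a = 1/A$ for every $a$, matching the theorem's initialization. The NPG step $\pi^{k+1}_h(a | s) \propto \pi^k_h(a | s) \cdot \exp(\alpha \Qrlx_h^{\pi^k}(s,a))$ rewrites, using $g^k_a = -\Qrlx_h^{\pi^k}(s,a)$, as $x^{k+1}_a \propto x^k_a \cdot \exp(-\alpha g^k_a)$, which is precisely the multiplicative update in the theorem (the normalization constant is absorbed into the proportionality and does not affect the argument). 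With the hypotheses checked and $\abs{\A} = A$ experts, the regret bound applies verbatim and yields
\[
  \sum_{k=1}^{K} \angl*{\bm{g}^k, \bm{x}^k - \bm{x}} \leq \frac{\log A}{\alpha} + \frac{\alpha}{2} \sum_{k=1}^{K} \norm*{\bm{g}^k}_{\infty}^2 .
\]

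Finally I would substitute back. The left-hand side becomes $\sum_{k=1}^{K} \angl*{-\Qrlx_h^{\pi^k}(s,\cdot), \pi^k_h(\cdot | s) - \pirlx^*(\cdot | s)} = \sum_{k=1}^{K} \angl*{\Qrlx_h^{\pi^k}(s,\cdot), \pirlx^*(\cdot | s) - \pi^k_h(\cdot | s)}$, which is exactly the quantity in the corollary, and $\norm*{\bm{g}^k}_{\infty} = \norm*{\Qrlx_h^{\pi^k}(s,\cdot)}_{\infty} = \norm*{Q^k_h(s,\cdot)}_{\infty}$, matching the right-hand side. The only point requiring care — and the sole conceptual obstacle in an otherwise bookkeeping argument — is the sign convention: the policy update performs exponentiated \emph{ascent} on the reward-like quantity $\Qrlx_h^{\pi^k}$, whereas the cited regret bound is phrased for exponentiated \emph{descent} on a loss, so one must consistently take $\bm{g}^k = -\Qrlx_h^{\pi^k}(s,\cdot)$ throughout in order to flip the direction of the regret and land on the stated inequality.
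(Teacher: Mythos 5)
Your proposal is correct and is essentially the paper's own argument: the paper also obtains this corollary by instantiating the Exponentiated Gradient Descent regret theorem at each fixed $(h,s)$ with loss $\bm{g}^k = -\Qrlx_h^{\pi^k}(s,\cdot)$, decision $\pi^k_h(\cdot|s)$, and comparator $\pirlx^*(\cdot|s)$ (the paper simply says ``switch to the standard RL notations'' without spelling out the verification you give). Your explicit checks of the initialization, the sign convention, and the fact that $\pirlx^*(\cdot|s) \in \Delta(\A)$ makes the $\log A$ bound applicable are exactly the bookkeeping the paper leaves implicit, and they also quietly fix the paper's typo of $\log d$ for $\log A$ in the cited theorem.
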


\subsection{Convergence}

Finally, we conclude the proof by following a similar proof strategy as in \citep{liu2023optimistic} to show the convergence of \Cref{alg:4-main}. For this purpose, we also need the following technical lemma that characterizes the difference between robust Bellman operators with respect to different policies.

\begin{lemma}\label{thm:C-3-operator_difference}
  For any step $h$, any $V$-function $V: \S \to \R$ and any policies $\pi, \pi'$, we have
  \begin{equation}\label{eq:C-2-operator_difference:e0}
    [\Brlx^{\pi}_h V](s,a) - [\Brlx^{\pi'}_h V](s,a) \leq 2\r_{\xi,h} (1+\r_{\eta,h}) + 4\r_{\eta,h} \sqrt{d},~ \forall (s,a) \in \S \times \A.
  \end{equation}
\end{lemma}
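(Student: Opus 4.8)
The plan is to avoid comparing the two operators directly and instead bound each of them against the common \emph{nominal} anchor $[\Bnom_h V]$, then combine via the triangle inequality. The key observation is that although $\Brlx^{\pi}_h$ and $\Brlx^{\pi'}_h$ solve \emph{different} instances of \eqref{eq:3-effective_robust_update} (the averaged feature $\E_{(s,a)\sim d^{\pi}_h}[\bphi^{\circ}_h(s,a)]$ depends on the policy), their minimizers $(\bxi^{\pi}, \bm{\eta}^{\pi})$ and $(\bxi^{\pi'}, \bm{\eta}^{\pi'})$ both lie in the \emph{same} ambiguity set $\widehat{\M}_h$ and therefore obey the uniform norm bounds $\norm{\bxi} \le \r_{\xi,h}$ and $\norm{\bm{\eta}} \le \r_{\eta,h}$. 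Thus the policy enters only through which perturbation is selected, and the deviation of each update from the nominal value can be controlled independently of the policy.

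First I would fix a policy $\pi$, let $(\bxi^{\pi}, \bm{\eta}^{\pi}) \in \widehat{\M}_h$ be the minimizer defining $\Brlx^{\pi}_h$, write $\bomega^{\circ}_h := \bnu^{\circ}_h + \sum_{s'} V(s')\bmu^{\circ}_h(s')$, and recall $[\Bnom_h V](s,a) = \angl{\bphi^{\circ}_h(s,a), \bomega^{\circ}_h}$. Expanding the inner product gives $[\Brlx^{\pi}_h V](s,a) - [\Bnom_h V](s,a) = \angl*{\bphi^{\circ}_h(s,a), \bxi^{\pi}} + \angl*{\bm{\eta}^{\pi}, \bomega^{\circ}_h} + \angl*{\bm{\eta}^{\pi}, \bxi^{\pi}}$. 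Applying Cauchy--Schwarz to each term together with $\norm{\bphi^{\circ}_h(s,a)} \le 1$ (\Cref{assm:2-low-rank_MDP_bound}), $\norm{\bomega^{\circ}_h} \le 2\sqrt{d}$ (exactly as in the proof of \Cref{thm:A-2-diff_nominal_effective}), $\norm{\bxi^{\pi}} \le \r_{\xi,h}$ and $\norm{\bm{\eta}^{\pi}} \le \r_{\eta,h}$ yields the two-sided estimate $\bigl| [\Brlx^{\pi}_h V](s,a) - [\Bnom_h V](s,a) \bigr| \le \r_{\xi,h} + 2\r_{\eta,h}\sqrt{d} + \r_{\eta,h}\r_{\xi,h} = (1+\r_{\eta,h})\r_{\xi,h} + 2\r_{\eta,h}\sqrt{d}$, valid for every $(s,a)$. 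This is precisely the bound of \Cref{thm:A-2-diff_nominal_effective}, now read as a symmetric absolute-value estimate.

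Since the estimate holds uniformly in the policy, I would invoke it once for $\pi$ and once for $\pi'$ and decompose $[\Brlx^{\pi}_h V](s,a) - [\Brlx^{\pi'}_h V](s,a) = \bigl( [\Brlx^{\pi}_h V](s,a) - [\Bnom_h V](s,a) \bigr) + \bigl( [\Bnom_h V](s,a) - [\Brlx^{\pi'}_h V](s,a) \bigr)$. Each summand is at most $(1+\r_{\eta,h})\r_{\xi,h} + 2\r_{\eta,h}\sqrt{d}$ in absolute value, so the triangle inequality delivers the claimed bound $2\r_{\xi,h}(1+\r_{\eta,h}) + 4\r_{\eta,h}\sqrt{d}$.

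I expect the only subtlety---more a matter of recognition than a genuine obstacle---to be the realization that the two operators should \emph{not} be compared directly: routing the difference through the nominal anchor $[\Bnom_h V]$ decouples the two policy-dependent minimizers, so that no control whatsoever on the discrepancy between $d^{\pi}_h$ and $d^{\pi'}_h$ is ever needed. Once this decoupling is in place, the remaining work is routine Cauchy--Schwarz bookkeeping already performed in \Cref{thm:A-2-diff_nominal_effective}, and the factor of two simply reflects paying the nominal-to-robust gap twice.
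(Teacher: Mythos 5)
Your proof is correct and is essentially the paper's own argument reorganized: the paper differences the two robust updates directly, groups $\bm{\eta}^*_{\pi}-\bm{\eta}^*_{\pi'}$ and $\bxi^*_{\pi}-\bxi^*_{\pi'}$, and then applies Cauchy--Schwarz, while you route through the nominal anchor $[\Bnom_h V]$ and pay the nominal-to-robust gap (a two-sided reading of \Cref{thm:A-2-diff_nominal_effective}) twice; since the $\angl{\bphi^{\circ}_h(s,a),\bomega^{\circ}_h}$ terms cancel, the resulting algebra and constants are identical, giving $2\r_{\xi,h}(1+\r_{\eta,h}) + 4\r_{\eta,h}\sqrt{d}$ either way. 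Both arguments rest on the same key observation you highlight---that the two policy-dependent minimizers lie in the same ambiguity set $\widehat{\M}_h$, so the uniform norm bounds $\norm{\bxi}\leq \r_{\xi,h}$, $\norm{\bm{\eta}}\leq \r_{\eta,h}$ and \Cref{assm:2-low-rank_MDP_bound} apply regardless of how $d^{\pi}_h$ and $d^{\pi'}_h$ differ.
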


\begin{proof}
  Let $(\bm{\eta}^*_{\pi}, \bxi^*_{\pi})$, $(\bm{\eta}^*_{\pi'}, \bxi^*_{\pi'})$ be the optimal solutions to \eqref{eq:3-effective_robust_update} that attain minimum with respect to effective robust Bellman operators $\Brlx^{\pi}_h$, $\Brlx^{\pi'}_h$, respectively. Then for any $(s,a) \in \S \times \A$, we have
  \begin{subequations}\label{eq:C-2-operator_difference:e1}
  \begin{align}
    [\Brlx^{\pi}_h V](s,a) - [\Brlx^{\pi'}_h V](s,a)
    &= \angl*{\bphi^{\circ}_h(s,a) + \bm{\eta}^*_{\pi}, \bomega^{\circ}_h + \bxi^*_{\pi}} - \angl*{\bphi^{\circ}_h(s,a) + \bm{\eta}^*_{\pi'}, \bomega^{\circ}_h + \bxi^*_{\pi'}} \label{eq:C-2-operator_difference:e1:1}\\
    &= \angl*{\bm{\eta}^*_{\pi} - \bm{\eta}^*_{\pi'}, \bm{\bomega}^{\circ}_h} + \angl*{\bphi^{\circ}_h(s,a), \bxi^*_{\pi} - \bxi^*_{\pi'}} + \angl*{\bm{\eta}^*_{\pi}, \bxi^*_{\pi}} - \angl*{\bm{\eta}^*_{\pi'}, \bxi^*_{\pi'}} \label{eq:C-2-operator_difference:e1:2}\\
    &\leq \norm{\bm{\eta}^*_{\pi} - \bm{\eta}^*_{\pi'}} \norm{\bm{\bomega}^{\circ}_h} + \norm{\bphi^{\circ}_h(s,a)} \norm{\bxi^*_{\pi} - \bxi^*_{\pi'}} + \norm{\bm{\eta}^*_{\pi}} \norm{\bxi^*_{\pi}} + \norm{\bm{\eta}^*_{\pi'}} \norm{\bxi^*_{\pi'}} \label{eq:C-2-operator_difference:e1:3}\\
    &\leq 2\r_{\xi,h} (1+\r_{\eta,h}) + 4\r_{\eta,h} \sqrt{d}, \label{eq:C-2-operator_difference:e1:4}
  \end{align}
  \end{subequations}
  where in \eqref{eq:C-2-operator_difference:e1:1} we plug in the definition of effective robust Bellman operators; in \eqref{eq:C-2-operator_difference:e1:2} and \eqref{eq:C-2-operator_difference:e1:3} we rearrange the terms and apply triangle inequality; in \eqref{eq:C-2-operator_difference:e1:4} we plug in $(\xi,\eta)$-rectangularity, \Cref{assm:2-low-rank_MDP_bound} and its corollary that $\norm{\bomega^{\circ}} = \norm{\bnu^{\circ}_h + \sum_{s'} V(s') \bmu^{\circ}_h(s')} \leq 2\sqrt{d}$. This completes the proof.
\end{proof}

\begingroup
\def\thetheorem{\ref{thm:5-convergence}}
\begin{theorem}
  Under \Cref{assm:2-low-rank_MDP_bound}, by running \Cref{alg:4-main} with parameter $\alpha = \sqrt{2\log A / (KH^2)}$, the effective robust $V$-function of the output policy $\pi^{\mathrm{out}}$ satisfies
  \begin{equation}\label{eq:C-3-convergence:e0}
    \E[\pi^{\mathrm{out}}]{\Vrlx_1^*(\rho) - \Vrlx_1^{\pi^{\mathrm{out}}}(\rho)}
    \leq \sqrt{\frac{2H^4 \log A}{K}} + \sum_{h=1}^{H} \prn[\big]{2\r_{\xi,h} (1+\r_{\eta,h}) + 6\r_{\eta,h} \sqrt{d}}.
  \end{equation}
\end{theorem}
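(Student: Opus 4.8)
The plan is to reduce the suboptimality of the randomly drawn output policy to an average over the $K$ iterates and then control that average with the three ingredients already established. Since $\pi^{\mathrm{out}} \sim \Unif(\pi^{1:K})$ and $\Vrlx_1^*(\rho) = \Vrlx_1^{\pirlx^*}(\rho)$ does not depend on the draw, I would first write $\E[\pi^{\mathrm{out}}]{\Vrlx_1^*(\rho) - \Vrlx_1^{\pi^{\mathrm{out}}}(\rho)} = \frac{1}{K}\sum_{k=1}^{K}\prn[\big]{\Vrlx_1^{\pirlx^*}(\rho) - \Vrlx_1^{\pi^k}(\rho)}$. To each summand I would apply the Extended Performance Difference Lemma (\Cref{thm:5-performance_difference}) with $\pi = \pirlx^*$ and $\pi' = \pi^k$, splitting the gap into (i) a \emph{policy-mismatch} term $\sum_h \E[\pirlx^*,\P^{\circ}]{\angl*{\Qrlx_h^{\pi^k}(s_h,\cdot),\, \pirlx^*_h(\cdot|s_h) - \pi^k_h(\cdot|s_h)}}$, (ii) a \emph{Bellman-mismatch} term $\sum_h \E[\pirlx^*,\P^{\circ}]{\angl*{[\Brlx^{\pirlx^*}_h \Vrlx_{h+1}^{\pi^k}](s_h,\cdot) - \Qrlx_h^{\pi^k}(s_h,\cdot),\, \pirlx^*_h(\cdot|s_h)}}$, and (iii) the constant $2\sum_h \r_{\eta,h}\sqrt{d}$.

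For term (i) I would sum over $k$ before bounding. Fixing $(h,s)$ and reading $\Qrlx_h^{\pi^k}$ as the loss vector driving the NPG update at that state, the inner sum $\sum_{k}\angl*{\Qrlx_h^{\pi^k}(s,\cdot),\pirlx^*_h(\cdot|s)-\pi^k_h(\cdot|s)}$ is exactly the expert-advice regret controlled by \Cref{thm:C-2-expert_regret_RL}, i.e.\ $\tfrac{\log A}{\alpha} + \tfrac{\alpha}{2}\sum_k \norm{\Qrlx^{\pi^k}_h(s,\cdot)}_\infty^2$. Using $\norm{\Qrlx^{\pi^k}_h}_\infty \le H$ and taking the (pointwise-valid) expectation over $s_h\sim\rho^{\pirlx^*}_h$ and summing over the $H$ stages yields $H\prn[\big]{\tfrac{\log A}{\alpha}+\tfrac{\alpha KH^2}{2}}$; dividing by $K$ gives $\tfrac{H\log A}{\alpha K}+\tfrac{\alpha H^3}{2}$, and substituting $\alpha=\sqrt{2\log A/(KH^2)}$ balances the two pieces so that each equals $H^2\sqrt{\log A/(2K)}$, summing to exactly $\sqrt{2H^4\log A/K}$.

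For term (ii) the crucial observation is that $\Qrlx_h^{\pi^k} = [\Brlx^{\pi^k}_h \Vrlx_{h+1}^{\pi^k}]$, so the integrand is the difference of two \emph{policy-indexed} robust Bellman operators applied to the \emph{same} value function $\Vrlx_{h+1}^{\pi^k}$; \Cref{thm:C-3-operator_difference} then bounds it pointwise by $2\r_{\xi,h}(1+\r_{\eta,h})+4\r_{\eta,h}\sqrt{d}$, and since $\pirlx^*_h(\cdot|s)$ is a probability vector the inner product and expectation preserve this bound, which is moreover independent of $k$ so the average over $k$ leaves it untouched. Adding the regret contribution from (i), term (ii), and the constant (iii) gives $\sqrt{2H^4\log A/K} + \sum_{h=1}^{H}\prn[\big]{2\r_{\xi,h}(1+\r_{\eta,h})+6\r_{\eta,h}\sqrt{d}}$, matching the claim. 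The main obstacle I anticipate is term (i): one must recognize that the NPG iterate and the comparator $\pirlx^*$ enter precisely in the mirror-descent regret form, justify the uniform bound $\norm{\Qrlx^{\pi^k}_h}_\infty \le H$ (the upper side is the ordinal relation $\Qrlx\le\Qnom\le H$ from \Cref{thm:3-value_relation}, with a matching lower bound from \Cref{assm:2-low-rank_MDP_bound} via Cauchy--Schwarz on the perturbed factors), and check that the prescribed $\alpha$ equalizes the two regret pieces to yield the advertised $\Theta(\sqrt{H^4\log A/K})$ rate; the remaining steps are bookkeeping against the already-proven lemmas.
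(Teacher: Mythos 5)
Your proposal is correct and follows essentially the same route as the paper's proof: average over the $K$ iterates, apply the extended performance difference lemma with $\pi = \pirlx^*$ and $\pi' = \pi^k$, control the policy-mismatch term via the expert-advice regret bound (\Cref{thm:C-2-expert_regret_RL}) with the same step-size arithmetic, and bound the Bellman-mismatch term pointwise via \Cref{thm:C-3-operator_difference}. The only cosmetic differences are that the paper passes through a $\max_{h,s}$ before invoking the regret bound (where you take the expectation of the uniform per-state bound, which is equivalent), and your rollout distribution $\E[\pirlx^*,\P^{\circ}]$ is actually the correct reading of the lemma, which the paper's display mistypes as $\E[\pi^k,\P^{\circ}]$.
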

\addtocounter{theorem}{-1}
\endgroup

\begin{proof}
  Let $\pirlx^*$ be the effective robust optimal policy defined in \eqref{eq:3-effective_robust_policy}, and write $\Vrlx^*_h(s)$, $\Qrlx^*_h(s)$ for its effective robust value functions as a shorthand. Then, according to the algorithm, we have
  \begin{subequations}\label{eq:C-3-convergence:e1}
  \begin{align}
    &\E[\pi^{\mathrm{out}}]{\Vrlx_1^*(\rho) - \Vrlx_1^{\pi^{\mathrm{out}}}(\rho)}
    = \frac{1}{K} \sum_{k=1}^{K} \brak*{\Vrlx_1^*(\rho) - \Vrlx_1^{\pi^k}(\rho)} \label{eq:C-3-convergence:e1:1}\\
    \leq{}& \frac{1}{K} \sum_{k=1}^{K} \sum_{h=1}^{H} \E[\pi^k, \P^{\circ}]{\angl*{\Qrlx_h^{\pi^k}(s_h,\cdot), \pirlx^*(\cdot | s_h) - \pi^k_h(\cdot | s_h)} + \angl*{[\Brlx^{\pirlx^*}_h \Vrlx_{h+1}^{\pi^k}](s_h,\cdot) - \Qrlx_h^{\pi^k}(s_h,\cdot), \pirlx^*_h(\cdot | s_h)}} + 2H\r_{\eta,h} \sqrt{d} \label{eq:C-3-convergence:e1:2}\\
    \leq{}& \frac{H}{K} \max_{h,s}\brac*{\sum_{k=1}^{K} \angl*{\Qrlx_h^{\pi^k}(s,\cdot), \pirlx^*(\cdot | s) - \pi^k_h(\cdot | s)}} + \sum_{h=1}^{H} \prn[\big]{2\r_{\xi,h} (1+\r_{\eta,h}) + 4\r_{\eta,h} \sqrt{d}} + 2 \sum_{h=1}^{H} \r_{\eta,h} \sqrt{d}, \label{eq:C-3-convergence:e1:3}
  \end{align}
  \end{subequations}
  where \eqref{eq:C-3-convergence:e1:1} is dictated by the algorithm; in \eqref{eq:C-3-convergence:e1:2} we apply \Cref{thm:5-performance_difference} with respect to $\pi \gets \pirlx^*$ and $\pi' \gets \pi^k$; while in \eqref{eq:C-3-convergence:e1:3} we take the maximum over the first term and apply \Cref{thm:C-3-operator_difference} for the second term.
  
  It only suffices to bound the first term, for which we shall apply \Cref{thm:C-2-expert_regret_RL} to obtain
  \begin{equation}
    \max_{h,s}\brac*{\sum_{k=1}^{K} \angl*{\Qrlx_h^{\pi^k}(s,\cdot), \pirlx^*(\cdot | s) - \pi^k_h(\cdot | s)}}
    \leq \frac{\log A}{\alpha} + \frac{\alpha}{2} \sum_{k=1}^{K} \norm*{Q^k_h(s,\cdot)}_{\infty}^2
    \leq \frac{\log A}{\alpha} + \frac{\alpha KH^2}{2},
  \end{equation}
  where we also use the fact $Q \in [0, H]$. Now we shall take $\alpha = \sqrt{2\log A / (KH^2)}$ to derive
  \begin{equation}
     \max_{h,s}\brac*{\sum_{k=1}^{K} \angl*{\Qrlx_h^{\pi^k}(s,\cdot), \pirlx^*(\cdot | s) - \pi^k_h(\cdot | s)}} \leq \sqrt{2KH^2 \log A},
  \end{equation}
  and consequently,
  \begin{equation}
    \E[\pi^{\mathrm{out}}]{\Vrlx_1^*(\rho) - \Vrlx_1^{\pi^{\mathrm{out}}}(\rho)}
    \leq \sqrt{\frac{2H^4 \log A}{K}} + \sum_{h=1}^{H} \prn[\big]{2\r_{\xi,h} (1+\r_{\eta,h}) + 6\r_{\eta,h} \sqrt{d}}.
  \end{equation}
  This completes the proof.
\end{proof}
  \section{Numerical Simulations}\label{sec:appx-D-simulation}

\subsection{A Toy Model}

In this section, we study the numerical performance of our R\tsup{2}PG algorithm on a toy model. We show by experimental results that the optimal nominal policy may be sensitive to small perturbations on the MDP, which highlights the need for robust MDPs. We also compare the performance of the output policy against the optimal standard robust policy to reveal the difference between different robustness concepts.

\begin{minipage}[b]{0.36\linewidth}
  \centering
  \begin{figure}[H]
    \centering
      \tikzset{snode/.style = {draw=black, shape=circle, line width=1.0pt, inner sep=0pt, minimum width=16pt}}
  \tikzset{apath/.style = {draw=black, line width=0.5pt, ->, >={Stealth}}}
  \begin{tikzpicture}
    \node[snode, fill=red!30!white] (s1) at (0pt, 20pt) {$s_1$};
    \node[snode, fill=yellow!30!white] (s2) at (20pt, 0pt) {$s_2$};
    \node[snode, fill=green!30!white] (s3) at (0pt, -20pt) {$s_3$};
    \node[snode, fill=yellow!30!white] (s4) at (-20pt, 0pt) {$s_4$};

    \draw[apath] (s1) edge[bend left] (s2);
    \draw[apath] (s2) edge[bend left] (s3);
    \draw[apath] (s3) edge[bend left] (s4);
    \draw[apath] (s4) edge[bend left] (s1);

    \draw[apath] (s2) edge[bend left] (s1);
    \draw[apath] (s3) edge[bend left] (s2);
    \draw[apath] (s4) edge[bend left] (s3);
    \draw[apath] (s1) edge[bend left] (s4);
    
    \draw[apath] (s1) edge[loop above] (s1);
    \draw[apath] (s2) edge[loop right] (s2);
    \draw[apath] (s3) edge[loop below] (s3);
    \draw[apath] (s4) edge[loop left] (s4);
  \end{tikzpicture}\vspace{6pt}
    \caption{Setup.}\label{fig:D-simulation_MDP}
  \end{figure}
\end{minipage}
\begin{minipage}[b]{0.6\linewidth}
  \centering
  \begin{figure}[H]
    \centering
    \includegraphics[width=0.6\linewidth]{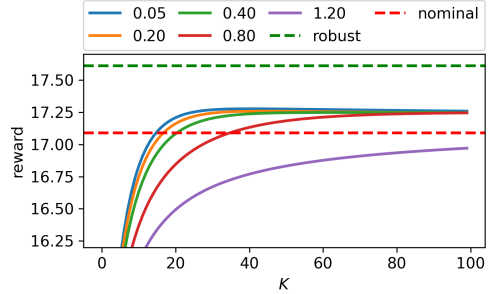}\vspace{-10pt}
    \caption{Output policies with different $\r_{\xi,h}$.}\label{fig:D-simulation_results}
  \end{figure}  
\end{minipage}

\paragraph{Setup.} Consider a tabular MDP with state space $\S = \set{s_1, s_2, s_3, s_4}$ and action space $\A = \set{\curvearrowleft, \downarrow, \curvearrowright}$. In the nominal model, the rewards are set to be
\begin{equation}
  r^{\circ}_h(s_1, \cdot) = 0,\quad
  r^{\circ}_h(s_2, \cdot) = 0.90,\quad
  r^{\circ}_h(s_3, \cdot) = 0.89,\quad
  r^{\circ}_h(s_4, \cdot) = 0.91,\quad
  \forall h \in [H].
\end{equation}
Intuitively, $s_1$ is the 0-reward state to be avoided (marked in red), $s_2$ and $s_4$ are higher-reward states subject to risk (after potential perturbation, marked in yellow), and $s_3$ is the lower-reward safe state (marked in green). The transitions are deterministic, as the arrows suggest --- $\curvearrowleft$ moves 1 step to the counter-clockwise direction, $\downarrow$ stays at the current state, and $\curvearrowright$ moves 1 step to the clockwise direction; formally, the transition probabilities are set to be
\begin{equation}
  \P^{\circ}_h(s_{i-1} | s_i, \curvearrowleft) = \P^{\circ}_h(s_i | s_i, \downarrow) = \P^{\circ}_h(s_{i+1} | s_i, \curvearrowright) = 1,
\end{equation}
where subscripts are understood as modulo 4, i.e. $s_0 = s_4$ and $s_5 = s_1$, while all the unspecified probabilities are set to 0 by default. For the low-rank representation, we use the standard orthonormal feature for tabular MDPs, i.e. $\bphi^{\circ}_h(s,a) = \bm{e}_{(s,a)}$, where $\bm{e}_{(\cdot,\cdot)}$ forms an orthonormal basis in $\R^{S \times A}$. The MDP is illustrated in \Cref{fig:D-simulation_MDP} above.

We may use different methods to solve the MDP, including solving for the optimal nominal policy via dynamic programming, solving for the optimal standard robust policy via robust dynamic programming, and running our R\tsup{2}PG algorithm with different perturbation radii $(\r_{\xi,h}, \r_{\eta,h})$. The output policies are evaluated with respect to a few randomly perturbed MDPs around the nominal model, where all transition probabilities are subject to uncertainty at most $\delta$, and the lowest cumulative reward is reported as the ``empirical robust value'' of the policy.

\paragraph{Results.} The R\tsup{2}PG algorithm is run with different perturbation radii, and the policies obtained in all the episodes are evaluated by the \textit{minimum} cumulative reward evaluated in a few perturbed MDPs. Simulation results for $R_{\eta,h} = 0.01$ and $R_{\xi,h} \in \set{0.05,0.2,0.4,0.8,1.2}$ are plotted in \Cref{fig:6-simulation_results}. It can be observed that policies tend to converge in all executions, and a larger perturbation radius generally leads to more conservative behavior. This phenomenon is largely expected in that, as perturbation radius increases, the misspecification error induced by the worst-case pseudo-MDPs also increases, which leads to an intrinsically pessimistic estimation of policy values. However, the output policies still perform better than the nominal optimal policy when the MDP is appropriately perturbed, highlighting again the need for robustness in environments with uncertainty.

Analyzing the output policies in details, we shall further find that, as time elapses, all output policies gradually lean towards the safe state by increasing the transition probabilities to it. However, since the D\tsup{2}PG algorithm is designed to optimize over the average performance for policy evaluation, it is also reasonable that it does not fully converge to a policy that yields optimal worst-case performance.

\paragraph{Reproducibility.} The code for reproducing the simulation can be found online at \url{https://anonymous.4open.science/r/robust-linear-MDP_ICML-24-249E/}.

\subsection{Inverted Pendulum}

In this section, we show the numerical performance of our robust policy evaluation scheme for a standard nonlinear control task with continuous state-action spaces, which helps to justify the efficiency and realizability of our approach.

\paragraph{Settings.} The robust policy evaluation scheme is tested for Inverted Pendulum, a classic nonlinear control task, the $\delta$-discretized dynamics of which is given by
\begin{equation}
  \underbrace{\begin{bmatrix}
    \theta_{t+1} \\ \dot{\theta}_{t+1}
  \end{bmatrix}}_{s_{t+1}} = \underbrace{\begin{bmatrix}
    \theta_t \\ \dot{\theta}_t
  \end{bmatrix} + \begin{bmatrix}
    \dot{\theta}_t \\
    \frac{3g \sin \theta_t}{2\ell} + \frac{3T_t}{m \ell^2}
  \end{bmatrix} \delta}_{f(s_t, a_t)} {}+ \epsilon_t,
\end{equation}
where the state $s_t := [\theta_t, \dot{\theta}_t]^{\top}$ consists of the angular position $\theta \in [-\pi,\pi]$ and angular velocity $\dot{\theta}$ of the pendulum, and the action $a_t := T_t$ is the torque applied on the pendulum; $\delta$ is the discretization interval, and $\epsilon_t \sim \mathcal{N}(0, \sigma^2 I_n)$ is an i.i.d. Gaussian noise; the mass $m$ and the length $\ell$ of the pendulum are system parameters that are subject to potential perturbations. The reward function is defined as $r(s, a) = -(\theta^2 + 0.01 \dot{\theta}^2 + 0.001 a^2)$, aiming at stabilizing the pendulum at the upright position ($\theta = \dot{\theta} = 0$). The pendulum is initially released at a random position, and the task is to swing it up.

We are concerned about the planning problem in this MDP. For the ease of practical implementation and comparison against the results in \citet{ren2023stochastic}, we consider a slightly different objective of infinite-horizon $\gamma$-discounted cumulative reward. To test the robustness of the output behavior, we perturb the mass $m$ around the nominal value $m^{\circ} = 1.0$ during evaluation, and compare the robust policy against the standard non-robust policy in terms of discounted cumulative reward.



\paragraph{Algorithm.} According to \citet{ren2023stochastic}, given known dynamics $f$ and assuming Gaussian noise, we are able to construct time-invariant feature and factor vectors $\bphi^{\circ}(s,a)$ and $\bmu^{\circ}(s')$ in their closed forms, so that we can skip the learning of representation and focus on displaying the effectiveness of the proposed representation-based perturbation method.

We adapt the Spectral Dynamics Embedding Control (SDEC) algorithm from \citet{ren2023stochastic} by adding the $(\xi,\eta)$-perturbation term into the spectral dynamics representation in order to incorporate our robust policy evaluation scheme. The complete algorithm is shown in \Cref{alg:empirical} below. To accommodate the infinite state-action space, we perform SGD for regularized objective (see \Cref{sec:large_state_space}) to solve the optimization, as shown in line \ref{line:pertubation} of \Cref{alg:empirical}. The nominal $Q$-factor $\bomega^{\circ,k}$ is approximated by value iteration in the infinite-horizon setting. The perturbation radii are selected as $R_{\xi} = R_{\eta} = 3$ based on estimated magnitude of the feature and factor vectors.

\begin{algorithm}[h]
  \caption{Representation-Robust Stochastic Dynamics Embedding Control}\label{alg:empirical}
  \begin{algorithmic}[1]
    \State Sample $w_i \sim \mathcal{N}(0, \sigma^{-2} I_n)$ and $b_i \sim \mathsf{Unif}([0, 2\pi])$ i.i.d. to construct $\bphi^{\circ}(s,a)$ and $\bmu^{\circ}(s')$ as in \eqref{eq:D-spectral-feature}.
    \State Initialize policy $\pi^0(\cdot | s) \gets \mathsf{Unif}(\A)$.
    \For{$k = 0, 1, \cdots, K$}
      \State Sample $\{(s_i, a_i, s'_i, a'_i) \}_{i \in [N]}$, where $(s_i, a_i) \sim d^{\pi^k}$, $s'_i = f(s_i, a_i) + \epsilon$, and $a'_i \sim \pi^k(s'_i)$.
      \State Initialize $\bomega^k_0 \gets \bm{0}$. \quad\textcolor{gray}{\textit{// We approximate $\bomega^{\circ,k}$ using value iteration for the infinite-horizon setting.}}
      \For{$t = 0, 1,\dots, T$}
        \State Perform LSVI update: $\bomega^{k}_{t+1} \gets \arg \min_{\bomega} \brac*{ \sum_{i \in [N]} \prn[\big]{ \angl{\bphi^k(s_i, a_i), \bomega} - r(s_i, a_i) - \gamma \angl{ \bphi^k(s'_i, a'_i), \bomega^{k}_{t}} }^2 }$.
      \EndFor
      \State Set $\bomega^{\circ, k} \gets \bomega^{k}_{T+1}$, and perform robust policy evaluation: \label{line:pertubation}
      \begin{equation}
        \bm{\eta}^{k+1}, \bxi^{k+1} \gets \min_{\bm{\eta}, \bxi} \frac{1}{N} \sum_{i \in [N]} \angl[\big]{\bphi^{\circ}(s_i,a_i) + \bm{\eta}, \bomega^{\circ,k} + \bxi} + \lambda_{\xi} \prn[\big]{\norm{\bxi}^2 - R_{\xi}^2} + \lambda_{\eta} \prn[\big]{\norm{\bm{\eta}}^2 - R_{\eta}^2}.
      \end{equation}
      \State Perform feature update: $\bphi^{k+1}(s,a) \gets \bphi^{\circ}(s,a) + \bm{\eta}^{k+1}$, $\bomega^{k+1} \gets \bomega^{\circ,k} + \bxi^{k+1}$.
      \State Use Natural Policy Gradient to update the policy: $\pi^{k+1}_h(a | s) \propto \pi^k_h(a | s) \cdot \exp\prn[\big]{\alpha \angl{\bphi^{k+1}(s,a), \bomega^{k+1}}}$.
    \EndFor
  \end{algorithmic}
\end{algorithm}

The key idea behind the SDEC algorithm is that, assuming i.i.d. Gaussian noise, the transition probability is simply a Gaussian distribution $\P(\cdot | s, a) \mathrel{\overset{\mathrm{d}}{=}} \mathcal{N}\prn[\big]{ f(s, a), \sigma^2 I_n }$, which has a Gaussian kernel representation in terms of $f(s, a)$ and $s'$. Based on Bochner theorem \citep{devinatz1953integral}, we can obtain a finite approximation of the representation by truncating the kernel representations; specifically, we shall take
\begin{subequations}\label{eq:D-spectral-feature}
\begin{align}
  \bphi^{\circ}(s,a) &= \brak[\big]{ 
    \cos(w_1^\top f(s,a)+b_1)
    ~ \cdots ~
    \cos(w_m^\top f(s,a) + b_m)
  }^{\top},\\
  \bmu^{\circ}(s') &= \brak[\big]{
    \cos(w_1^\top s'+b_1)
    ~ \cdots ~
    \cos(w_m^\top s' + b_m)
  }^{\top},
\end{align}
\end{subequations}
where $w_i \sim \mathcal{N}(0,\sigma^{-2}I_d)$  and $b_i \sim \mathsf{Unif}([0, 2\pi])$ are drawn i.i.d. ($i \in [m]$).

\paragraph{Evaluation.} To evaluate whether the output policy displays robustness in its behavior, the policy obtained via \Cref{alg:empirical} (labelled ``robust'') is evaluated with pendulum mass perturbed around the nominal mass $m^{\circ} = 1.0$, namely $m \in \brac{0.2,0.6,1.0,1.4,1.8}$, and compared against the original non-robust SDEC (labelled ``non-robust''). For each perturbed setting, both policies are tested for 50 episodes with randomly sampled initial states, and the discounted cumulative rewards are recorded. The results are plotted in \Cref{fig:A-4-exp_inverted_pen}. It is evident that, when the actual mass of the pendulum is perturbed to be heavier (so that the task becomes harder), the robust SDEC algorithm suffers from less severe performance degradation than non-robust SDEC, and they yield comparable performance when the mass is perturbed to be lighter (so that the task becomes easier). This phenomenon demonstrates that the proposed duple-perturbation robust policy evaluation scheme does help to promote robustness of the output policies, especially for those perturbations that pose significant challenges to the task, and thus justifies the effectiveness of the proposed robustness concept in real-world settings.

\begin{figure}[h]
  \centering
  \includegraphics[width=0.32\linewidth]{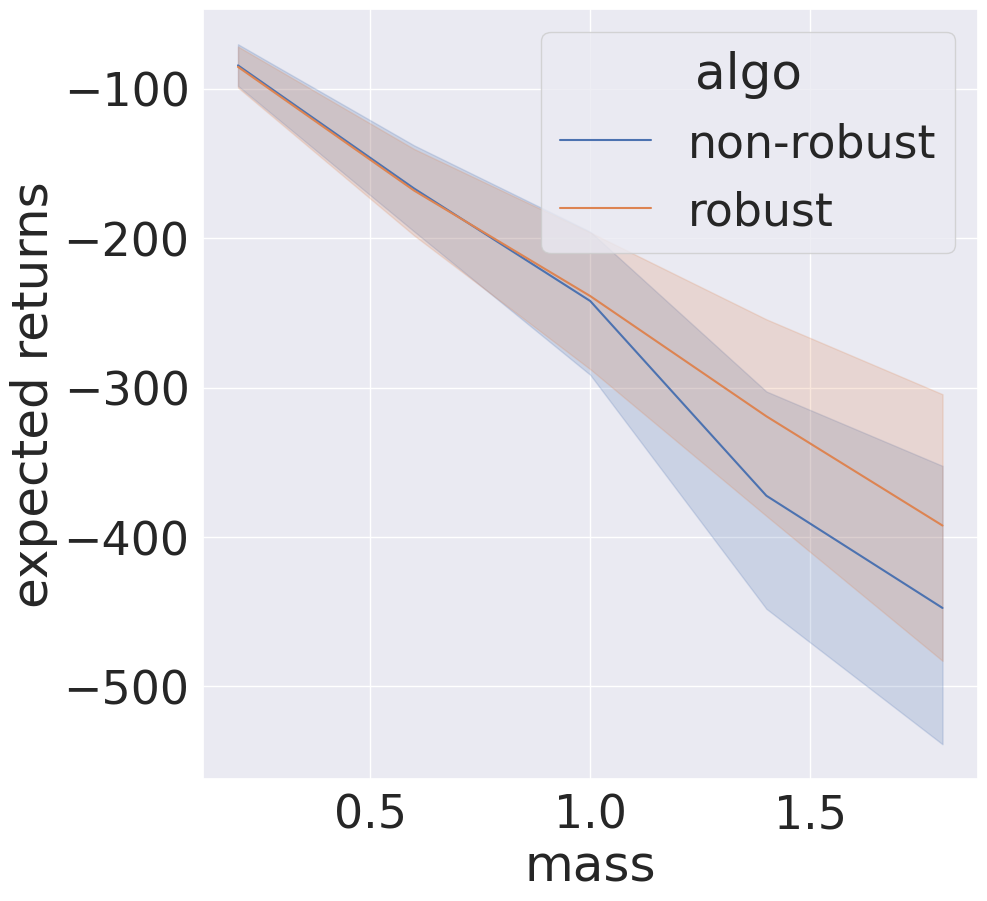}
  \caption{Evaluation results with the perturbed mass of the pendulum.(lines and shaded regions indicate the mean cumulative reward and a 95\% confidence region over 50 evaluation episodes)}
  \label{fig:A-4-exp_inverted_pen}
\end{figure}

\end{document}